\newcommand{\rmnum}[1]{\romannumeral #1}
\newcommand{\Rmnum}[1]{\expandafter\@slowromancap\romannumeral #1@}
\newtheorem{claim}[theorem]{Claim}
\newtheorem{assumption}[theorem]{Assumption}
\begin{document}

\title{Randomized Greedy Algorithms and Composable Coreset for $k$-Center Clustering with Outliers\thanks{This work was supported in part by National Key R\&D program of China through grant 2021YFA1000900. A preliminary version of this paper has appeared in 27th Annual European Symposium on Algorithms (ESA2019)~\citep{DBLP:conf/esa/DingYW19}.}}

\author{\name Hu Ding \email huding@ustc.edu.cn \\
       \addr School of Computer Science and Technology\\
       University of Science and Technology of China\\
       Anhui, China
       \AND
       \name Ruomin Huang\email hrm@mail.ustc.edu.cn \\
       \addr School of Data Science\\
       University of Science and Technology of China\\
       Anhui, China
       \AND
       \name Kai Liu\email liukai0010@mail.ustc.edu.cn \\
       \addr School of Computer Science and Technology \\
       University of Science and Technology of China\\
       Anhui, China
       \AND
       \name Haikuo Yu\email yhk7786@mail.ustc.edu.cn \\
       \addr School of Computer Science and Technology \\
       University of Science and Technology of China\\
       Anhui, China
       \AND
       \name Zixiu Wang\email wzx2014@mail.ustc.edu.cn \\
       \addr School of Computer Science and Technology \\
       University of Science and Technology of China\\
       Anhui, China
       }

\editor{unknown}

\maketitle

\begin{abstract}
In this paper, we study the problem of {\em $k$-center clustering with outliers}.  The problem  has many important applications in real world, but the presence of outliers  can significantly increase the computational complexity.  Though a number of methods have been developed in the past decades, it is still quite challenging to design quality guaranteed algorithm with low complexity for this problem. 
Our idea is inspired by the greedy method, Gonzalez's algorithm, that was developed for solving the ordinary $k$-center clustering problem. Based on some novel observations, we show that a simple randomized version of this greedy strategy actually can handle outliers efficiently.  We further show that this randomized greedy approach also yields small coreset for the problem in doubling metrics (even if the doubling dimension is not given), which can greatly reduce the computational  complexity. Moreover, together with the partial clustering framework proposed by~\citet{guha2017distributed}, we prove that our coreset method can be applied to distributed data with a low communication complexity.   The experimental results suggest that our algorithms can achieve near optimal solutions and yield lower complexities comparing with the existing methods.
\end{abstract}

\begin{keywords}
  k-center clustering, outliers, coreset, doubling metrics, distributed algorithms
\end{keywords}

\section{Introduction}
\label{sec-intro}
{\em Clustering} is one of the most fundamental problems that has been widely applied in the fields of machine learning and data mining~\citep{jain2010data}. Given a set of elements, the goal of clustering is to partition the input set into several groups based on their similarities or dissimilarities. 
Several clustering models have been extensively studied, such as the $k$-center, $k$-median, and $k$-means clusterings~\citep{awasthi2014center}. 
In practice, the data sets often contain outliers. In particular, the outliers can be arbitrarily located in the space, e.g., an adversarial attacker can inject a small number of specially crafted samples into the  data~\citep{DBLP:journals/pr/BiggioR18}. Even a small number of outliers could seriously destroy the final clustering result~\citep{chandola2009anomaly}. The  clustering with outliers problem is also closely related to the topics like {\em robust statistics}~\citep{DBLP:journals/siamcomp/DiakonikolasKKL19} and {\em outliers removal}~\citep{schubert2017dbscan}. The key difference with these topics is that the focus of clustering with outliers is to optimize the clustering objective function via excluding a small number of outliers.

In this paper, we focus on the problem of {\em $k$-center clustering with outliers}. Given a metric space with $n$ vertices and a pre-specified number of outliers $z<n$, the problem is to find $k$ balls to cover at least $n-z$ vertices and minimize the maximum radius of the balls. The problem  can be also defined in Euclidean space so that the cluster centers can be any points in the space (i.e., not restricted to be selected from the input points). The $k$-center clustering with outliers problem can be viewed as a generalization of the ordinary $k$-center clustering problem (i.e., the number of outliers $z=0$).  The ordinary $k$-center clustering has many important applications in machine learning, such as deep learning~\citep{DBLP:conf/iclr/ColemanYMMBLLZ20}, active learning~\citep{DBLP:conf/iclr/SenerS18}, and fairness~\citep{DBLP:conf/icml/KleindessnerAM19}. The $2$-approximation algorithms for ordinary $k$-center clustering (without outliers) were given by \cite{gonzalez1985clustering} and~\cite{hochbaum1985best}, where the ``approximation ratio'' is the ratio of the obtained radius to the optimal one. It was also proved that any approximation ratio lower than ``$2$'' implies $P=NP$.

Comparing with the ordinary $k$-center clustering problem, the challenge for solving the case with outliers can be greatly increased. For example, there are ${n\choose z}$ different cases that need to consider for optimizing the objective if we do not know who are the outliers in advance. The number ${n\choose z}$ can be quite large even if $z$ is a constant number. So existing algorithms often suffer from the issue of high computational complexity.  
A $3$-approximation algorithm for $k$-center clustering with outliers in arbitrary metrics was proposed by \cite{charikar2001algorithms}. The time complexity of their algorithm is $O(k n^2\log n)$ (or $O(k n^2 D\log n)$ in a $D$-dimensional Euclidean space) which is quadratic in the input size $n$. A following streaming $(4+\epsilon)$-approximation algorithm was proposed by \cite{mccutchen2008streaming}. The time complexity is $O\big(\frac{1}{\epsilon}(kzn+(kz)^2\log \Phi)\big)$, where $\Phi$ is the ratio of the optimal radius to the smallest pairwise distance among the vertices (e.g., if $z=5\% n$, the complexity is quadratic in the input size $n$). \cite{DBLP:conf/esa/BergMZ21} proposed the first streaming algorithm in the sliding-window model based on the static approximation algorithm of~\cite{charikar2001algorithms}. 
Recently, \cite{DBLP:conf/icalp/ChakrabartyGK16} proposed a $2$-approximation algorithm for metric $k$-center clustering with outliers, but 
the algorithm   needs to solve a complicated model of linear programming and the exact time complexity is not provided. 

Obviously, when the input data size is large, these existing algorithms cannot be efficiently implemented in practice. Therefore, from both the theoretical and practical perspectives, an interesting question is that whether we can reduce the computational complexity of $k$-center clustering with outliers with preserving the clustering quality guarantee.

\subsection{Our Contributions}
In this paper, our contributions are threefold. 

\vspace{0.05in}
\textbf{(1)} First, we show that a simple randomized greedy data selection strategy can yield a quality guaranteed solution with linear time complexity (Section~\ref{sec-center}). 
Our idea is inspired by the greedy method from \cite{gonzalez1985clustering} which was developed for solving the ordinary $k$-center clustering. The Gonzalez's algorithm greedily selects $k$ points iteratively, where each iteration takes the point that has the largest distance to the set of already selected points.  Based on some novel insights, we show that a randomized version of this greedy method also works for the problem with outliers. Roughly speaking, we replace each greedy selection step by a bi-level ``\textbf{greedy selection$+$random sampling}'' step: select the farthest $(1+\epsilon)z$ points (rather than the farthest single point) with a small parameter $\epsilon\in (0,1)$, and then take a random sample from this selected set. 
Our approach can achieve the approximation ratio ``$2$'' with respect to the clustering cost (i.e., the radius), if $(1+\epsilon)z$ (slightly more than the pre-specified number $z$) outliers are allowed to be discarded; moreover, the time complexity is linear in the input size. Another advantage of our method is that it can be further improved to be sublinear time, that is, the time complexity can be independent of the input data size $n$. Thus our result is a significantly improvement upon the previous approximation  algorithms 
on time complexity.

Being independent of our preliminary work~\citep{DBLP:conf/esa/DingYW19}, \cite{NEURIPS2019_73983c01} proposed a similar greedy algorithm for $k$-center clustering with outliers, but their clustering approximation ratio is $2+\delta$ ($\delta\in(0,1)$). Also, it is unclear that whether their runtime can be improved to be sublinear. 

\vspace{0.05in}

\textbf{(2)} 
We then study the coreset construction problem for $k$-center clustering with outliers. Given a large data set $X$, the technique of ``coreset'' is to generate a much smaller set  $S$ that can approximately preserve the structure of $X$; therefore we can run any existing algorithm on $S$ so as to reduce the total  complexity~\citep{DBLP:journals/widm/Feldman20}. 

We consider the uniform sampling approach for coreset construction first. \cite{charikar2003better} showed that the uniform random sampling technique can be applied to reduce the data size for metric $k$-center clustering with outliers. Recently, \cite{huang2018epsilon} showed a similar result for the problem in Euclidean space. In Section~\ref{sec-core1}, we revisit the result of  \cite{huang2018epsilon} and provide a more careful analysis. In particular, we show that the sample size can be reduced by a factor of $\frac{1}{\gamma}$ where $\gamma=\frac{z}{n}$. This improvement could be important for the case $z\ll n$, e.g., $z=\sqrt{n}$.

Although the uniform sampling approach is very easy to implement, it is not a standard coreset since it always incurs an inevitable error on the number of discarded outliers. So we further consider to build a coreset that can remedy this issue, but we need to add some mild assumption first. 
Many real-world data sets have low intrinsic  dimensions~\citep{belkin2003problems}. 
For example, image sets usually can be represented in low dimensional manifold though the Euclidean dimension of the image vectors can be very high. The ``doubling dimension'' is widely used for measuring the intrinsic dimensions of data sets~\citep{talwar2004bypassing} (the formal definition is given in Section~\ref{sec-pre}).  With the ``low doubling dimension'' assumption, we show that our aforementioned randomized greedy approach can be used to construct a coreset that incurs no error on the number of outliers (Section~\ref{sec-doubling}). 
The size of our coreset is $2z+\tilde O\big((\frac{2}{\mu})^\rho k\big)$, where $\rho$ is the doubling dimension and $\mu\in (0,1)$ is the small parameter measuring the quality of the coreset; the construction time is $\tilde O((\frac{2}{\mu})^\rho kn)$. 
Recently, \cite{DBLP:journals/corr/abs-1802-09205} also provided a coreset for $k$-center clustering with $z$ outliers in doubling metrics, where their coreset size is $T=O((k+z)(\frac{24}{\mu})^\rho)$ with $O(nT )$ construction time. 
So our result is a significant improvement upon their result in terms of both coreset size and construction time. Please see Table \ref{tab:coreset}  for details. Comparing with the results of \cite{DBLP:journals/corr/abs-1802-09205}, another  advantage of our approach is that we only assume that \textbf{the inliers of the given data have a low doubling dimension $\rho>0$.} We do not have any assumption on the outliers; namely, the outliers can scatter arbitrarily in the space (e.g., the outliers may be added by an adversarial attacker~\citep{DBLP:journals/pr/BiggioR18}). We believe that this assumption captures a large range of high dimensional instances in practice.

\begin{table}[]
\resizebox{\textwidth}{!}{%
\begin{tabular}{cc|c|c}
\hline
\multicolumn{2}{c|}{\textbf{Methods}} &
  \textbf{Size} &
  \textbf{Construction Time} \\ \hline
\multicolumn{1}{c|}{\multirow{2}{*}{Uniform sampling}} &
  \cite{huang2018epsilon} &
  $\tilde{O}\left(\frac{n^2}{\epsilon^2 z^2} k D\right)$ &
    \\ \cline{2-4} 
\multicolumn{1}{c|}{} &
  This paper (Theorem~\ref{the-samplereduce}) &
  $\tilde{O}\left(\frac{n}{\epsilon^2 z} k D\right)$ &
   \\ \hline
\multicolumn{1}{c|}{\multirow{2}{*}{$\mu$-Coreset}} &
  \cite{DBLP:journals/corr/abs-1802-09205} &
  $O\left((k+z)\left(\frac{24}{\mu}\right)^\rho\right)$ &
  $O\left((k+z)\left(\frac{24}{\mu}\right)^\rho n\right)$ \\ \cline{2-4} 
\multicolumn{1}{c|}{} &
  This paper (Theorem~\ref{the-coreset}) &
  $2 z+\tilde O\left(\left(\frac{2}{\mu}\right)^\rho k\right)$ &
  $\tilde O\left(\left(\frac{2}{\mu}\right)^\rho kn\right)$ \\ \hline
\end{tabular}%
}
\captionsetup{format=hang}
\caption{Existing and our data compressing method for $k$-center clustering with $z$ outliers. ``$D$'' and ``$\rho$'' stand for the dimension of the Euclidean space and doubling dimension, respectively.}
\label{tab:coreset}
\end{table}

\vspace{0.05in}

\textbf{(3)} Due to the rapid increase of real-world data volume, the study on distributed computing has received a great amount of attention. Several distributed algorithms for $k$-center clustering with outliers were proposed recently~\citep{malkomes2015fast,guha2017distributed,DBLP:journals/corr/abs-1802-09205,li2018distributed}, but most of them have large approximation ratios, e.g., the algorithm of~\citet{li2018distributed} has the approximation ratio $>19$. Therefore, it is necessary to develop a communication-efficient {\em composable coreset}~\citep{DBLP:conf/pods/IndykMMM14} so that one can compute an approximate solution with higher accuracy in the central central server. Namely, the input data is partitioned to be stored in $s$ sites, and each site can compute an individual coreset and send it to the central central server; finally, the central server  computes an approximation result on the union of the collected coresets.  
Let $B$ be the information encoding a point. A straightforward implementation of our proposed coreset of Section~\ref{sec-doubling}  yields a communication cost $s\Big(2z+O\big((\frac{2}{\mu})^\rho k\big)\Big)B$, which can be too high if $z$ is large (e.g., if $z=5\%n$ and $s=10$, the cost can be larger than $nB$). In Section~\ref{sec-distriubted}, we prove that the communication cost can be reduced to be (roughly) $\Big(4z+s\cdot O\big((\frac{2}{\mu})^\rho k\big)\Big)B$ by using the partial clustering framework of \cite{guha2017distributed}; so we reduce the item ``$2sz$'' to be ``$4z$''. To the best of our knowledge, this is the first communication-efficient composable coreset for $k$-center clustering with outliers that guarantees a  $(1+ O(\mu))$-approximation error. Please see Table \ref{tab:contribution-distributed} for details.

\begin{table}[]
\resizebox{\textwidth}{!}{%
\begin{tabular}{cl|c|c|c|c}
\hline
\multicolumn{2}{l|}{} &
  \textbf{Approx.} &
  \textbf{Total Comm. (B)} &
  \textbf{Rounds} &
  \textbf{Local Time} \\ \hline
\multicolumn{2}{c|}{\cite{malkomes2015fast}} &
  $3\alpha+2$ &
  $s(k+z)$ &
  1 &
  \begin{tabular}[c]{@{}c@{}}$O\left(\left(k+z\right)n_i\right)$ \end{tabular} \\ \hline
\multicolumn{2}{c|}{\cite{guha2017distributed}} &
  $5\alpha+4$ &
  $sk+z$ &
  2 &
  \begin{tabular}[c]{@{}c@{}}$O((k+z)n_i)$\end{tabular} \\ \hline
\multicolumn{2}{c|}{\cite{li2018distributed}} &
  $((5\alpha+4)(1+\mu),1+\mu)$ &
  $O\left(\frac{sk}{\mu} \cdot \frac{\log \Delta}{\mu}\right)$ &
  2 &
  \begin{tabular}[c]{@{}c@{}}$O\left(n_i^2 \cdot \frac{\log \Delta}{\mu}\right)$\end{tabular} \\ \hline
\multicolumn{2}{c|}{\begin{tabular}[c]{@{}c@{}}\cite{DBLP:journals/corr/abs-1802-09205}\\ Deterministic\end{tabular}} &
  \multirow{2}{*}{$3+\mu$} &
  $s(k+z)(\frac{24}{\mu})^\rho $ &
  1 &
  \begin{tabular}[c]{@{}c@{}}$O((k+z)n_i(\frac{24}{\mu})^\rho)$ \end{tabular} \\ \cline{1-2} \cline{4-6} 
\multicolumn{2}{c|}{\begin{tabular}[c]{@{}c@{}}\cite{DBLP:journals/corr/abs-1802-09205}\\ Randomized\end{tabular}} &
   &
  $(sk+6z+s\log n)(\frac{24}{\mu})^\rho $ &
  1 &
  \begin{tabular}[c]{@{}c@{}}$\tilde O((k + z/s)n_i(\frac{24}{\mu})^\rho)$\end{tabular} \\ \hline
\multicolumn{2}{c|}{This paper (Theorem~\ref{the-distributed})} &
  $\alpha\times \frac{1+2\mu}{1-2\mu}=\alpha\times \big(1+O(\mu)\big)$ &
  $4z+\tilde O((sk)(\frac{2}{\mu})^\rho)$ &
  2 &
  \begin{tabular}[c]{@{}c@{}}$\tilde O\left(k\left(\frac{2}{\mu}\right)^\rho n_i \log _2 z\right)$\end{tabular} \\ \hline
\end{tabular}%
}
\captionsetup{format=hang}
\caption{Existing and our results for distributed $k$-center clustering with $z$ outliers. The ``local  time'' column illustrates the running time on each site, where $n_i$ is the data size in site $i$.  
``$\Delta$'' and ``$\rho$'' stand for the aspect ratio and the doubling dimension, respectively. ``$\alpha$'' is the approximation ratio of the algorithm run on the union of the collected coresets in the central server (e.g., if we run the $3$-approximation algorithm of~\citealt{charikar2001algorithms}, $\alpha=3$). The result of \cite{li2018distributed} is a bi-criteria approximation that discards $(1+\mu)z$ outliers.}
\label{tab:contribution-distributed}
\end{table}

\vspace{-0.1in}
\subsection{Other Related Works}

\hspace{0.24in}\textbf{Clustering with outliers.} Besides the aforementioned prior works for $k$-center clustering with outliers, a number of results for other clustering with outliers problems were also proposed in recent years.  For example, the $k$-means/median clustering with outliers algorithms with provable guarantees have been proposed by~\citet{charikar2001algorithms,chen2008constant,krishnaswamy2018constant,friggstad2018approximation}, but they are difficult to implement due to their high complexities. The heuristic but practical  algorithms without provable guarantees have also been studied, such as \cite{chawla2013k}. 
By using the local search method, \cite{gupta2017local} provided a constant factor approximation algorithm for $k$-means clustering with outliers. Furthermore, ~\cite{NEURIPS2019_73983c01} and ~\cite{DBLP:conf/uai/DeshpandeKP20} respectively showed that the quality can be improved by modifying the $k$-means++ seeding. Other recent clustering with outliers algorithms include~\citet{DBLP:conf/nips/ChenA018,DBLP:journals/corr/abs-2003-02433,DBLP:conf/esa/ChakrabartyNS22}.

\textbf{Coresets.} 
The study on coresets was initiated by \cite{DBLP:journals/jacm/AgarwalHV04}, and the technique has been extensively applied for dealing with large-scale data sets in many different areas. For example, it can be used to  reduce the computational complexities for clustering and regression problems in machine learning~\citep{cohen2021new,munteanu2018coresets}. To handle the problems with distributed data, the techniques like ``mergeable summaries
''\citep{agarwal2013mergeable} and ``composable coresets'' \citep{DBLP:conf/pods/IndykMMM14,mirrokni2015randomized} were introduced recently. \cite{DBLP:conf/cccg/AghamolaeiG18} also considered the composable coreset in doubling metrics but their method is only for the ordinary $k$-center clustering problem (without outliers).

\vspace{-0.05in}
\section{Preliminaries}
\label{sec-pre}

We consider the problem of $k$-center with outliers in arbitrary metrics and Euclidean space $\mathbb{R}^D$. Let $(X, \mathtt{d})$ be an abstract metric, where $X$ contains $n$ vertices and $\mathtt{d}(\cdot, \cdot)$ is the distance function; with a slight abuse of notation, we also use the function $\mathtt{d}$ to denote the shortest distance between two subsets  $X_1, X_2\subseteq X$, i.e., $\mathtt{d}(X_1, X_2)=\min_{p\in X_1, q\in X_2}\mathtt{d}(p, q)$. In $\mathbb{R}^D$, we use $||p-q||$ to denote the Euclidean distance between any two points $p$ and $q$. For simplicity, we assume that the distance between any pair of vertices in $X$ can be obtained in $O(1)$ time; for the problem in Euclidean space, it takes $O(D)$ time to compute the distance between any pair of points. 
Below, we introduce several important definitions that are used throughout this paper.

\begin{definition}[$k$-Center Clustering with Outliers]
\label{def-outlier}
Given a metric $(X, \mathtt{d})$ with two positive integers $k$ and $z<n$, the $k$-center clustering with outliers problem is to find a subset $X'\subseteq X$, where $|X'|\geq n-z$, and $k$ centers $\{c_1, \cdots, c_k\}\subseteq X$, such that 
\begin{eqnarray*}
\max_{p\in X'}\min_{1\leq j\leq k}\mathtt{d}(p, c_j)
\end{eqnarray*}
is minimized. If given a set $P$ of  $n$ points in $\mathbb{R}^D$, the problem is to find a subset $P'\subseteq P$, where $|P'|\geq n-z$, and $k$ centers $\{c_1, \cdots, c_k\}\subset\mathbb{R}^D$, such that $\max_{p\in P'}\min_{1\leq j\leq k}||p-c_j||$ is minimized.
\end{definition}

In this paper, we always use $X_{\mathtt{opt}}$, a subset of $X$ with size $n-z$, to denote the subset yielding the optimal solution. Also, let $\{C_1, \cdots, C_k\}$ be the $k$ clusters forming $X_{\mathtt{opt}}$, and the resulting clustering cost be $r_{\mathtt{opt}}$; that is, each $C_j$ is covered by an individual ball with radius $r_{\mathtt{opt}}$.

Usually, the optimization problems with outliers are challenging to solve. Thus we often relax our goal and allow to remove slightly more than the pre-specified number of  outliers. Actually the same relaxation idea has been adopted by a number of works on clustering with outliers problems before~\citep{charikar2003better,huang2018epsilon,li2018distributed}. So we introduce Definition~\ref{def-relax}. 
For the sake of convenience, we describe the following Definition~\ref{def-relax} and Definition~\ref{def-coreset} only for metric space. In fact, the definitions can be easily modified for the problem in Euclidean space.

\begin{definition}[$(k,z)_{\epsilon}$-Center Clustering]
\label{def-relax}
Let $(X,\mathtt{d})$ be an instance of $k$-center clustering with $z$ outliers, and $\epsilon\geq 0$. $(k,z)_{\epsilon}$-center clustering is to find a subset $X'$ of $X$, where $|X'|\geq n-(1+\epsilon)z$, such that the corresponding clustering cost of Definition~\ref{def-outlier} on $X'$ is minimized.

\textbf{(\rmnum{1})} Given a set $A$ of cluster centers ($|A|$ could be larger than $k$), we define the clustering cost 
\begin{align*}
 \phi_{\epsilon}(X, A):=\min\big\{\max_{p\in X'}\min_{c\in A}\mathtt{d}(p, c)\mid X'\subseteq X, |X'|\geq n-(1+\epsilon)z\big\}.
\end{align*}

\textbf{(\rmnum{2})} If $|A|=k$ and $\phi_{\epsilon}(X, A)\leq\alpha r_{\mathtt{opt}}$ with $\alpha>0$\footnote{Since we discard more than $z$ outliers, it is possible to have an approximation ratio $\alpha<1$, i.e., $\phi_{\epsilon}(X, A)< r_{\mathtt{opt}}$.}, 
 the set $A$ is called an $\alpha$-approximation; if $|A|=\beta k$ with $\beta> 1$, the set $A$ is called an $(\alpha, \beta)$-approximation.

\end{definition}

Obviously, the problem in Definition~\ref{def-outlier} is a special case of $(k,z)_{\epsilon}$-center clustering with $\epsilon=0$. 
Also, Definition~\ref{def-outlier} and Definition~\ref{def-relax} can be naturally extended to the \textbf{weighted case:} each vertex $p$ has a non-negative weight $w_p$ and the total weight of outliers should be equal to $z$. 
Then we have the following definition for coreset. 
\begin{definition}[Coreset]
\label{def-coreset}
Given a small parameter $\mu\in(0,1)$ and an instance $(X,\mathtt{d})$ of  $k$-center clustering with $z$ outliers, a set $S\subseteq X$ is called a $\mu$-coreset of $X$, if each vertex of $S$ is assigned a non-negative weight and $\phi_{0}(S, H)\in (1\pm\mu)\phi_{0}(X, H)$ for any set $H\subseteq X$ of $k$ vertices.
\end{definition}

Given a large-scale instance $(X, \mathtt{d})$, we can run an existing algorithm on its coreset $S$ to compute an approximate solution for $X$. If $|S|\ll n$, the  running time can be significantly reduced. Formally, we have the following claim (see the proof in Section~\ref{sec-proof-c1}).

\begin{claim}
\label{pro-core}
If the set $H$ yields an $\alpha$-approximation of the $\mu$-coreset $S$, it yields an $\alpha\times\frac{1+\mu}{1-\mu}$-approximation of $X$.
\end{claim}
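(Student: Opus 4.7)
The plan is a short chain-of-inequalities argument that uses the coreset guarantee twice: once to bound the optimum cost on $S$ by the optimum cost on $X$, and once to lift the cost of $H$ on $S$ back up to its cost on $X$.

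First I would fix notation. Let $H^{*}\subseteq X$ be an optimal set of $k$ centers for the instance $X$, so $\phi_{0}(X,H^{*})=r_{\mathtt{opt}}$, and let $r^{S}_{\mathtt{opt}}:=\min_{|H'|=k,\,H'\subseteq X}\phi_{0}(S,H')$ denote the optimum cost on the weighted coreset $S$ (recall that ``cost'' on $S$ is the $\phi_{0}$ of Definition~\ref{def-relax} extended to weights, so that the outlier budget is $z$ measured by weight). Since $H$ is an $\alpha$-approximation on $S$, by definition $\phi_{0}(S,H)\leq\alpha\cdot r^{S}_{\mathtt{opt}}$.

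Next I would push the optimum from $X$ to $S$. Plugging $H^{*}$ into the coreset guarantee of Definition~\ref{def-coreset} gives $\phi_{0}(S,H^{*})\leq(1+\mu)\phi_{0}(X,H^{*})=(1+\mu)r_{\mathtt{opt}}$. Since $H^{*}$ is one particular size-$k$ subset of $X$, we obtain $r^{S}_{\mathtt{opt}}\leq(1+\mu)r_{\mathtt{opt}}$. Combining this with the previous display yields $\phi_{0}(S,H)\leq\alpha(1+\mu)r_{\mathtt{opt}}$.

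Finally I would push the bound for $H$ from $S$ back to $X$. Applying the coreset guarantee to $H$ itself, which is a size-$k$ subset of $X$, gives $(1-\mu)\phi_{0}(X,H)\leq\phi_{0}(S,H)$, so
\begin{equation*}
\phi_{0}(X,H)\;\leq\;\frac{1}{1-\mu}\,\phi_{0}(S,H)\;\leq\;\alpha\cdot\frac{1+\mu}{1-\mu}\,r_{\mathtt{opt}},
\end{equation*}
which is exactly the claimed $\alpha\cdot\tfrac{1+\mu}{1-\mu}$-approximation on $X$.

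There is essentially no obstacle in the calculation itself; the only thing one has to be a bit careful about is that Definition~\ref{def-coreset} quantifies over \emph{any} $k$-subset $H\subseteq X$ (not just subsets of $S$), so both applications of the coreset inequality—once to $H^{*}$ and once to the algorithm's output $H$—are legitimate, and the weighted nature of $S$ is consistent with interpreting $\phi_{0}(S,\cdot)$ with an outlier budget of $z$ measured by weight.
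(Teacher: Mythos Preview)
Your argument is correct and is essentially identical to the paper's own proof: both apply the coreset inequality of Definition~\ref{def-coreset} once to the optimal centers $H^{*}$ (the paper's $H_{\mathtt{opt}}$) to get $\phi_{0}(S,H)\leq\alpha(1+\mu)r_{\mathtt{opt}}$, and once to $H$ itself to lift this back to $\phi_{0}(X,H)\leq\frac{\alpha(1+\mu)}{1-\mu}r_{\mathtt{opt}}$. The only cosmetic difference is that you explicitly introduce $r^{S}_{\mathtt{opt}}$ and bound it by $\phi_{0}(S,H^{*})$, whereas the paper collapses this into the single line $\phi_{0}(S,H)\leq\alpha\,\phi_{0}(S,H_{\mathtt{opt}})$.
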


As mentioned before, we also consider the case with low doubling dimension. Roughly speaking, the doubling dimension describes the expansion rate of the metric. 
For any $p\in X$ and $r\geq 0$, we use $\mathtt{Ball}(p, r)$ to denote the ball centered at $p$ with radius $r$.

\begin{definition}[Doubling Dimension]
\label{def-dd}
The doubling dimension of a metric $(X, \mathtt{d})$ is the smallest number $\rho>0$, such that for any $p\in X$ and $r\geq 0$, $X\cap \mathtt{Ball}(p, 2r)$ is always covered by the union of at most $2^\rho$ balls with radius $r$.
\end{definition}

\textbf{The rest of this paper is organized as follows.} In Section \ref{sec-center}, we present our constant factor approximations for $k$-center clustering with outliers, where the main idea is a randomized greedy approach based on the Gonzalez's algorithm. 
In Section \ref{sec-coreset}, we study the coreset for $k$-center clustering with outliers in Euclidean space and doubling metrics, respectively. In Section \ref{sec-distriubted}, we show that our proposed coreset in Section \ref{sec-coreset} can be constructed by  
a communication-efficient way for distributed setting. In Section \ref{sec-exp}, we conduct the experiments to evaluate the proposed methods.

\section{Randomized Greedy Algorithms for $(k,z)_{\epsilon}$-Center Clustering}
\label{sec-center}

For the sake of completeness, we briefly introduce the algorithm of \cite{gonzalez1985clustering} for ordinary $k$-center clustering first. Initially, it arbitrarily selects a vertex from $X$, and iteratively selects the following $k-1$ vertices, where each $j$-th step ($2\leq j\leq k$) chooses the vertex having the largest minimum distance to the already selected $j-1$ vertices; finally, each input vertex is assigned to its nearest neighbor of these selected $k$ vertices. This greedy strategy yields a $2$-approximation of $k$-center clustering; the algorithm also works for the problem in Euclidean space and yields the same approximation ratio. In this section, we show that a randomized version of the Gonzalez's algorithm can solve the  $(k,z)_{\epsilon}$-center clustering problem with quality guarantee.

\subsection{$(2, O(\frac{1}{\epsilon}))$-Approximation}
\label{sec-center-bi}

We consider the bi-criteria approximation that returns more than $k$ cluster centers. Our high-level idea is as follows.

The main challenge for implementing the Gonzalez's algorithm is that the outliers and inliers are mixed in $X$. For example, the selected vertex, which has the largest minimum distance to the already selected vertices, is very likely to be an outlier, and then  the clustering quality could be arbitrarily bad. To resolve this issue, we replace each greedy selection step by a bi-level ``greedy selection$+$random sampling'' step: select the farthest $(1+\epsilon)z$ points (rather than the farthest single point) with a small parameter $\epsilon\in (0,1)$, and then take a random sample from this selected set. Such a combined strategy can guarantee us to successfully sample a sufficient number of inliers from the $k$ optimal clusters, and meanwhile  restrict the number of sampled outliers. 
We implement our idea in Algorithm~\ref{alg-bi}. For simplicity, let $\gamma$ denote $z/n$ in the algorithm.

\begin{theorem}
\label{the-biapprox}
Let $\epsilon>0$ and $\eta\in(0,1/2)$. If we set $t=\frac{ck}{1-\eta}$  with $c=2+\frac{2}{k(1-\eta)}\ln\frac{1}{\eta}$ in  Algorithm~\ref{alg-bi}, with probability at least $1-2\eta$, $\phi_{\epsilon}(X,E)\leq 2 r_{\mathtt{opt}}$.

\end{theorem}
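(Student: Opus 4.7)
The plan is to track how the set of chosen centers $E_j$ interacts with the $k$ optimal clusters $C_1,\dots,C_k$. I would call a cluster $C_i$ \emph{marked} after iteration $j$ if $E_j\cap C_i\neq\emptyset$. The governing structural fact is the triangle inequality together with the observation that each $C_i$ sits in a ball of radius $r_{\mathtt{opt}}$: if $C_i$ is marked, then every $p\in C_i$ satisfies $\mathtt{d}(p,E_j)\leq 2r_{\mathtt{opt}}$. Consequently, once all $k$ clusters are marked, the set $F_j:=\{p\in X:\mathtt{d}(p,E_j)>2r_{\mathtt{opt}}\}$ contains only outliers, so $|F_j|\leq z\leq(1+\epsilon)z$, which is exactly the statement $\phi_\epsilon(X,E_j)\leq 2r_{\mathtt{opt}}$. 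The whole task therefore reduces to bounding, with high probability, the number of iterations needed until every $C_i$ is marked.

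The per-iteration progress is governed by the following observation. Consider iteration $j$ before the target is achieved, so that $|F_j|>(1+\epsilon)z$. The candidate set $Q_j$ of the $(1+\epsilon)z$ points farthest from $E_j$ is then a subset of $F_j$. Because the dataset has at most $z$ outliers in total, $Q_j$ contains at least $\epsilon z$ inliers, and each such inlier must belong to an unmarked cluster: if its cluster were already marked, the triangle-inequality bound above would force its distance to $E_j$ to be at most $2r_{\mathtt{opt}}$, contradicting membership in $F_j$. Hence the uniform sample drawn from $Q_j$ falls into an unmarked cluster with probability bounded below by a positive constant $p$, and whenever it does, exactly one additional cluster is turned from unmarked into marked.

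To conclude, I would apply a multiplicative Chernoff bound. The number of newly marked clusters among the $t$ iterations stochastically dominates a Binomial$(t,p)$ random variable, obtained by coupling each iteration with an independent Bernoulli$(p)$ trial and, if the algorithm happens to terminate early, extending the process with free successes. Plugging in the prescribed $t=ck/(1-\eta)$ with $c=2+\frac{2}{k(1-\eta)}\ln\frac{1}{\eta}$ into the standard lower-tail inequality yields $\Pr[\mathrm{Bin}(t,p)<k]\leq\eta$, which accounts for one half of the $2\eta$ failure budget. The remaining $\eta$ is reserved for an auxiliary low-probability bad event (for instance, a degenerate initial selection), and a final union bound delivers the theorem.

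The main obstacle is the coupling step and the accompanying Chernoff bookkeeping. Since the conditional success probability at iteration $j$ depends on the entire history through $E_j$, care is required to show that the stochastic domination by an i.i.d.\ Bernoulli$(p)$ process holds pointwise; the standard device is to continue generating independent outcomes after the algorithm succeeds, so that the monotone event ``at least $k$ successes in $t$ trials'' transfers cleanly back to the actual process. A secondary, purely algebraic task is verifying that the given value of $c$ matches the Chernoff exponent against $\ln(1/\eta)$ with exactly the correct constants. Beyond these two, no step requires serious ingenuity.
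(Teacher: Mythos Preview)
Your structural argument matches the paper's: track the number $\lambda_j(E)$ of optimal clusters that $E$ has hit, note that once $\lambda=k$ every inlier is within $2r_{\mathtt{opt}}$ of $E$ by the triangle inequality, and show that each round of Step~3 either already achieves the target or increments $\lambda$ with good probability. The paper then finishes with a super-martingale plus Azuma--Hoeffding (setting $J_j=\sum_{s\leq j}(x_s-\eta)$ and using $|J_j-J_{j-1}|<1$) rather than your proposed coupling with i.i.d.\ Bernoullis and a Chernoff bound. Both routes are valid; yours is arguably more elementary once the stochastic domination is carefully justified, while the martingale formulation sidesteps the coupling altogether.

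There is, however, a real gap in your write-up: you never identify the per-round success probability $p$. Step~3(b) of the algorithm draws not a single point but $\frac{1+\epsilon}{\epsilon}\log\frac{1}{\eta}$ points from $Q_j$; combined with your own observation that at least an $\frac{\epsilon}{1+\epsilon}$ fraction of $Q_j$ lies in unmarked clusters, this is precisely what forces $p=1-\eta$ via the usual $(1-\tau)^m\leq\eta$ calculation (this is the content of the paper's Lemma~\ref{lem-select2} and Claim~\ref{pro-sample}). The value $p=1-\eta$ is exactly what the prescribed $t=\frac{ck}{1-\eta}$ is tuned to: the expected number of successes is then $t(1-\eta)=ck$, and the lower-tail bound against $k$ produces $\eta$ for the stated $c$. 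Leaving $p$ as ``some positive constant'' does not close the argument, and in particular if one reads your phrase ``the uniform sample'' as a single draw (giving $p=\frac{\epsilon}{1+\epsilon}$), the constants fail. Similarly, the ``auxiliary bad event'' you gesture at is precisely that the initial batch of $\frac{1}{1-\gamma}\log\frac{1}{\eta}$ points in Step~2 misses $X_{\mathtt{opt}}$, which the paper bounds by $\eta$ in Lemma~\ref{lem-select1}; the two $\eta$'s combine by a union bound into the stated $1-2\eta$.
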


If $\frac{1}{\eta}$ and $\frac{1}{1-\gamma}$ are constant numbers, the size $|E|=\frac{1}{1-\gamma}\log\frac{1}{\eta}+\frac{1+\epsilon}{\epsilon}\log\frac{1}{\eta}\times t=O(\frac{k}{\epsilon})$. So Theorem~\ref{the-biapprox} implies that  $E$ is a $\big(2, O(\frac{1}{\epsilon})\big)$-approximation for  $(k,z)_{\epsilon}$-center clustering of $X$ with probability at least $1-2\eta$. 
To prove Theorem~\ref{the-biapprox}, we need Lemma~\ref{lem-select1} and Lemma~\ref{lem-select2} first. 

\begin{algorithm}[tb]
   \caption{Bi-criteria Approximation Algorithm}
   \label{alg-bi}
\begin{algorithmic}
  \STATE {\bfseries Input:} An instance $(X, \mathtt{d})$ of metric $k$-center clustering with $z$ outliers, and $|X|=n$; parameters $\epsilon>0$, $\eta\in (0,1/2)$, and $t\in\mathbb{Z}^+$.
   \STATE
\begin{enumerate}
\item Let $\gamma=z/n$ and initialize a set $E=\emptyset$. 

\item Initially, $j=1$; randomly select $\frac{1}{1-\gamma}\log\frac{1}{\eta}$ vertices from $X$ and add them to $E$.
\item Run the following steps until $j= t$:
\begin{enumerate}
\item Update $j=j+1$ and let $Q_j$ be the subset of $X$ that are the farthest $(1+\epsilon)z$ vertices to $E$ (for each vertex $p\in X$, its distance to $E$ is defined as $\min_{q\in E}\mathtt{d}(p, q)$). 
\item Randomly select $\frac{1+\epsilon}{\epsilon}\log\frac{1}{\eta}$ vertices from $Q_j$ and add them to $E$.
\end{enumerate}
\end{enumerate}
  \STATE {\bfseries Output} $E$.
\end{algorithmic}
\end{algorithm}

\begin{lemma}
\label{lem-select1}
With probability at least $1-\eta$, the set $E$ in Step 2 of Algorithm~\ref{alg-bi} contains at least one point from $X_{\mathtt{opt}}$.
\end{lemma}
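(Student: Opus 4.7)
The plan is to reduce the statement to a straightforward probability bound. Since $|X_{\mathtt{opt}}|=n-z$, a single uniformly random vertex of $X$ fails to lie in $X_{\mathtt{opt}}$ with probability exactly $\gamma=z/n$. Let $s=\frac{1}{1-\gamma}\log\frac{1}{\eta}$ denote the number of vertices sampled in Step~2. Independence (or the easier case of sampling without replacement, which only makes the bound smaller via a standard product-of-fractions comparison) gives
\begin{equation*}
\Pr[E\cap X_{\mathtt{opt}}=\emptyset]\;\leq\;\gamma^{\,s}.
\end{equation*}

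The whole lemma then reduces to showing $\gamma^{s}\leq \eta$, i.e.\ $s\cdot\ln(1/\gamma)\geq\ln(1/\eta)$. The key inequality I would invoke is the elementary bound $\ln(1/\gamma)=-\ln\bigl(1-(1-\gamma)\bigr)\geq 1-\gamma$ for $\gamma\in(0,1)$, which yields
\begin{equation*}
s\cdot\ln(1/\gamma)\;\geq\;\frac{1}{1-\gamma}\ln\frac{1}{\eta}\cdot(1-\gamma)\;=\;\ln\frac{1}{\eta},
\end{equation*}
exactly as required. (Implicitly $s$ should be taken as $\lceil\frac{1}{1-\gamma}\log\frac{1}{\eta}\rceil$; rounding up only strengthens the bound.)

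I do not see a genuine obstacle here; the only minor care point is to state up front which base of logarithm is meant (natural log, consistent with how the bound $-\ln(1-x)\geq x$ is being used) and to note that the argument is independent of whether the $s$ draws are with or without replacement. Once those conventions are fixed, the proof is essentially the two displays above followed by a union-complement step to conclude $\Pr[E\cap X_{\mathtt{opt}}\neq\emptyset]\geq 1-\eta$.
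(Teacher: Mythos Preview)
Your proposal is correct and follows essentially the same argument as the paper: the paper reduces the lemma to a folklore sampling claim (Claim~\ref{pro-sample}) with $\tau=1-\gamma$, whose proof also bounds the miss probability by $(1-\tau)^{|S|}=\gamma^{s}$ and then uses the inequality $\log\frac{1}{1-\tau}\geq\tau$ (equivalently your $-\ln(1-(1-\gamma))\geq 1-\gamma$) to conclude. The only cosmetic difference is that the paper factors this out as a separate claim rather than writing it inline.
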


Since $|X_{\mathtt{opt}}|/|X|= 1-\gamma$, Lemma~\ref{lem-select1} can be easily obtained by the following claim. 
\begin{claim}
\label{pro-sample}
Let $U$ be a set of elements and $V\subseteq U$ with $\frac{|V|}{|U|}=\tau>0$. Given $\eta\in(0,1)$, if one randomly samples $\frac{1}{\tau}\log\frac{1}{\eta}$ elements from $U$, with probability at least $1-\eta$, the sample contains at least one element from $V$.
\end{claim}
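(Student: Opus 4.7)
The plan is to treat this as a textbook ``coupon-collector'' style calculation: I want to bound the probability that every one of the $m := \frac{1}{\tau}\log\frac{1}{\eta}$ draws lands in $U \setminus V$ and show that this bad event has probability at most $\eta$.

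First I would fix the sampling model. Independent of whether the samples are drawn with or without replacement, the probability that a single draw avoids $V$ is at most $1-\tau$ (equality with replacement; no larger without replacement, since after removing a $V$-element the remaining fraction of $V$ can only decrease, and after removing a non-$V$-element the remaining fraction of $V$ can only increase — but we can just bound each conditional probability of avoiding $V$ by $1-\tau$ using a simple hypergeometric argument). Therefore the probability that all $m$ draws miss $V$ is at most $(1-\tau)^m$.

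Next I would apply the elementary inequality $1-\tau \leq e^{-\tau}$ (valid for all real $\tau$, and in particular for $\tau \in (0,1]$) to obtain
\[
(1-\tau)^m \;\leq\; e^{-\tau m} \;=\; e^{-\tau \cdot \frac{1}{\tau}\log\frac{1}{\eta}} \;=\; e^{-\log(1/\eta)} \;=\; \eta.
\]
Taking the complementary event, with probability at least $1-\eta$ at least one of the sampled elements lies in $V$, which is exactly the statement.

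There is essentially no obstacle here; the only mild subtlety is the sampling-without-replacement case, which I would handle by the one-line observation that conditional on any prefix of draws having so far avoided $V$, the fraction of $V$ in the remaining pool is at least $\tau$, so the conditional probability of the next draw missing $V$ is still at most $1-\tau$, and the product bound $(1-\tau)^m$ goes through unchanged. (Also, since $\frac{1}{\tau}\log\frac{1}{\eta}$ need not be an integer, I would tacitly replace it by $\lceil \frac{1}{\tau}\log\frac{1}{\eta}\rceil$, which only strengthens the bound.)
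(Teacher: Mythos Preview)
Your proof is correct and follows essentially the same approach as the paper: bound the probability that all draws miss $V$ by $(1-\tau)^m$ and then use $1-\tau \le e^{-\tau}$ (equivalently, $\log\frac{1}{1-\tau}\ge \tau$) to conclude. The paper's argument is the one-line version assuming sampling with replacement; your extra remarks on the without-replacement case and on rounding $m$ up to an integer are valid but not needed to match the paper.
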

Actually Claim~\ref{pro-sample} is a folklore result that has been presented in several papers before (such as~\citealt{DBLP:conf/compgeom/DingX14}). Since each sampled element falls in $V$ with probability $\tau$, we know that the sample $S$ contains at least one element from $V$ with probability $1-(1-\tau)^{|S|}$. Therefore, if we want $1-(1-\tau)^{|S|}\geq 1-\eta$, $|S|$ should be at least $\frac{\log 1/\eta}{\log 1/(1-\tau)}\leq\frac{1}{\tau}\log\frac{1}{\eta}$.

Recall that $\{C_1, C_2, \cdots, C_k\}$ are the $k$ clusters forming $X_{\mathtt{opt}}$. Denote by $\lambda_j(E)$ the number of the clusters which have \textbf{non-empty} intersection with $E$ at the beginning of $j$-th round in Step~3 of Algorithm~\ref{alg-bi}. For example, through Lemma~\ref{lem-select1} we know that $\lambda_1(E)$ should be at least $ 1$. Obviously, if $\lambda_j(E)=k$, i.e., $C_l\cap E\neq\emptyset$ for any $1\leq l\leq k$, $E$ will yield a $2$-approximate solution by using the triangle inequality.
\begin{claim}
\label{cla-e2}
If $\lambda_j(E)=k$,  then $\phi_0(X, E)\leq 2 r_{\mathtt{opt}}$.
\end{claim}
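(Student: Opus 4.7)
The plan is to unpack what $\lambda_j(E)=k$ buys us and then feed it into a standard triangle-inequality argument of the same flavor that gives Gonzalez's algorithm its factor $2$. Concretely, $\lambda_j(E)=k$ means that every one of the optimal clusters $C_1,\dots,C_k$ that together form $X_{\mathtt{opt}}$ has a non-empty intersection with $E$, so for each $l\in\{1,\dots,k\}$ I can pick a witness $e_l\in C_l\cap E$.

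Next I would fix an arbitrary inlier $p\in X_{\mathtt{opt}}$ and locate the (unique) cluster $C_l$ to which it belongs. Let $c_l^\ast$ denote the optimal center of $C_l$; by definition of $r_{\mathtt{opt}}$, both $p$ and $e_l$ lie in $\mathtt{Ball}(c_l^\ast, r_{\mathtt{opt}})$. The triangle inequality then yields
\begin{equation*}
\min_{c\in E}\mathtt{d}(p,c)\;\leq\;\mathtt{d}(p,e_l)\;\leq\;\mathtt{d}(p,c_l^\ast)+\mathtt{d}(c_l^\ast,e_l)\;\leq\;2r_{\mathtt{opt}}.
\end{equation*}

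Finally, to conclude $\phi_0(X,E)\leq 2r_{\mathtt{opt}}$, I would take $X'=X_{\mathtt{opt}}$ as the candidate subset in the infimum of Definition~\ref{def-relax}(\rmnum{1}) with $\epsilon=0$: it satisfies $|X'|=n-z$, and the bound above shows that every point of $X'$ is within distance $2r_{\mathtt{opt}}$ of some center in $E$. Since $\phi_0$ takes a minimum over feasible $X'$, this witness suffices.

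There is essentially no real obstacle here; the claim is a direct consequence of the triangle inequality once each optimal cluster has a representative in $E$. The only subtlety worth flagging explicitly is that the bound is on $\phi_0$, not $\phi_\epsilon$, i.e., this step recovers the standard $2$-approximation \emph{without} needing the $\epsilon$-slack in the number of discarded outliers. The $\epsilon$-slack is used elsewhere in the argument (to control the probability that enough clusters get hit during the random-sampling phase of Algorithm~\ref{alg-bi}), not here.
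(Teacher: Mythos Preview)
Your argument is correct and matches the paper's own treatment exactly: the paper simply asserts the claim follows ``by using the triangle inequality'' once every optimal cluster has a representative in $E$, and your write-up spells out precisely that triangle-inequality step with $X'=X_{\mathtt{opt}}$ as the witness subset.
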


\begin{lemma}
\label{lem-select2}
In each round of Step 3 of Algorithm~\ref{alg-bi}, either the event (1) $\mathtt{d}(Q_j, E)\leq 2 r_{\mathtt{opt}}$ happens, or with probability at least $1-\eta$, the event (2) $\lambda_j(E)\geq\lambda_{j-1}(E)+1$ happens.
\end{lemma}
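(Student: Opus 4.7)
The plan is to argue by dichotomy on whether the farthest $(1+\epsilon)z$ candidate points are already close to $E$ or not. If $\mathtt{d}(Q_j,E) \leq 2 r_{\mathtt{opt}}$ we are immediately in event (1), so the whole game is to show that when this fails, the sampling step from $Q_j$ hits a previously ``unhit'' optimal cluster with probability at least $1 - \eta$.

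The first step is a triangle-inequality observation: if an optimal cluster $C_l$ satisfies $C_l \cap E \neq \emptyset$, pick any $q \in C_l \cap E$ and note that for every $p \in C_l$, both $p$ and $q$ lie inside a ball of radius $r_{\mathtt{opt}}$ around the optimal center of $C_l$, so $\mathtt{d}(p,E) \leq \mathtt{d}(p,q) \leq 2 r_{\mathtt{opt}}$. Consequently, every inlier of an already-hit cluster is within $2 r_{\mathtt{opt}}$ of $E$. Contrapositively, assuming event (1) fails (i.e., $\mathtt{d}(Q_j,E) > 2 r_{\mathtt{opt}}$), no point of $Q_j$ can belong to a hit cluster; thus $Q_j$ is contained in the union of outliers and inliers from unhit clusters.

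Next, I would perform the counting argument. We have $|Q_j| = (1+\epsilon)z$ while the total number of outliers in $X$ is at most $z$, so at least $\epsilon z$ points of $Q_j$ are inliers from currently unhit optimal clusters. Writing $V \subseteq Q_j$ for this set of ``useful'' points, we get $|V|/|Q_j| \geq \epsilon/(1+\epsilon)$. Applying Claim~\ref{pro-sample} with $U = Q_j$, $V$ as above, and $\tau = \epsilon/(1+\epsilon)$, the sampling of $\tfrac{1+\epsilon}{\epsilon}\log\tfrac{1}{\eta} = \tfrac{1}{\tau}\log\tfrac{1}{\eta}$ vertices from $Q_j$ contains at least one element of $V$ with probability at least $1 - \eta$. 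Since any such element lies in an optimal cluster not previously intersected by $E$, adding it to $E$ strictly increases the count of hit clusters, giving $\lambda_j(E) \geq \lambda_{j-1}(E) + 1$ after the update.

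The argument is quite clean; the only point requiring minor care is the bookkeeping on what $\lambda_{j-1}(E)$ and $\lambda_j(E)$ refer to before versus after the round-$j$ sampling, but this is just tracking of the indices in Step~3 of Algorithm~\ref{alg-bi}. I do not anticipate a substantive obstacle: the triangle-inequality observation about hit clusters is the conceptual core, and once it is in place the counting of outliers versus inliers in $Q_j$ together with Claim~\ref{pro-sample} closes the proof directly.
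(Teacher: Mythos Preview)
Your proposal is correct and follows essentially the same approach as the paper's own proof: assume event~(1) fails, use the triangle inequality to show that no point of $Q_j$ can lie in an already-hit optimal cluster, count outliers to obtain the $\epsilon/(1+\epsilon)$ density of ``useful'' inliers in $Q_j$, and invoke Claim~\ref{pro-sample} to conclude. The only difference is cosmetic phrasing; the logical skeleton is identical.
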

\begin{proof} 
Suppose that the event (1) does not happen, i.e., $d(Q_j, E)> 2 r_{opt}$, and then we prove that the event (2) should happen with probability at least $1-\eta$. Let $\mathcal{J}$ include all the indices $l\in\{1, 2, \cdots, k\}$ with $ E\cap C_l\neq\emptyset$. We claim that $Q_j\cap C_l=\emptyset$ for each $l\in \mathcal{J}$. Otherwise, we arbitrarily select $p\in Q_j\cap C_l$ and $p'\in E\cap C_l$; by using the triangle inequality, we know that $\mathtt{d}(p,p')\leq 2 r_{\mathtt{opt}}$ which is in contradiction to the assumption $\mathtt{d}(Q_j, E)> 2 r_{\mathtt{opt}}$. Thus, $Q_j\cap X_{\mathtt{opt}}$ only contains the vertices from $C_l$ with $l\notin \mathcal{J}$. Note that the number of outliers is $z$. So we have $|Q_j\setminus X_{\mathtt{opt}}|\leq z$ and $\frac{|Q_j\cap X_{\mathtt{opt}}|}{|Q_j|}\geq \frac{\epsilon}{1+\epsilon}$. By Claim~\ref{pro-sample}, if randomly selecting $\frac{1+\epsilon}{\epsilon}\log\frac{1}{\eta}$ vertices from $Q_j$, with probability at least $1-\eta$, the sample contains at least one vertex from $Q_j\cap X_{\mathtt{opt}}$; also, the vertex must come from $\cup_{l\notin \mathcal{J}}C_l$. That is, the event (2) $\lambda_j(E)\geq\lambda_{j-1}(E)+1$ happens.
 \end{proof}

If the event (1) of Lemma~\ref{lem-select2} happens, i.e., $\mathtt{d}(Q_j, E)\leq 2 r_{\mathtt{opt}}$, then it implies that 
\begin{align*}
\max_{p\in X\setminus Q_j}\mathtt{d}(p, E)\leq 2 r_{\mathtt{opt}}; 
\end{align*}
moreover, since $|Q_j|=(1+\epsilon)z$, we have $\phi_{\epsilon}(X,E)\leq 2 r_{\mathtt{opt}}$. 
Next, we assume that the event (1) in Lemma~\ref{lem-select2} never happens, and prove that $\lambda_j(E)=k$ with constant probability when $j=\Theta(k)$. The following idea  is inspired from  \citet{aggarwal2009adaptive} which  achieves a bi-criteria approximation for $k$-means clustering. We define a random variable $x_j$: $x_j=1$ if $\lambda_j(E)=\lambda_{j-1}(E)$, or $x_j=0$ if $\lambda_j(E)\geq\lambda_{j-1}(E)+1$, for $j=1, 2, \cdots$. So $\mathbb{E}[x_j]\leq\eta$ by Lemma~\ref{lem-select2} and
\begin{align}
\sum_{1\leq s\leq j}(1-x_s)\leq\lambda_j(E). \label{for-azuma2}
\end{align}
Also, let $J_j=\sum_{1\leq s\leq j}(x_s-\eta)$ and $J_0=0$. Then, $\{J_0, J_1, J_2, \cdots\}$ is a super-martingale with $J_{j+1}-J_j< 1$. Through the {\em Azuma-Hoeffding inequality}~\citep{alon2004probabilistic}, we have 
$\mathtt{Prob}(J_t\geq J_0+h)\leq e^{-\frac{h^2}{2t}}$ 
for any $t\in\mathbb{Z}^+$ and $h>0$. Let $t=\frac{ck}{1-\eta}$  with $c=2+\frac{2}{k(1-\eta)}\ln\frac{1}{\eta}$ and $h=(c-1)k$, the inequality implies 
\begin{eqnarray}
&&\mathtt{Prob}(\sum_{1\leq s\leq t}(1-x_s)\geq t(1-\eta)-h)\geq 1-e^{-\frac{h^2}{2t}}\nonumber\\
 \Longrightarrow && \mathtt{Prob}(\sum_{1\leq s\leq t}(1-x_s)\geq k)\geq 1-e^{-\frac{k(c-1)^2(1-\eta)}{2c}}\geq 1- e^{-(c/2-1)k(1-\eta)}\nonumber\\
\Longrightarrow &&\mathtt{Prob}(\sum_{1\leq s\leq t}(1-x_s)\geq k)\geq 1-\eta. \label{for-azuma}
\end{eqnarray}
Combining (\ref{for-azuma2}) and (\ref{for-azuma}), we know that $\lambda_t(E)\geq k$ with probability at least $1-\eta$. Moreover, when $\lambda_t(E)=k$, it is easy to  know that $E$ is a $2$-approximate solution by Claim~\ref{cla-e2}. Together with Lemma~\ref{lem-select1}, we immediately have Theorem~\ref{the-biapprox} where the overall success probability is at least $(1-\eta)^2>1-2\eta$.

\textbf{Time complexity.}  In each round of Step~3, there are $O(\frac{1}{\epsilon})$ new vertices added to $E$, thus it takes $O(\frac{1}{\epsilon}n)$ time to update the distances from the vertices of $X$ to $E$; to select the set $Q_j$, we can apply the linear time selection algorithm of \citet{blum1973time}. Overall, the running time of Algorithm~\ref{alg-bi} is $O(\frac{k}{\epsilon}n)$. If the given instance is in $\mathbb{R}^D$, the running time will be $O(\frac{k}{\epsilon}n D)$.

\subsection{$2$-Approximation for Constant $k$}
\label{sec-center-single}
If $k$ is a constant number, we show that a single-criterion $2$-approximation can be achieved. Actually, we use the same strategy as Section~\ref{sec-center-bi}, but only run $k$ rounds with each round sampling only one vertex. See Algorithm~\ref{alg-single} for the details.

\begin{algorithm}[tb]
   \caption{$2$-Approximation Algorithm}
   \label{alg-single}
\begin{algorithmic}
  \STATE {\bfseries Input:} An instance $(X,\mathtt{d})$ of metric $k$-center clustering with $z$ outliers, and $|X|=n$; a parameter $\epsilon>0$.
   \STATE
\begin{enumerate}
\item Initialize a set $E=\emptyset$.

\item Let $j=1$; randomly select one vertex from $X$ and add it to $E$.
\item Run the following steps until $j= k$:
\begin{enumerate}
\item Update $j=j+1$ and let $Q_j$ be the subset of $X$ that are the farthest $(1+\epsilon)z$ vertices to $E$.  
\item Randomly select one vertex from $Q_j$ and add it to $E$.
\end{enumerate}

\end{enumerate}
  \STATE {\bfseries Output} $E$.
\end{algorithmic}
\end{algorithm}

Denote by $\{v_1, \cdots, v_k\}$ the $k$ sampled vertices of $E$. Actually, the proof of Theorem~\ref{the-kcenter} is similar to the analysis in Section~\ref{sec-center-bi}. The only difference is that the probability that the event (2) $\lambda_j(E)\geq\lambda_{j-1}(E)+1$ in Lemma~\ref{lem-select2} happens is changed to be at least $\frac{\epsilon}{1+\epsilon}$. Also note that $v_1\in X_{\mathtt{opt}}$ with probability $1-\gamma$ (because $\gamma=z/n$). If all of these events happen, either we obtain a $2$-approximation before $k$ steps (i.e., $\mathtt{d}(E, X\setminus Q_j)\leq 2 r_{\mathtt{opt}}$ for some $j<k$), or $\{v_1, \cdots, v_k\}$ fall into the $k$ optimal clusters $C_1, C_2, \cdots, C_k$ separately (i.e., $\lambda_k(E)=k$). No matter which case happens, we always obtain a $2$-approximation with respect to the $(k,z)_{\epsilon}$-center clustering problem. So we have the following  Theorem~\ref{the-kcenter}.

\begin{theorem}
\label{the-kcenter}
Algorithm~\ref{alg-single} returns a $2$-approximation for the problem of $(k,z)_{\epsilon}$-center clustering on $X$, with probability at least $(1-\gamma)(\frac{\epsilon}{1+\epsilon})^{k-1}$. The time complexity is $O(kn)$. If the given instance is in $\mathbb{R}^D$, the time complexity will be $O(kn D)$.
\end{theorem}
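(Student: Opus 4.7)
The plan is to transplant the analysis of Algorithm~\ref{alg-bi} from Section~\ref{sec-center-bi} into the single-sample setting of Algorithm~\ref{alg-single}, replacing each high-probability sampling step with a single draw and tracking the resulting per-round success probability. I would keep the notation $\lambda_j(E)$ for the number of optimal clusters $C_l$ hit by $E$ at the start of round $j$, and argue that every round either already certifies a $2$-approximate $(k,z)_\epsilon$-solution or advances $\lambda_j$ by one with a quantifiable probability.

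For the initial seed, since $|X_{\mathtt{opt}}|/|X| = 1-\gamma$, the single uniformly random vertex $v_1$ chosen in Step~2 lies in $X_{\mathtt{opt}}$ with probability exactly $1-\gamma$; when this occurs, $\lambda_1(E) \geq 1$. For each subsequent round $j \in \{2, \ldots, k\}$, I would reuse the dichotomy from the proof of Lemma~\ref{lem-select2}: either $\mathtt{d}(Q_j, E) \leq 2 r_{\mathtt{opt}}$, in which case $\max_{p \in X\setminus Q_j}\mathtt{d}(p,E) \leq 2 r_{\mathtt{opt}}$ together with $|Q_j| = (1+\epsilon)z$ directly gives $\phi_\epsilon(X, E) \leq 2 r_{\mathtt{opt}}$ and we are done; or else, by the triangle inequality, $Q_j$ is disjoint from every already-hit cluster $C_l$, and since $|Q_j \setminus X_{\mathtt{opt}}| \leq z$ we get $|Q_j \cap X_{\mathtt{opt}}|/|Q_j| \geq \epsilon/(1+\epsilon)$. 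In this latter case, a single uniform sample from $Q_j$ lands in a fresh optimal cluster with probability at least $\frac{\epsilon}{1+\epsilon}$, so $\lambda_j(E) \geq \lambda_{j-1}(E) + 1$.

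To conclude, I would chain these events: conditioning on the initial seed succeeding and on each of the $k-1$ subsequent rounds either terminating early or incrementing $\lambda$, the algorithm either halts with $\phi_\epsilon(X,E) \leq 2 r_{\mathtt{opt}}$ at some $j < k$ or reaches $\lambda_k(E) = k$, at which point Claim~\ref{cla-e2} gives $\phi_0(X,E) \leq 2 r_{\mathtt{opt}}$, implying $\phi_\epsilon(X,E) \leq 2 r_{\mathtt{opt}}$ as well. Multiplying the per-round lower bounds across all $k$ rounds yields the claimed success probability $(1-\gamma)\bigl(\frac{\epsilon}{1+\epsilon}\bigr)^{k-1}$. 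For the time bound, each round adds one vertex to $E$, updates the distance from every $p\in X$ to $E$ in $O(n)$ time, and extracts the farthest $(1+\epsilon)z$ vertices via the linear-time selection of \citet{blum1973time}; summing over $k$ rounds gives $O(kn)$, or $O(knD)$ when distances cost $O(D)$ in $\mathbb{R}^D$.

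The main subtlety will be justifying that the multiplication of per-round probabilities is valid, i.e., that the bound $\Pr[\lambda_j(E) \geq \lambda_{j-1}(E)+1 \mid \mathcal{F}_{j-1}] \geq \frac{\epsilon}{1+\epsilon}$ holds for every realization of the history $\mathcal{F}_{j-1}$ in which the algorithm has not already certified a $2$-approximation. This should follow because the triangle-inequality argument is deterministic given the current $E$, and the fraction $\epsilon/(1+\epsilon)$ depends only on $|Q_j|=(1+\epsilon)z$ and the outlier budget $z$, not on which specific inliers have been hit so far; the conditional independence of the uniform sample from $Q_j$ relative to $\mathcal{F}_{j-1}$ then lets the chain rule go through cleanly.
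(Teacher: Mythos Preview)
Your proposal is correct and follows essentially the same approach as the paper: transplant the dichotomy of Lemma~\ref{lem-select2} to the single-sample setting, observe that the per-round success probability becomes $\frac{\epsilon}{1+\epsilon}$ and the initial seed succeeds with probability $1-\gamma$, then chain these to obtain $(1-\gamma)\bigl(\frac{\epsilon}{1+\epsilon}\bigr)^{k-1}$. Your treatment is in fact more careful than the paper's terse argument, since you explicitly address the conditional validity of multiplying the per-round bounds and spell out the time-complexity accounting.
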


To boost the probability of Theorem~\ref{the-kcenter}, we just need to repeatedly run the algorithm. The success probability is easy to calculate by taking the union bound.

\begin{corollary}
\label{the-kcenter2}
If we run Algorithm~\ref{alg-single} $ O\big(\frac{1}{1-\gamma}(\frac{1+\epsilon}{\epsilon})^{k-1}\big)$ times, with constant probability, at least one time the algorithm  returns a $2$-approximation for the problem of $(k,z)_\epsilon$-center clustering.
\end{corollary}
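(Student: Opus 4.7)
The plan is a routine probability-amplification argument built on top of Theorem~\ref{the-kcenter}. That theorem tells us a single execution of Algorithm~\ref{alg-single} returns a $2$-approximation for $(k,z)_{\epsilon}$-center clustering with probability at least
\[
p \;:=\; (1-\gamma)\Bigl(\tfrac{\epsilon}{1+\epsilon}\Bigr)^{k-1}.
\]
I would run the algorithm $N$ times independently, and among the $N$ output sets $E^{(1)},\dots,E^{(N)}$ report the one minimizing $\phi_{\epsilon}(X, E^{(i)})$. Because each $\phi_{\epsilon}(X, E^{(i)})$ can be computed in linear time (compute distances from each vertex in $X$ to $E^{(i)}$, discard the $(1+\epsilon)z$ farthest, take the maximum of what remains), selecting the best output costs only an additional factor proportional to $N$ in the running time and does not affect the probability analysis.

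Independence of the runs gives failure probability $(1-p)^{N}$. Using the standard inequality $1-p \le e^{-p}$, we have $(1-p)^{N} \le e^{-pN}$, so choosing
\[
N \;=\; \Bigl\lceil \tfrac{c}{p} \Bigr\rceil \;=\; O\!\left(\tfrac{1}{1-\gamma}\Bigl(\tfrac{1+\epsilon}{\epsilon}\Bigr)^{k-1}\right)
\]
for any constant $c>0$ drives the failure probability down to $e^{-c}$, which is a constant strictly less than $1$. Thus with constant probability at least one of the $N$ independent executions outputs a $2$-approximate solution, and the reported best output inherits this guarantee.

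There is no substantial obstacle here: the whole argument is a one-line repetition trick together with the $1-p \le e^{-p}$ estimate. The only mildly delicate point is that the statement of the corollary is about the existence of a good run rather than identifying it; either one appeals directly to the existential statement, or one remarks (as above) that evaluating $\phi_{\epsilon}(X,\cdot)$ on each candidate output lets us actually return a good solution with the same probability.
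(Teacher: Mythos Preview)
Your argument is correct and is essentially the same repetition-and-amplification reasoning the paper uses (the paper only remarks that one repeats the algorithm and computes the success probability directly). Your extra comment on selecting the best run via evaluating $\phi_{\epsilon}(X,\cdot)$ is a helpful practical note but not required by the corollary as stated.
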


 \subsection{Sublinear Time Implementation of Algorithm~\ref{alg-bi}}
 \label{sec-sublinear}
 The input data size $n$ can be quite large in practice, so in this section we consider to implement Algorithm~\ref{alg-bi} with a lower time complexity. We present a modified version of Algorithm~\ref{alg-bi} that only needs an $O(\frac{k^2}{\gamma \epsilon^2})$ time complexity, where $\gamma=\frac{z}{n}$. When the data contains heavy noise and the outliers takes a constant factor of $n$ (e.g., $z=5\%n$ and $\frac{1}{\gamma}=20$), the algorithm has a sublinear time complexity that is independent of $n$.  Moreover, the quality of Algorithm~\ref{alg-bi} presented in Theorem~\ref{the-biapprox} can be guaranteed exactly.

 \begin{algorithm}[tbp]
   \caption{Sublinear Time Implementation of Algorithm~\ref{alg-bi}}
   \label{alg-sub-bi}
\begin{algorithmic}
  \STATE {\bfseries Input:} An instance $(X, \mathtt{d})$ of metric $k$-center clustering with $z$ outliers, and $|X|=n$; parameters $\epsilon>0$, $\eta\in (0,1/2)$, and $t\in\mathbb{Z}^+$.
   \STATE
\begin{enumerate}
\item Let $\gamma=z/n, \sigma=\frac{2}{1+\sqrt{1+\frac{4(1+\epsilon)}{3\epsilon}}},n'=\frac{3}{\sigma^2(1+\epsilon)\gamma}\log\frac{4}{\eta}$ and initialize a set $E=\emptyset$. 

\item Initially, $j=1$; randomly select $\frac{1}{1-\gamma}\log\frac{1}{\eta}$ vertices from $X$ and add them to $E$.
\item Run the following steps until $j= t$:
\begin{enumerate}

\item Update $j=j+1$; uniformly sample $n'$ vertices from $X$ and denote the sampled set as $A_j$.

\item Let $\hat r_j$ be the $(1+\sigma)(1+\epsilon)\gamma n'$-th farthest distance from $A_j$ to $E$. Let $\hat{A}_j= \{p\in A_j\mid \mathtt{d}(p,E)\geq \hat r_j\}$.
\item Add $\hat{A}_j$ to $E$.
\end{enumerate}
\end{enumerate}
  \STATE {\bfseries Output} $E$.
\end{algorithmic}
\end{algorithm}
 
 \vspace{0.05in}
 \textbf{The key observation and analysis.} Recall that in each round of Algorithm~\ref{alg-bi}, we need to scan the whole data set and select the  farthest $(1+\epsilon)z$ vertices to $E$. Thus it takes linear time in each round. Our key observation is that we can actually avoid this step by   simple random sampling. The new idea is shown in Algorithm~\ref{alg-sub-bi} (Step 3). 
 We still let $Q_j$ be the set of $(1+\epsilon)z$ farthest vertices to $E$ from $X$ in the $j$-th round (as Step 3(a) in Algorithm~\ref{alg-bi}). We randomly sample $n'$ vertices from $X$ and use $A_j$ to denote this sampled set. We can view each sampled vertex of $A_j$ as an independent random variable $\in \{0,1\}$: each sampled vertex is labeled by ``$1$'' if it belongs to $Q_j$; otherwise, it is labeled by ``$0$''. Let $\sigma\in (0,1)$. Through the {\em Chernoff bound},  we have 
\begin{eqnarray}
\mathtt{Prob}\left [|A_j\cap Q_j|\in(1\pm\sigma)(1+\epsilon)\gamma n' \right ]\geq 1-2e^{-\frac{1}{3}(\sigma^2(1+\epsilon)\gamma n')}. \label{for-sublinear-1}
\end{eqnarray}
We select the farthest $(1+\sigma)(1+\epsilon)\gamma n'$ vertices to $E$ from $A_j$, where the selected set is denoted by $\hat{A}_j$. 
 We set the parameter $\sigma=\frac{2}{1+\sqrt{1+\frac{4(1+\epsilon)}{3\epsilon}}}$, and then the right hand-side of (\ref{for-sublinear-1}) becomes $1-\frac{\eta}{2}$. So with probability at least $1-\frac{\eta}{2}$, $|A_j\cap Q_j|\leq(1+\sigma)(1+\epsilon)\gamma n'$. Because $|\hat{A}_j|=(1+\sigma)(1+\epsilon)\gamma n'$, we have 
 \begin{eqnarray}
 |\hat{A}_j|\geq|A_j\cap Q_j|.\label{for-sublinear-jan-1} 
 \end{eqnarray}
 Denote by $r_j$ the distance $\mathtt{d}(A_j\cap Q_j,E)$. Since $Q_j$ is the set of $(1+\epsilon)z$ farthest vertices to $E$ from $X$, we have 
 \begin{eqnarray}
\{p\in A_j\mid \mathtt{d}(p,E)\geq r_j\}&=&A_j\cap Q_j ;\label{for-sublinear-jan-2}\\
\{p\in A_j \mid \mathtt{d}(p,E)> r_j\}&\subsetneqq& A_j\cap Q_j. \label{for-sublinear-jan-3} \end{eqnarray}
 Now we claim that $\hat r_j\leq r_j$ where $\hat r_j$ is the $(1+\sigma)(1+\epsilon)\gamma n'$-th farthest distance from $A_j$ to $E$ (as defined in Step 3(b) of Algorithm~\ref{alg-sub-bi}).  Otherwise, if $\hat{r}_j>r_j$, we can deduce that $\hat{A}_j\subseteq \{p\in A_j \mid d(p,E)> r_j\}\subsetneqq A_j\cap Q_j$ from (\ref{for-sublinear-jan-3}) the fact $\hat{A}_j=\{p\in A_j \mid d(p,E)\geq \hat r_j\}$; so we have $|\hat{A}_j|<|A_j\cap Q_j|$ which is contradictory to (\ref{for-sublinear-jan-1}).  Therefore we have $\hat r_j\leq r_j$; together with (\ref{for-sublinear-jan-2}), it implies 
 \begin{eqnarray*}
 A_j\cap Q_j \subseteq \{p\in A_j \mid \mathtt{d}(p,E)\geq \hat r_j\}=\hat{A}_j. 
 \end{eqnarray*}
Hence $A_j \cap Q_j \subseteq \hat{A}_j \cap Q_j$. On the other hand, since $\hat{A}_j\subseteq A_j$, it is easy to know $\hat{A}_j\cap Q_j\subseteq A_j\cap Q_j$. Therefore, 
 \begin{eqnarray}
 A_j\cap Q_j= \hat{A}_j\cap Q_j.\label{for-sublinear-2}
 \end{eqnarray}
 Moreover, from (\ref{for-sublinear-1}) again we know that 
 \begin{eqnarray}
 |A_j\cap Q_j|\geq(1-\sigma)(1+\epsilon)\gamma n'=\frac{1+\epsilon}{\epsilon}\log \frac{2}{\eta} \label{for-sublinear-3}
 \end{eqnarray}
with probability at least $1-\frac{\eta}{2}$, where we set $n'=\frac{3}{\sigma^2(1+\epsilon)\gamma}\log\frac{4}{\eta}$. From (\ref{for-sublinear-2}) and (\ref{for-sublinear-3}), we know that 
\begin{eqnarray*}
|\hat{A}_j\cap Q_j|=|A_j\cap Q_j|\geq\frac{1+\epsilon}{\epsilon}\log \frac{2}{\eta}.
\end{eqnarray*} 
Therefore, $\hat{A}_j$ contains at least $\frac{1+\epsilon}{\epsilon}\log \frac{2}{\eta}$ vertices from $Q_j$. 
Then we can obtain the similar result as Lemma~\ref{lem-select2}:  in each round, the event `` either (1)  $\mathtt{d}(Q_j, E)\leq 2 r_{\mathtt{opt}}$ or (2) $\lambda_j(E)\geq\lambda_{j-1}(E)+1$'' happens with probability at least $1-\frac{\eta}{2}-\frac{\eta}{2}= 1-\eta$ (recall the probability in (\ref{for-sublinear-1}) is $1-\frac{\eta}{2}$, so the overall probability is at least $1-\frac{\eta}{2}-\frac{\eta}{2}$). 

Together with the same super-martingale argument of Theorem \ref{the-biapprox}, we have the following result.

\begin{theorem}
\label{the-sub_biapprox}
Let $\epsilon>0$. If we set $t=\frac{ck}{1-\eta}$ with $c=2+\frac{2}{k(1-\eta)}\ln\frac{1}{\eta}$ for  Algorithm~\ref{alg-sub-bi}, with probability at least $1-2\eta$, $\phi_{\epsilon}(X,E)\leq 2 r_{\mathtt{opt}}$. 
\end{theorem}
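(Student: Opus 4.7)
The plan is to replicate the structure of the proof of Theorem~\ref{the-biapprox}, with the only genuinely new ingredient being a Chernoff-type concentration argument that shows the sampled set $\hat{A}_j$ behaves (within each round) as a faithful surrogate for the full ``farthest $(1+\epsilon)z$ vertices'' set $Q_j$ used in Algorithm~\ref{alg-bi}.

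First, I would fix an arbitrary round $j$ of Step~3 and condition on the current set $E$. Let $Q_j\subseteq X$ be defined exactly as in Algorithm~\ref{alg-bi} (the farthest $(1+\epsilon)z$ vertices to the current $E$), so $|Q_j|/|X|=(1+\epsilon)\gamma$. The uniformly sampled set $A_j$ of size $n'=\frac{3}{\sigma^2(1+\epsilon)\gamma}\log\frac{4}{\eta}$ has expected intersection $(1+\epsilon)\gamma n'$ with $Q_j$, so a two-sided Chernoff bound gives
\begin{equation*}
\mathtt{Prob}\bigl[\,|A_j\cap Q_j|\in (1\pm\sigma)(1+\epsilon)\gamma n'\,\bigr]\geq 1-2e^{-\sigma^2(1+\epsilon)\gamma n'/3}=1-\tfrac{\eta}{2},
\end{equation*}
exactly as in (\ref{for-sublinear-1}). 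Conditioning on this high-probability event, the short counting argument in the paragraph after~(\ref{for-sublinear-1}) (comparing the order statistic $\hat r_j$ with $r_j=\mathtt{d}(A_j\cap Q_j,E)$ via the size constraints) shows that $\hat{A}_j\cap Q_j=A_j\cap Q_j$ and therefore
\begin{equation*}
|\hat{A}_j\cap Q_j|\geq (1-\sigma)(1+\epsilon)\gamma n' = \tfrac{1+\epsilon}{\epsilon}\log\tfrac{2}{\eta},
\end{equation*}
by the choice of $\sigma$ and $n'$. This is the crux: after this step, the set $\hat{A}_j$ added to $E$ contains at least $\frac{1+\epsilon}{\epsilon}\log\frac{2}{\eta}$ vertices drawn uniformly at random from $Q_j$, which is the same guarantee that Algorithm~\ref{alg-bi} obtains deterministically.

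From this I can then re-derive the analogue of Lemma~\ref{lem-select2}: either $\mathtt{d}(Q_j,E)\leq 2r_{\mathtt{opt}}$ (in which case $\phi_{\epsilon}(X,E)\leq 2r_{\mathtt{opt}}$ already), or, following the same $\mathcal{J}$-argument as in the original proof, $|Q_j\cap X_{\mathtt{opt}}|/|Q_j|\geq \epsilon/(1+\epsilon)$ so that $\frac{1+\epsilon}{\epsilon}\log\frac{2}{\eta}$ random samples from $Q_j$ miss $Q_j\cap X_{\mathtt{opt}}$ with probability at most $\eta/2$. Taking a union bound with the concentration event above, we conclude that in each round
\begin{equation*}
\mathtt{Prob}\bigl[\,\mathtt{d}(Q_j,E)\leq 2r_{\mathtt{opt}}\ \text{or}\ \lambda_j(E)\geq \lambda_{j-1}(E)+1\,\bigr]\geq 1-\eta.
\end{equation*}

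Finally, I would plug this round-wise guarantee into exactly the super-martingale setup used for Theorem~\ref{the-biapprox}: define $x_j\in\{0,1\}$ indicating the failure of $\lambda_j(E)$ to grow, observe $\mathbb{E}[x_j]\leq \eta$, apply Azuma--Hoeffding to $J_j=\sum_{s\leq j}(x_s-\eta)$ with $t=\frac{ck}{1-\eta}$ and $h=(c-1)k$, and combine with Lemma~\ref{lem-select1} (the analysis of the seed set in Step~2 is identical here since it is literally the same step) to obtain the overall success probability $\geq (1-\eta)^2>1-2\eta$, yielding $\phi_{\epsilon}(X,E)\leq 2r_{\mathtt{opt}}$. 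The main obstacle, and the only place where the argument is not a verbatim rerun of Theorem~\ref{the-biapprox}, is the Chernoff+order-statistic step: one must be careful that even though we only select $\hat{A}_j$ rather than all of $Q_j$, the sampled vertices added to $E$ still form a uniform sample of $Q_j$ large enough to drive the $\lambda_j$-growth argument, and that the two failure events (concentration and missing $X_{\mathtt{opt}}$) are combined into a clean $1-\eta$ per-round bound so that the super-martingale tail calculation goes through unchanged.
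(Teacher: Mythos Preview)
Your proposal is correct and follows essentially the same route as the paper: establish the per-round Chernoff concentration for $|A_j\cap Q_j|$, deduce via the order-statistic comparison that $\hat{A}_j\cap Q_j=A_j\cap Q_j$ has size at least $\frac{1+\epsilon}{\epsilon}\log\frac{2}{\eta}$, combine the two $\eta/2$ failure probabilities to recover the $1-\eta$ per-round dichotomy of Lemma~\ref{lem-select2}, and then rerun the Azuma--Hoeffding super-martingale argument together with Lemma~\ref{lem-select1} verbatim. The paper's own argument (the paragraphs preceding the theorem statement plus the reference to the super-martingale step of Theorem~\ref{the-biapprox}) is exactly this.
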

\textbf{Quality and time complexity.} 
We assume that $\gamma$ and $1/\eta$ are constants.  Algorithm~\ref{alg-sub-bi} adds $O(\frac{1}{\sigma^2})$ vertices to $E$ at each iteration. Note that $\sigma=\Theta(\sqrt{\epsilon})$, which implies the number of vertices added to $E$ at each iteration is $O(\frac{1}{\epsilon})$.  So $|E|=O(\frac{k}{\epsilon})$ at the end of the algorithm. Then Theorem~\ref{the-sub_biapprox} implies that $E$ is a $\left(2,O(\frac{1}{\epsilon})\right)$-approximation for $(k,z)_{\epsilon}$-center clustering of $X$ with constant probability. Each round we compute the distances from the vertices of $A_j$ to $E$ and select $\hat A_j$. Since $|A_j|=O(\frac{1}{\gamma\epsilon})$ and $|E|= O(\frac{k}{\epsilon})$, we have the time complexity $O(\frac{k}{\gamma\epsilon^2})$ for computing the distances in each round of Algorithm~\ref{alg-sub-bi}. The selection of $\hat A_j$ takes $O(|A_j|)$ time. Overall, the time complexity of Algorithm~\ref{alg-sub-bi} is $O(\frac{k^2}{\gamma\epsilon^2})$, which is independent of $n$. If the given distance is in $\mathbb{R}^D$, the time complexity will be $O(\frac{k^2}{\gamma\epsilon^2}D)$. If the input  data size $n$ is large, Algorithm~\ref{alg-sub-bi} can significantly reduce the time complexity and meanwhile preserve the same clustering quality of Algorithm~\ref{alg-bi}.

\section{Coresets for $k$-Center Clustering with Outliers}
\label{sec-coreset}
In this section, we consider the coreset construction problem for $k$-center clustering with outliers. First, we show that the simple uniform sampling approach can yield a slightly weaker coreset for $(k, z)_{\epsilon}$-center clustering in Euclidean space, where the number of discarded outliers is amplified from $(1+\epsilon)z$ to be $(1+O\big(\epsilon)\big)z$. Then we consider the coreset construction in doubling metrics. We show that the idea of Algorithm~\ref{alg-bi} can be extended for building the coreset efficiently, even if the doubling dimension is not given.

\subsection{Uniform Sampling in Euclidean Space}
\label{sec-core1}

Given a metric $(X,\mathtt{d})$, \cite{charikar2003better} showed that we can use a random sample $S$ to replace $X$. Recall $\gamma=z/n$. Let $|S|=O(\frac{k}{\epsilon^2 \gamma}\ln n)$ and $E$ be an $\alpha$-approximate solution of $(k, z)_\epsilon$-center clustering on $(S,\mathtt{d})$, then $E$ is an $\alpha$-approximate solution of $(k, z)_{O(\epsilon)}$-center clustering on $(X,d)$ with constant probability. In a $D$-dimensional Euclidean space, \cite{huang2018epsilon} showed a similar result, where the sample size $|S|=\tilde{O}(\frac{1}{\epsilon^2\gamma^2}kD)$\footnote{The asymptotic notation $\tilde{O}(f)=O\big(f\cdot \mathtt{polylog}(\frac{kD}{\epsilon\gamma})\big)$.}. In this section, we show that the sample size of \citet{huang2018epsilon} can be further improved by a factor $\frac{1}{\gamma}$ and the new sample size is $\tilde{O}(\frac{1}{\epsilon^2\gamma}kD)$. This improvement could be important for the case $z\ll n$, e.g., $z=\sqrt{n}$. Below We revisit their idea first, and then provide a more careful analysis to achieve the improvement.

Let $P$ be a set of $n$ points in $\mathbb{R}^D$. Consider the range space $\Sigma=(P, \Pi)$ where each range $\pi\in \Pi$ is the complement of union of $k$ balls in $\mathbb{R}^D$. We know that the VC dimension of balls is $O(D)$~\citep{alon2004probabilistic}, and therefore the VC dimension of union of $k$ balls is $O(kD \log k)$~\citep{blumer1989learnability}. That is, the VC dimension of the range space $\Sigma$ is $O(kD \log k)$.
Let $\epsilon\in(0,1)$, and an ``$\epsilon$-sample'' $S$ of $P$ is defined as follows: 
\begin{eqnarray*}
\forall \pi\in\Pi,\hspace{0.2in} \big|\frac{|\pi\cap P|}{|P|}-\frac{|\pi\cap S|}{|S|}\big|\leq \epsilon.
\end{eqnarray*}
Roughly speaking, $S$ is an approximation of $P$ with an additive error within each range $\pi$. 
Given a range space with the VC dimension $d_{\mathtt{vc}}$, an $\epsilon$-sample can be easily obtained via uniform sampling~\citep{alon2004probabilistic}, where the success probability is $1-\lambda$ and the sample size is $O\big(\frac{1}{\epsilon^2}(d_{\mathtt{vc}}\log\frac{d_{\mathtt{vc}}}{\epsilon}+\log\frac{1}{\lambda})\big)$ for any $0<\lambda<1$.
For our problem, we need to replace the ``$\epsilon$'' of the ``$\epsilon$-sample'' by $\epsilon\gamma$ to guarantee that the number of uncovered points is bounded by $\big(1+O(\epsilon)\big)\gamma n$ (we show the details below). Since $d_{\mathtt{vc}}=O(kD \log k)$, the sample size is $\tilde{O}(\frac{1}{\epsilon^2\gamma^2}kD)$~\citep{huang2018epsilon}.

Actually, the front factor $\frac{1}{\epsilon^2\gamma^2}$ of the sample size can be further reduced to be $\frac{1}{\epsilon^2\gamma}$ by a more careful analysis. We observe that there is no need to guarantee the additive error for each range $\pi$ (as the definition of $\epsilon$-sample). Instead, only a multiplicative error for the ranges covering at least $\gamma n$ points should be sufficient. Note that when a range covers more points, the multiplicative error is weaker than the additive error and thus the sample size is reduced. For this purpose, we use the {\em relative approximation}~\citep{har2011relative,li2001improved}: let $S\subseteq P$ be a subset of size $\tilde{O}(\frac{1}{\epsilon^2\gamma}kD)$ chosen uniformly at random, then with constant probability,
\begin{align}
\forall \pi\in\Pi,\ \Big|\frac{|\pi\cap P|}{|P|}-\frac{|\pi\cap S|}{|S|}\Big|\leq \epsilon\times\max\Big\{\frac{|\pi\cap P|}{|P|}, \gamma\Big\}. \label{for-relativesample}
\end{align}
We formally state our result below. Theorem~\ref{the-samplereduce} shows that if we have an $\alpha$-approximation algorithm, we can run it on the sample $S$ to obtain a solution $E$, which is also an $\alpha$-approximate solution for $(k, z)_{O(\epsilon)}$-center clustering on $P$. Because $|S|\ll |P|$, we can reduce a great amount of runtime.

\begin{theorem}
\label{the-samplereduce}
Let $P$ be an instance for the problem of $k$-center clustering with outliers in $\mathbb{R}^{D}$ as described in Definition~\ref{def-outlier}, and $S\subseteq P$ be a subset of size $\tilde{O}(\frac{1}{\epsilon^2\gamma}kD)$ chosen uniformly at random. Suppose $\epsilon\leq 0.5$. Let $S$ be a new instance for the problem of $k$-center clustering with outliers where the number of outliers is set to be $z'=(1+\epsilon)\gamma |S|$. If $E$ is an $\alpha$-approximate solution of $(k, z')_{\epsilon}$-center clustering on $S$,  then $E$ is an $\alpha$-approximate solution of $(k, z)_{O(\epsilon)}$-center clustering on $P$, with constant probability.
\end{theorem}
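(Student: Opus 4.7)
The plan is to prove Theorem~\ref{the-samplereduce} by using the relative approximation property~(\ref{for-relativesample}) as a two-way bridge between the coverage fractions on $P$ and on $S$. The key objects are the ranges $\pi\in\Pi$, each being the complement of a union of $k$ balls in $\mathbb{R}^D$; since balls have VC dimension $O(D)$, the range space $(P,\Pi)$ has VC dimension $O(kD\log k)$, so a uniform sample of size $\tilde O\bigl(\tfrac{1}{\epsilon^2\gamma}kD\bigr)$ satisfies~(\ref{for-relativesample}) with constant probability~\citep{har2011relative,li2001improved}. I would condition on this event throughout the proof.

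First I would bound the optimal cost on the sample. Let $\pi^*$ be the range induced by the $k$ optimal balls for $P$ (with radius $r_{\mathtt{opt}}$); then $|\pi^*\cap P|\le z$, i.e.\ $|\pi^*\cap P|/|P|\le\gamma$. Plugging into~(\ref{for-relativesample}),
\begin{align*}
\frac{|\pi^*\cap S|}{|S|}\ \le\ \frac{|\pi^*\cap P|}{|P|}+\epsilon\max\!\Big\{\tfrac{|\pi^*\cap P|}{|P|},\gamma\Big\}\ \le\ (1+\epsilon)\gamma,
\end{align*}
so $|\pi^*\cap S|\le(1+\epsilon)\gamma|S|=z'$. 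Hence the same $k$ centers witness that the optimal radius of $(k,z')$-center clustering on $S$ is at most $r_{\mathtt{opt}}$.

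Next I would push the approximate solution back to $P$. By assumption, $E$ is an $\alpha$-approximate solution of $(k,z')_\epsilon$-center clustering on $S$; using the bound from the previous step, the $k$ balls around $E$ of radius $\alpha r_{\mathtt{opt}}$ leave at most $(1+\epsilon)z'=(1+\epsilon)^2\gamma|S|$ points of $S$ uncovered. Denote this range by $\pi_E$. Applying~(\ref{for-relativesample}) in the other direction,
\begin{align*}
\frac{|\pi_E\cap P|}{|P|}-\frac{|\pi_E\cap S|}{|S|}\ \le\ \epsilon\max\!\Big\{\tfrac{|\pi_E\cap P|}{|P|},\gamma\Big\}.
\end{align*}
If $|\pi_E\cap P|/|P|\le\gamma$ we are already done; otherwise the inequality gives $(1-\epsilon)\frac{|\pi_E\cap P|}{|P|}\le\frac{|\pi_E\cap S|}{|S|}\le(1+\epsilon)^2\gamma$, which, for $\epsilon\le 1/2$, yields $|\pi_E\cap P|\le\frac{(1+\epsilon)^2}{1-\epsilon}z=(1+O(\epsilon))z$. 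Thus the balls of radius $\alpha r_{\mathtt{opt}}$ around $E$ cover all but at most $(1+O(\epsilon))z$ points of $P$, certifying that $E$ is an $\alpha$-approximate solution for $(k,z)_{O(\epsilon)}$-center clustering on $P$.

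The main obstacle I anticipate is purely bookkeeping: making sure the ``$O(\epsilon)$'' blow-up of discarded outliers stays a multiplicative $(1+O(\epsilon))$ factor rather than accumulating into something like $2\gamma$. This is exactly why the relative approximation (rather than an ordinary $\epsilon$-sample) is essential — an $\epsilon$-sample would require the error scale to be $\epsilon\gamma$, forcing a sample of size $\tilde O(\tfrac{1}{\epsilon^2\gamma^2}kD)$, whereas~(\ref{for-relativesample}) trades the additive error for a multiplicative one in precisely the regime $|\pi\cap P|/|P|\gtrsim\gamma$ that matters here, saving the factor $1/\gamma$. All other steps (VC dimension of unions of $k$ balls, union bound, and constants hidden in $\tilde O$) are standard.
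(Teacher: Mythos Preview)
Your proposal is correct and follows essentially the same argument as the paper's proof: first use the relative approximation~(\ref{for-relativesample}) to show that the optimal balls for $P$ leave at most $z'=(1+\epsilon)\gamma|S|$ sample points uncovered (so the sample's optimum is $\le r_{\mathtt{opt}}$), and then use~(\ref{for-relativesample}) in the reverse direction to bound $|\pi_E\cap P|$ by $\frac{(1+\epsilon)^2}{1-\epsilon}z=(1+O(\epsilon))z$. The only cosmetic difference is that the paper phrases the second step as a proof by contradiction, whereas you do an explicit case split on whether $|\pi_E\cap P|/|P|\le\gamma$; the content is identical.
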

\begin{proof}
We assume that $S$ is a relative approximation of $P$ and (\ref{for-relativesample}) is true (this happens with constant probability). Let $\mathbb{B}_{\mathtt{opt}}$ be the set of $k$ balls covering $(1-\gamma)n$ points induced by the optimal solution for $P$, and $\mathbb{B}_{S}$ be the set of $k$ balls induced by an $\alpha$-approximate solution of $(k, z')_{\epsilon}$-center clustering on $S$. Suppose the radius of each ball in $\mathbb{B}_{\mathtt{opt}}$ (resp., $\mathbb{B}_{S}$) is $r_{\mathtt{opt}}$ (resp., $r_S$). We denote the complements of $\mathbb{B}_{\mathtt{opt}}$ and $\mathbb{B}_{S}$ as $\pi_{\mathtt{opt}}$ and $\pi_{S}$, respectively.

First, since $\mathbb{B}_{\mathtt{opt}}$ covers $(1-\gamma)n$ points of $P$ and $S$ is a relative approximation of $P$, we have
\begin{align*}
\frac{\big|\pi_{\mathtt{opt}}\cap S\big| }{|S|}\leq  \frac{\big|\pi_{\mathtt{opt}}\cap P\big| }{|P|}+ \epsilon\times\max\Big\{\frac{|\pi_{\mathtt{opt}}\cap P|}{|P|}, \gamma\Big\}= (1+\epsilon)\gamma 
\end{align*}
by (\ref{for-relativesample}). That is, the set balls $\mathbb{B}_{\mathtt{opt}}$ cover at least $\big(1-(1+\epsilon)\gamma\big) |S|$ points of $S$, and therefore it is a feasible solution for the instance $S$ with respect to the problem of $k$-center clustering with $z'$ outliers. 
Since $\mathbb{B}_{S}$ is  an $\alpha$-approximate solution of $(k, z')_{\epsilon}$-center clustering on $S$, we have
\begin{align}
r_S \leq  \alpha r_{\mathtt{opt}};  \hspace{0.2in}
|\pi_{S}\cap S| \leq (1+\epsilon)z'=(1+\epsilon)^2\gamma|S|. \label{for-samplereduce2}
\end{align}

Now, we claim that
\begin{align}
\big|\pi_{S}\cap P\big|\leq \frac{(1+\epsilon)^2}{1-\epsilon}\gamma |P|. \label{for-samplereduce3}
\end{align}
Assume that (\ref{for-samplereduce3}) is not true, then (\ref{for-relativesample}) implies
\begin{eqnarray*}
\Big|\frac{|\pi_{S}\cap P|}{|P|}-\frac{|\pi_{S}\cap S|}{|S|}\Big|\leq \epsilon\times \max\Big\{\frac{|\pi_{S}\cap P|}{|P|},\gamma\Big\}=\epsilon \frac{|\pi_{S}\cap P|}{|P|}.
\end{eqnarray*}
So $\frac{|\pi_{S}\cap S|}{|S|}\geq (1-\epsilon)\frac{|\pi_{S}\cap P|}{|P|}>(1+\epsilon)^2\gamma$, which is in contradiction with the second inequality of (\ref{for-samplereduce2}), and thus (\ref{for-samplereduce3}) is true. We assume $\epsilon\leq 0.5$, so $\frac{1}{1-\epsilon}\leq 1+2\epsilon$ and $\frac{(1+\epsilon)^2}{1-\epsilon}=1+O(\epsilon)$. Consequently (\ref{for-samplereduce3}) and the first inequality of (\ref{for-samplereduce2}) together imply that $\mathbb{B}_{S}$ is an $\alpha$-approximate solution of $(k, z)_{O(\epsilon)}$-center clustering on $P$.
 \end{proof}

\subsection{Coreset Construction in Doubling Metrics}
\label{sec-doubling}
Actually the sample obtained in Theorem~\ref{the-samplereduce} is not a standard coreset as  Definition~\ref{def-coreset}, since it always incurs an error on the number of discarded outliers. In this section, we consider constructing the coreset that strictly satisfies Definition~\ref{def-coreset}. We introduce the following assumption first.

\begin{assumption}
\label{ass-1}
Given an instance $(X,\mathtt{d})$ of $k$-center clustering with outliers, the metric $(X_{\mathtt{opt}},\mathtt{d})$, i.e., the metric formed by the set of inliers, has a constant doubling dimension $\rho>0$.
\end{assumption}

 We do not have any restriction on the outliers $X\setminus X_{\mathtt{opt}}$. Thus the above assumption is more relaxed and practical than assuming the whole $(X,\mathtt{d})$ has a constant doubling dimension (e.g., the previous coreset construction algorithm of \cite{DBLP:journals/corr/abs-1802-09205} assumed that the whole $(X,\mathtt{d})$ has a constant doubling dimension $\rho$). 
From Definition~\ref{def-dd}, we directly know that each optimal cluster $C_j$ of $X_{\mathtt{opt}}$ can be covered by $2^\rho$ balls with radius $r_{\mathtt{opt}}/2$ (see the left figure in Figure~\ref{fig-dd}). So we can imagine that the instance $(X, \mathtt{d})$ has $2^\rho k$ clusters, where the optimal radius is at most $r_{\mathtt{opt}}/2$. Therefore, we can just replace $k$ by $2^\rho k$ in Algorithm~\ref{alg-bi}, so as to reduce the approximation ratio (i.e., the ratio of the obtained radius to $r_{\mathtt{opt}}$) from $2$ to $1$.

\begin{figure}[ht]
\begin{center}
    \includegraphics[height=1.5in]{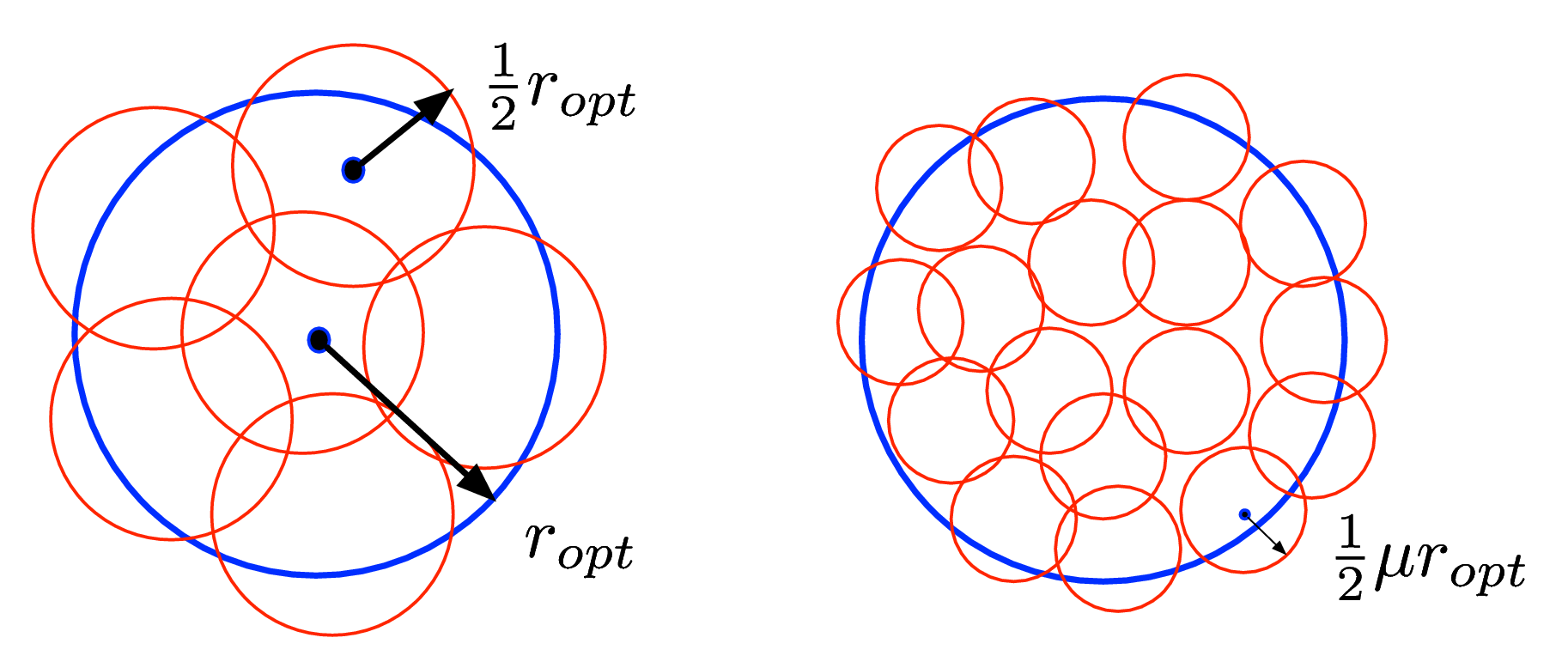}  
    \end{center}
  \caption{Illustrations for Theorem~\ref{the-double-biapprox} and Theorem~\ref{the-coreset}.}   
   \label{fig-dd}
\end{figure}

\begin{theorem}
\label{the-double-biapprox}
If we set $t=\frac{2^\rho ck}{1-\eta}$ with $c=2+\frac{2}{k(1-\eta)}\ln\frac{1}{\eta}$  for Algorithm~\ref{alg-bi}, with probability at least $1-2\eta$, $\phi_{\epsilon}(X,E)\leq  r_{\mathtt{opt}}$. So the set $E$ is a $\big(1, O(\frac{2^\rho}{\epsilon})\big)$-approximation for the problem of $(k,z)_{\epsilon}$-center clustering, and the time complexity is $O( (k+\ln\frac{1}{2\eta})\frac{2^\rho}{\epsilon}n\ln \frac{1}{2\eta})$.
\end{theorem}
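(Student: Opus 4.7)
The plan is to reduce this statement directly to Theorem~\ref{the-biapprox} by using the doubling dimension to ``refine'' the $k$ optimal clusters into $k':=2^\rho k$ sub-clusters of half the radius, so that the $2$-approximation guarantee of Algorithm~\ref{alg-bi} with respect to the refinement translates into a $1$-approximation with respect to the original instance.

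First, by Assumption~\ref{ass-1} and Definition~\ref{def-dd}, each optimal cluster $C_j\subseteq\mathtt{Ball}(c_j,r_{\mathtt{opt}})=\mathtt{Ball}(c_j,2\cdot r_{\mathtt{opt}}/2)$ is covered by $2^\rho$ balls of radius $r_{\mathtt{opt}}/2$. Partitioning $X_{\mathtt{opt}}$ accordingly, I obtain a refinement $\{C'_1,\ldots,C'_{k'}\}$ with $k'\leq 2^\rho k$ and $\mathrm{diam}(C'_l)\leq r_{\mathtt{opt}}$ for every $l$. I then re-run the proof of Theorem~\ref{the-biapprox} verbatim with $k$ replaced by $k'$, redefining $\lambda_j(E)$ as the number of \emph{refined} clusters $C'_l$ intersected by $E$ at the start of round $j$. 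Lemma~\ref{lem-select1} is unaffected because its proof only uses $|X_{\mathtt{opt}}|/|X|=1-\gamma$.

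Next, I verify the analogue of Lemma~\ref{lem-select2}: either $\mathtt{d}(Q_j,E)\leq r_{\mathtt{opt}}$, or with probability $\geq 1-\eta$ we have $\lambda_j(E)\geq \lambda_{j-1}(E)+1$. The argument is identical to the original, but now uses the refined diameter bound: if $\mathtt{d}(Q_j,E)>r_{\mathtt{opt}}$, then by the triangle inequality no $C'_l$ already touched by $E$ can contain a point of $Q_j$, hence $Q_j\cap X_{\mathtt{opt}}$ lies in the unvisited refined clusters and $|Q_j\cap X_{\mathtt{opt}}|/|Q_j|\geq \epsilon/(1+\epsilon)$, so Claim~\ref{pro-sample} applied to the $\frac{1+\epsilon}{\epsilon}\log\frac{1}{\eta}$ sampled vertices yields the claim. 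In the other case $|Q_j|=(1+\epsilon)z$ implies $\phi_\epsilon(X,E)\leq r_{\mathtt{opt}}$ directly. The Azuma--Hoeffding super-martingale calculation from Theorem~\ref{the-biapprox} then runs with $k'$ in place of $k$; choosing $t=\frac{ck'}{1-\eta}=\frac{2^\rho ck}{1-\eta}$ guarantees $\lambda_t(E)=k'$ with probability $\geq 1-\eta$. Once every refined cluster is hit, the analogue of Claim~\ref{cla-e2} with radius $r_{\mathtt{opt}}/2$ gives $\phi_0(X_{\mathtt{opt}},E)\leq 2\cdot r_{\mathtt{opt}}/2=r_{\mathtt{opt}}$, hence $\phi_\epsilon(X,E)\leq r_{\mathtt{opt}}$. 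Combined with Lemma~\ref{lem-select1}, the overall success probability is $(1-\eta)^2\geq 1-2\eta$.

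Finally, for the quantitative claims: $|E|=O\bigl(\frac{1}{1-\gamma}\log\frac{1}{\eta}\bigr)+t\cdot O\bigl(\frac{1}{\epsilon}\log\frac{1}{\eta}\bigr)=O\bigl(\frac{2^\rho k}{\epsilon}\log\frac{1}{\eta}\bigr)$, which is the $(1,O(2^\rho/\epsilon))$-approximation. Each round of Step~3 updates the distances from $X$ to the $O(\frac{1}{\epsilon}\log\frac{1}{\eta})$ new centers in $O(\frac{n}{\epsilon}\log\frac{1}{\eta})$ time and performs a linear-time selection of $Q_j$; summing over $t=\Theta\bigl(2^\rho(k+\log\frac{1}{\eta})\bigr)$ rounds (using $c=2+\frac{2}{k(1-\eta)}\ln\frac{1}{\eta}$) reproduces the stated bound $O\bigl((k+\ln\frac{1}{2\eta})\frac{2^\rho}{\epsilon}n\ln\frac{1}{2\eta}\bigr)$. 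The only delicate point is the Lemma~\ref{lem-select2} analogue — one must count progress against refined clusters rather than the original $k$ clusters and invoke the $r_{\mathtt{opt}}/2$ diameter bound to rule out revisiting — but once this translation is in place the rest is a mechanical rerun of Theorem~\ref{the-biapprox}.
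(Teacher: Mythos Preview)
Your proposal is correct and is precisely the paper's approach: the paper states Theorem~\ref{the-double-biapprox} without a standalone proof, explaining in the preceding paragraph that each $C_j$ is covered by $2^\rho$ balls of radius $r_{\mathtt{opt}}/2$ so one may ``imagine that the instance $(X,\mathtt{d})$ has $2^\rho k$ clusters'' and simply replace $k$ by $2^\rho k$ in Algorithm~\ref{alg-bi} to turn the $2$-approximation into a $1$-approximation. Your write-up fleshes out exactly this reduction, including the correct analogue of Lemma~\ref{lem-select2} with threshold $r_{\mathtt{opt}}$ and the observation that keeping $c=2+\frac{2}{k(1-\eta)}\ln\frac{1}{\eta}$ (with the original $k$) still makes the Azuma--Hoeffding step go through since $k'=2^\rho k\geq k$.
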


Theorem~\ref{the-double-biapprox} is a warm-up, and we can further construct the coreset for $k$-center clustering with outliers. Let $\mu\in (0,1)$, and for simplicity we assume that $\log \frac{2}{\mu}$ is an integer. If applying Definition~\ref{def-dd} recursively, we know that each $C_j$ is covered by $2^{\rho\log 2/\mu}=(\frac{2}{\mu})^\rho$ balls with radius $\frac{\mu}{2} r_{\mathtt{opt}}$, and $X_{\mathtt{opt}}$ is covered by $(\frac{2}{\mu})^\rho k$ such balls in total. See the right figure in Figure~\ref{fig-dd}. Then we have Algorithm~\ref{alg-coreset} based on this observation.

\begin{algorithm}[tb]
   \caption{Coreset Construction in Doubling Metrics}
   \label{alg-coreset}
\begin{algorithmic}
  \STATE {\bfseries Input:} An instance $(X,d)$ of metric $k$-center clustering with $z$ outliers, and $|X|=n$; parameters $\eta\in(0,1/2)$ and $\mu\in (0,1)$.
   \STATE
\begin{enumerate}
\item Let $l=(\frac{2}{\mu})^\rho k$, $c=2+\frac{2}{k(1-\eta)}\ln\frac{1}{\eta}$.

\item Set $\epsilon=1$ and run Algorithm~\ref{alg-bi} with $t=\frac{cl}{1-\eta}$ rounds.
Denote by $\tilde{r}=\phi_1(X,E)$ the maximum distance between $E$ and $X$ by excluding the farthest $2z$ vertices, after the final round of Algorithm~\ref{alg-bi}.

\item Let 
$X_{\tilde{r}}=\{p\mid p\in X \text{ and } \mathtt{d}(x, E)\leq \tilde{r}\}$.

\item For each vertex $p\in X_{\tilde{r}}$, assign it to its nearest neighbor in $E$; for each vertex $q\in E$, let its weight be the number of vertices assigning to it.

\item Add $X\setminus X_{\tilde{r}}$ to $E$;  each vertex of $X\setminus X_{\tilde{r}}$ has weight $1$.

\end{enumerate}
  \STATE {\bfseries Output} $E$ as the coreset.
\end{algorithmic}
\end{algorithm}

\begin{theorem}
\label{the-coreset}
	Let $\eta\in (0,1/2)$. With probability at least $1-2\eta$, Algorithm~\ref{alg-coreset} returns a $\mu$-coreset $E$ of $k$-center clustering with $z$ outliers. The size of $E$ is at most $2z+ O\big((\frac{2}{\mu})^\rho (k+\ln\frac{1}{2\eta})\ln\frac{1}{2\eta}\big)$, and the construction time is $O(n(\frac{2}{\mu})^\rho (k+\ln\frac{1}{2\eta})\ln\frac{1}{2\eta})$.
\end{theorem}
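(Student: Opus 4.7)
The plan has three ingredients: a radius bound $\tilde r \leq \mu\, r_{\mathtt{opt}}$ coming from the doubling structure, a transfer between $\phi_0(X,\cdot)$ and $\phi_0(E,\cdot)$ based on that radius, and an accounting argument for the size and runtime. For the radius bound I would apply Definition~\ref{def-dd} recursively $\log(2/\mu)$ times to $(X_{\mathtt{opt}},\mathtt{d})$ (as illustrated in the right panel of Figure~\ref{fig-dd}): each of the $k$ optimal clusters $C_j$ is then covered by $(2/\mu)^\rho$ balls of radius $(\mu/2)r_{\mathtt{opt}}$, so the instance admits an $l$-center-with-$z$-outliers solution of cost at most $(\mu/2)r_{\mathtt{opt}}$ with $l=(2/\mu)^\rho k$. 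Step~2 runs Algorithm~\ref{alg-bi} with $\epsilon=1$ and $t=cl/(1-\eta)$ rounds, which is exactly the bi-criteria procedure on this refined instance (with ``$k$'' replaced by $l$). Theorem~\ref{the-biapprox} then gives $\tilde r=\phi_1(X,E)\leq 2\cdot(\mu/2)r_{\mathtt{opt}}=\mu\, r_{\mathtt{opt}}$ with probability at least $1-2\eta$, exactly as in Theorem~\ref{the-double-biapprox}. I would condition on this event from here on.

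Next, for the coreset inequality, fix $H\subseteq X$ with $|H|=k$. Since $r_{\mathtt{opt}}\leq \phi_0(X,H)$ we have $\tilde r\leq \mu\,\phi_0(X,H)$. By construction each $p\in X_{\tilde r}$ is represented in $E$ by its nearest neighbor $e_p$ with $\mathtt{d}(p,e_p)\leq \tilde r$, the weights of $E$ equal the multiplicities of $p\mapsto e_p$, and the at most $2z$ vertices of $X\setminus X_{\tilde r}$ appear in $E$ with weight $1$. Thus there is a weight-preserving map $\pi:X\to E$ that moves every point by at most $\tilde r$, so by the triangle inequality $|\mathtt{d}(p,H)-\mathtt{d}(\pi(p),H)|\leq \tilde r$ for every $p\in X$. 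For the upper bound I take an $(n-z)$-subset $X^\star\subseteq X$ realizing $\phi_0(X,H)$ and use $\pi(X^\star)$ as the weighted inlier set of $E$: its total weight is $\geq n-z$ and every point is within $(1+\mu)\phi_0(X,H)$ of $H$. For the lower bound I pull back an optimal weighted inlier set of $E$ through $\pi$, obtaining a $(\geq n-z)$-subset of $X$ whose maximum distance to $H$ is at most $\phi_0(E,H)+\tilde r$, which rearranges to $\phi_0(E,H)\geq (1-\mu)\phi_0(X,H)$.

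Finally, with $\epsilon=1$ each of the $t=O\bigl(l+(2/\mu)^\rho\log(1/\eta)\bigr)$ rounds of Algorithm~\ref{alg-bi} adds $O(\log(1/\eta))$ vertices, giving $\tilde O\bigl((2/\mu)^\rho(k+\log(1/\eta))\log(1/\eta)\bigr)$ vertices from Step~2; Step~5 adds at most $2z$ more, matching the claimed size. The running time is dominated by the distance updates in Algorithm~\ref{alg-bi} and sums to the stated $O\bigl(n(2/\mu)^\rho(k+\log(1/\eta))\log(1/\eta)\bigr)$. The main technical obstacle is the weighted-outlier bookkeeping hidden in the two inequality arguments above: because $\pi$ is many-to-one, ``selecting an $(n-z)$-weight subset of $E$'' really means deciding, for each $e\in E$, how to split $w_e$ between inliers and outliers. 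Checking that this split can always be done consistently with the $X$-side budget of exactly $z$ outliers in both directions of $\pi$ — and appreciating that this is precisely why the algorithm keeps each $p\in X\setminus X_{\tilde r}$ as its own weight-$1$ atom rather than folding it into $E$ — is the subtle part of the argument.
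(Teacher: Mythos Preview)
Your proposal is correct and follows essentially the same route as the paper. The paper likewise obtains $\tilde r\le \mu\,r_{\mathtt{opt}}$ from Theorem~\ref{the-double-biapprox}, then sets up a bijection $f:X\to E$ (viewing each weighted $q\in E$ as $w(q)$ overlapping unit-weight copies) with $\mathtt{d}(p,f(p))\le\tilde r$; this is exactly your weight-preserving map $\pi$, just viewed in the other direction. For the transfer the paper phrases things through an auxiliary pair $r'_E,r'_X$ (the cost of pushing the optimal $X$-clustering to $E$ via $f$, and vice versa) and a short Claim that $|r'_E-r_X|,\ |r'_X-r_E|\le\mu r_{\mathtt{opt}}$, then sandwiches $r_E$ between $r_X\pm\mu r_{\mathtt{opt}}$ using $r'_E\ge r_E$ and $r'_X\ge r_X$. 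Your ``push $X^\star$ forward / pull $E$'s inliers back'' argument is the same sandwich without naming $r'_E,r'_X$; the weighted-outlier bookkeeping you flag as the delicate step is precisely what the paper's overlapping-copies bijection resolves. One small remark: the reason $X\setminus X_{\tilde r}$ is kept as individual weight-$1$ atoms is primarily that otherwise the map would move those points by more than $\tilde r$, not the outlier accounting per se.
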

\begin{proof}
Similar to Theorem~\ref{the-double-biapprox}, we know that  $|X_{\tilde{r}}|= n-2z$ and $\tilde{r}\leq 2\times \frac{\mu}{2} r_{\mathtt{opt}}=\mu r_{\mathtt{opt}}$ with probability at least $1-2\eta$. The size of $E$ is 
$$|X\setminus X_{\tilde{r}}|+O\big((\frac{2}{\mu})^\rho (k+\ln{\frac{1}{2\eta}})\ln \frac{1}{2\eta}\big)= 2z+O\big((\frac{2}{\mu})^\rho (k+\ln{\frac{1}{2\eta}})\ln{\frac{1}{2\eta}}\big).$$
Moreover, it is easy to see that the running time of Algorithm~\ref{alg-coreset} is $O\big((\frac{2}{\mu})^\rho (k+\ln\frac{1}{2\eta})n\ln{\frac{1}{2\eta}}\big)$.
Next, we show that $E$ is a qualified $\mu$-coreset of $X$. 

For each vertex $q\in E$, denote by $w(q)$ the weight of $q$; for the sake of convenience in our proof, we view each $q$ as a set of $w(q)$ overlapping unit weight vertices. Thus, from the construction of $E$, we can see that there is a bijective mapping $f$ between $X$ and $E$, where 
\begin{align}
\mathtt{d}\left(p,f(p)\right)\leq\tilde{r}\leq\mu r_{\mathtt{opt}}, \hspace{0.2in} \forall p\in X. \label{for-map}
\end{align}

Let $H=\{c_1, c_2, \cdots, c_k\}$ be any $k$ vertices of $X$. Suppose that $H$ induces $k$ clusters $\{A_1, A_2, \cdots, A_k\}$ (resp., $\{B_1, B_2, \cdots, B_k\}$) with respect to the problem of $k$-center clustering with $z$ outliers on $E$ (resp., $X$), where each $A_j$ (resp., $B_j$) has the cluster center $c_j$ for $1\leq j\leq k$. Let $r_E=\phi_0(E, H)$ and $r_X=\phi_0(X, H)$, respectively. Also, let $r'_E$ (resp., $r'_X$) be the smallest value $r$,  such that for any $1\leq j\leq k$, $f(B_j)\subseteq \mathtt{Ball}(c_j, r)$ (resp., $f^{-1}(A_j)\subseteq \mathtt{Ball}(c_j, r)$). We need the following claim (see the proof in Section~\ref{sec-proof-cla-core}).

\begin{claim}
\label{cla-core}
$|r'_E-r_X|\leq \mu r_{\mathtt{opt}}$ and $|r'_X-r_E|\leq \mu r_{\mathtt{opt}}$.
\end{claim}
In addition, since $\{f(B_1), \cdots, f(B_k)\}$ also form $k$ clusters for the instance $E$ with the fixed $k$ cluster centers of $H$, we know that
$r'_E\geq \phi_0(E, H)=r_E$. 
Similarly, we have 
$r'_X\geq r_X$.  
Combining Claim~\ref{cla-core}, we have 
\begin{align*}
r_X-\mu r_{\mathtt{opt}}\leq \underbrace{r'_X-\mu r_{\mathtt{opt}}\leq r_E}_{\text{by Claim~\ref{cla-core}}}\leq \underbrace{r'_E\leq r_X+\mu r_{\mathtt{opt}}}_{\text{by Claim~\ref{cla-core}}}.
\end{align*}
So $|r_X-r_E|\leq \mu r_{\mathtt{opt}}$, i.e., $\phi_0(E, H)\in \phi_0(X, H)\pm \mu r_{\mathtt{opt}}\subseteq(1\pm \mu) \phi_0(X, H)$.  Therefore $E$ is a $\mu$-coreset of $(X,\mathtt{d})$.
 \end{proof}
 \begin{remark}
\label{rem-coreset}
	\textbf{(1)} It is worth emphasizing that the uniform sampling idea in Section~\ref{sec-core1} cannot avoid the error on the number of excluded outliers; the sample size will become infinity if not allowing to remove more than $z$ outliers (i.e., $\frac{1}{\epsilon}=\infty$). But our proposed coreset method in Theorem~\ref{the-coreset} can guarantee the clustering quality for excluding exactly $z$ outliers.

	\textbf{(2)} The coefficient ``$2$'' of $z$ in the coreset size actually can be further reduced by modifying the value of $\epsilon$ in Step 2 of Algorithm~\ref{alg-coreset} (we set $\epsilon=1$ just for simplicity). In general, the size of $E$ is 
	\begin{eqnarray*}
   (1+\epsilon)z+O\big(\frac{1}{\epsilon}(\frac{2}{\mu})^\rho (k+\ln\frac{1}{2\eta})\ln\frac{1}{2\eta}\big)
	\end{eqnarray*} 
	and  the construction time is $O(n\frac{1}{\epsilon}(\frac{2}{\mu})^\rho (k+\ln\frac{1}{2\eta})\ln\frac{1}{2\eta} )$. 	
\end{remark}

\subsection{When the Doubling Dimension $\rho$ Is Not Given} 

In Algorithm~\ref{alg-coreset}, we run Algorithm~\ref{alg-bi} $t=\frac{cl}{1-\eta}$ rounds. But when the doubling dimension $\rho$ is not given, we cannot determine the values of $l$ and $t$. We are aware of several techniques for estimating the doubling dimension of a given data set~\citep{har2006fast}. \cite{DBLP:journals/corr/abs-1802-09205} also mentioned that their coreset construction method can be applied to the case that even $\rho$ is not given. These ideas mainly rely on the fact that if one runs the Gonzalez's $k$-center clustering algorithm on the data, the obtained radius can be significantly reduced due to the property of doubling metrics. However, we need to emphasize that these doubling dimension estimation techniques cannot be applied to our problem under Assumption~\ref{ass-1}, since the outliers and inliers are mixed and only the inliers have the nice property of doubling metrics. 
We perform the following modification for Algorithm~\ref{alg-coreset}. \textbf{Roughly speaking, we decompose Step 2 of Algorithm~\ref{alg-coreset} into two substeps.}

\textbf{(1)} First, we run 
Algorithm \ref{alg-bi}  $\tilde{k}=\frac{ck}{1-\eta}$ rounds and then obtain the radius $\tilde{r}=\phi_{1}(E,X)\leq 2r_{\mathtt{opt}}$. Now $X$ is partitioned into $\tilde{k}$ clusters $H_1, H_2, \cdots, H_{\tilde{k}}$ with excluding $2z$ outliers.  Each $H_j\cap X_{\mathtt{opt}}$ has a constant doubling dimension $\rho$ (note that each $H_j$ may also contain some points from $X\setminus X_{\mathtt{opt}}$). Also, the size $\Big|\big(\cup^{\tilde{k}}_{j=1}H_j\big)\setminus X_{\mathtt{opt}}\Big|\leq z$, and it implies
\begin{eqnarray*}
\Big|\big(\cup^{\tilde{k}}_{j=1}H_j\big)\cap X_{\mathtt{opt}}\Big|=\Big| \cup^{\tilde{k}}_{j=1}H_j \Big|-\Big|\big(\cup^{\tilde{k}}_{j=1}H_j\big)\setminus X_{\mathtt{opt}}\Big|\geq n-2z-z= n-3z. 
\end{eqnarray*}
Therefore, if we view the instance $(X,\mathtt{d})$ as an instance of $\tilde{k}$-center clustering with $3z$ outliers,  the optimal radius (denote by $r^{(-3z)}_{\mathtt{opt}}$) should be at most $\tilde{r}$. Overall, we have the upper and lower bounds for $\tilde{r}$:
\begin{eqnarray*}
r^{(-3z)}_{\mathtt{opt}}\leq \tilde{r}\leq 2r_{\mathtt{opt}}.
\end{eqnarray*}

\textbf{(2)} Then, if we run Step~3 of  Algorithm~\ref{alg-bi} (replacing ``$z$'' by ``$3z$'') with at most $t=\frac{cl'}{1-\eta}$ rounds where
\begin{eqnarray*}
l'=\big(\frac{r^{(-3z)}_{\mathtt{opt}}}{\frac{1}{4}\mu \tilde{r}}\big)^\rho \tilde{k}\leq\big(\frac{r^{(-3z)}_{\mathtt{opt}}}{\frac{1}{4}\mu r^{(-3z)}_{\mathtt{opt}}}\big)^\rho \tilde{k}=O\big((\frac{4}{\mu})^\rho k\big),
\end{eqnarray*}
the obtained radius (excluding the farthest $6z$ vertices) should be at most 
$$2\times \frac{1}{4}\mu \tilde{r}\leq 2\times\frac{1}{2}\mu r_{\mathtt{opt}}=\mu r_{\mathtt{opt}}.$$ 
Then we can use the similar idea of the proof of Theorem~\ref{the-coreset} to show that the obtained set $E$ is a qualified $\mu$-coreset. 

Overall, we have Algorithm~\ref{alg-coreset2} for the case that the doubling dimension $\rho$ is not given. The time complexity is $O(\frac{cl'}{1-\eta} n)=O\big(n(\frac{4}{\mu})^\rho (k+\ln\frac{1}{2\eta})\ln\frac{1}{2\eta}\big)$, and the coreset size is $6z+O\big((\frac{4}{\mu})^\rho (k+\ln\frac{1}{2\eta})\ln\frac{1}{2\eta}\big)$.

\begin{algorithm}[tb]
   \caption{Coreset Construction in Doubling Metrics with Unknown $\rho$}
   \label{alg-coreset2}
\begin{algorithmic}
  \STATE {\bfseries Input:} An instance $(X,\mathtt{d})$ of metric $k$-center clustering with $z$ outliers, and $|X|=n$; parameters $\mu\in(0,1)$ and $\eta\in (0,1/2)$.
   \STATE
\begin{enumerate}

\item Set $\epsilon=1$ and run Algorithm~\ref{alg-bi} $t=\frac{ck}{1-\eta}$ rounds where $c=2+\frac{2}{k(1-\eta)}\ln\frac{1}{\eta}$.
Denote by $\tilde{r}=\phi_1(X,E)$ the maximum distance between $E$ and $X$ by excluding the farthest $2z$ vertices, after the final round of Algorithm~\ref{alg-bi}.

\item Continue to run Step 3 of  Algorithm~\ref{alg-bi} (but replacing ``$z$'' by ``$3z$'') until $\phi_5(X,E)\leq \frac{1}{2}\mu \tilde{r}$ (i.e., excluding $6z$ outliers).

\item  Set $\tilde{r}'=\phi_5(X,E)$. Let 
$X_{\tilde{r}'}=\{p\mid p\in X \text{ and } \mathtt{d}(p, E)\leq \tilde{r}'\}$.

\item For each vertex $p\in X_{\tilde{r}'}$, assign it to its nearest neighbor in $E$; for each vertex $q\in E$, let its weight be the number of vertices assigning to it.

\item Add $X\setminus X_{\tilde{r}'}$ to $E$;  each vertex of $X\setminus X_{\tilde{r}'}$ has weight $1$.

\end{enumerate}
  \STATE {\bfseries Output} $E$ as the coreset.
\end{algorithmic}
\end{algorithm}

\begin{theorem}
\label{the-coreset-unknown}
With  probability at least $1-2\eta$, Algorithm~\ref{alg-coreset2} outputs a $\mu$-coreset $E$ of $k$-center clustering with $z$ outliers. The size of $E$ is at most $6z+O\big((\frac{4}{\mu})^\rho (k+\ln\frac{1}{2\eta})\ln\frac{1}{2\eta}\big)$, and the construction time is $O((\frac{4}{\mu})^\rho (k+\ln\frac{1}{2\eta})n\ln\frac{1}{2\eta})$.
\end{theorem}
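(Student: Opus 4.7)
The plan is to analyze the two phases of Algorithm~\ref{alg-coreset2} in turn and then reuse the weight-assignment argument of Theorem~\ref{the-coreset}. For phase~1, I would invoke Theorem~\ref{the-biapprox} with $\epsilon=1$ and $t=ck/(1-\eta)$ rounds: with probability at least $1-\eta$ the algorithm produces $E$ of size $\tilde{k}=O(k)$ satisfying $\tilde{r}=\phi_1(X,E)\leq 2r_{\mathtt{opt}}$. The key calibration is that the Voronoi partition of $X$ around $E$, after discarding the farthest $2z$ vertices, certifies that when $(X,\mathtt{d})$ is viewed as a $\tilde{k}$-center clustering instance with $3z$ outliers, the optimal radius $r^{(-3z)}_{\mathtt{opt}}$ satisfies $r^{(-3z)}_{\mathtt{opt}}\le\tilde{r}$; together with the trivial $r^{(-3z)}_{\mathtt{opt}}\le r_{\mathtt{opt}}$ this yields the two-sided sandwich $r^{(-3z)}_{\mathtt{opt}}\le\tilde{r}\le 2r_{\mathtt{opt}}$, which will replace the (unknown) direct comparison to $r_{\mathtt{opt}}$ in phase~2.

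For phase~2, I would exploit Assumption~\ref{ass-1}: each of the $k$ optimal inlier clusters has doubling dimension $\rho$, so applying Definition~\ref{def-dd} recursively, each is coverable by $\big(r^{(-3z)}_{\mathtt{opt}}/(\mu\tilde{r}/4)\big)^\rho\le (4/\mu)^\rho$ balls of radius $\mu\tilde{r}/4$. Hence there is an $l'$-center solution on the inliers of cost at most $\mu\tilde{r}/4$ with $l'=O((4/\mu)^\rho k)$. Reapplying the super-martingale argument from Theorem~\ref{the-biapprox} to the continuation of Step~3 of Algorithm~\ref{alg-bi} (with $z$ replaced by $3z$, so that the discard budget becomes $(1+1)\cdot 3z=6z$), after $O(l'/(1-\eta))$ additional rounds we will, with probability at least $1-\eta$, achieve $\phi_5(X,E)\leq 2\cdot\mu\tilde{r}/4\leq\mu r_{\mathtt{opt}}$, which is the stopping criterion. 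A union bound across the two phases delivers the claimed $1-2\eta$ success probability, and the total round count yields both the $O(n(4/\mu)^\rho(k+\ln\frac{1}{2\eta})\ln\frac{1}{2\eta})$ construction time and the $O((4/\mu)^\rho(k+\ln\frac{1}{2\eta})\ln\frac{1}{2\eta})$ bound on the number of added centers.

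With $\tilde{r}'=\phi_5(X,E)\leq\mu r_{\mathtt{opt}}$ in hand, I would conclude exactly as in Theorem~\ref{the-coreset}: weight each surviving center by the size of its Voronoi cell inside $X_{\tilde{r}'}$, insert the $\leq 6z$ uncovered vertices with weight $1$, and obtain a bijection $f:X\to E$ with $\mathtt{d}(p,f(p))\leq\mu r_{\mathtt{opt}}$. Claim~\ref{cla-core}'s bookkeeping then transfers verbatim and gives $\phi_0(E,H)\in(1\pm\mu)\phi_0(X,H)$ for every $k$-vertex set $H$, completing the coreset property; the final size bound $6z+O((4/\mu)^\rho(k+\ln\frac{1}{2\eta})\ln\frac{1}{2\eta})$ is immediate. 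The principal obstacle is precisely the ignorance of $\rho$, which forbids a fixed iteration budget of the form used in Algorithm~\ref{alg-coreset}; the two-phase design sidesteps this by replacing the budget with the data-dependent stopping criterion $\phi_5(X,E)\leq\mu\tilde{r}/2$, and the doubling property is invoked only in the analysis to certify that this criterion is reached within $O((4/\mu)^\rho k)$ rounds, so the algorithm itself never needs to know $\rho$.
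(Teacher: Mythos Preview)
Your two–phase plan, the sandwich $r^{(-3z)}_{\mathtt{opt}}\le\tilde r\le 2r_{\mathtt{opt}}$, the continuation of Step~3 with $z\to 3z$, and the final reuse of the bijection/Claim~\ref{cla-core} argument from Theorem~\ref{the-coreset} all match the paper's proof. There is, however, one genuine mismatch in your phase~2 doubling calculation.

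You write that ``each of the $k$ optimal inlier clusters\ldots\ is coverable by $\big(r^{(-3z)}_{\mathtt{opt}}/(\mu\tilde r/4)\big)^\rho$ balls of radius $\mu\tilde r/4$.'' But the $k$ optimal clusters have radius $r_{\mathtt{opt}}$, not $r^{(-3z)}_{\mathtt{opt}}$; recursing Definition~\ref{def-dd} from radius $r_{\mathtt{opt}}$ down to $\mu\tilde r/4$ costs $\big(4r_{\mathtt{opt}}/(\mu\tilde r)\big)^\rho$ balls per cluster. To bound this by $(4/\mu)^\rho$ you would need $r_{\mathtt{opt}}\le\tilde r$, whereas phase~1 only supplies the \emph{opposite} direction $\tilde r\le 2r_{\mathtt{opt}}$. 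If phase~1 happens to return a small $\tilde r$, the stopping criterion $\phi_5(X,E)\le\tfrac12\mu\tilde r$ becomes much more stringent than $\mu r_{\mathtt{opt}}$, and your bound on $l'$ collapses.

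This is exactly why the paper switches to the $\tilde k$-center-with-$3z$-outliers viewpoint: the phase-1 Voronoi cells $H_1,\dots,H_{\tilde k}$, intersected with $X_{\mathtt{opt}}$, give $\tilde k=O(k)$ subsets of the doubling metric $X_{\mathtt{opt}}$, each of radius $\le\tilde r$, jointly covering $\ge n-3z$ points. Now the recursion starts at radius $\tilde r$, so each cell splits into at most $\big(\tilde r/(\mu\tilde r/4)\big)^\rho=(4/\mu)^\rho$ balls of radius $\mu\tilde r/4$, giving $l'=(4/\mu)^\rho\tilde k=O\big((4/\mu)^\rho k\big)$ as claimed. (Equivalently, one uses $r^{(-3z)}_{\mathtt{opt}}\le\tilde r$ in the numerator, but then the reference clustering must be the $\tilde k$-cluster one, not the original $k$-cluster one.) With this correction, the rest of your outline goes through and coincides with the paper's argument.
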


We can apply the same idea of Remark~\ref{rem-coreset}~(2) to reduce the coreset size to be $3(1+\epsilon)z+O\big(\frac{1}{\epsilon}(\frac{4}{\mu})^\rho (k+\ln\frac{1}{2\eta})\ln\frac{1}{2\eta}\big)$, and meanwhile, the time complexity becomes $O\big(\frac{n}{\epsilon}(\frac{4}{\mu})^\rho (k+\ln\frac{1}{2\eta})\ln\frac{1}{2\eta} \big)$.

\section{Coreset for Distributed Data}
\label{sec-distriubted}
In this section, we consider the coreset for distributed clustering in the coordinator model \citep{DURIS199890}. Suppose the data $X=\sqcup_{i=1}^sX_i$ are distributed disjointly among $s\geq 2$ sites; all the sites can communicate with a central server. 
 Let $O=\sqcup_{i=1}^sO_i$, where $O_i\subset X_i$, be the set of outliers in the optimal solution; also suppose each  $|O_i|=z_i^*$. Let $X\setminus O=\sqcup_{j=1}^kC_j$ be the $k$ optimal clusters.  Note that the value of each $z_i^*$ is unknown. Thus a straightforward approach is to compute a coreset for  the $k$-center clustering with $z$ outliers on each $X_i$, and directly send the obtained coresets to the central server.  
Let $B$ be the information encoding a point. Obviously this approach takes a communication cost 
\begin{eqnarray}
\Big(2sz+s\cdot  O\big((\frac{2}{\mu})^\rho(k+\log\frac{1}{2\eta})\ln\frac{1}{2\eta}\big)\Big)B,\label{for-comm-baseline}
\end{eqnarray}
which can be to too high if $z$ is large (e.g., if $z=5\%n$ and $s=10$, the cost can be larger than $nB$).

In this section, we show that the framework for distributed $k$-median/means clustering with outliers developed by \citet{guha2017distributed} can also be applied to the $k$-center clustering with $z$ outliers problem with our proposed coreset method in Section~\ref{sec-doubling}; in particular, the term ``$2sz$'' of (\ref{for-comm-baseline}) can be reduced to be ``$4z$''. The high level idea of \citet{guha2017distributed} is as follows. First, we need to design a set of numbers $\{z_1,z_2,\dots,z_s\}$, where each $z_i$ is an upper bound of $z_i^*$ for $1\leq i\leq s$ and their sum $\sum^s_{i=1}z_i\leq 2z$. Note that the requirement ``$\sum^s_{i=1}z_i\leq 2z$'' is important for bounding the total communication cost. Each site $i$ runs the coreset algorithm for $k$-center clustering with $z_i$ outliers on $X_i$ to construct a local coreset. Then each site sends the weighted points of the local coreset to the central server. Finally the central server aggregates the weighted points to form a global coreset. The key challenge is to compute the set $\{z_1,z_2,\dots,z_s\}$ that are suitable for our coreset method. 

Below we introduce some notations first. 
\begin{enumerate}
	\item  {\bf $r_{\mathtt{opt}}(X_i,k,z^*_i)$:} the optimal radius of  $k$-center clustering with $z^*_i$ outliers  on $X_i$.
	\item Given a set of $2$-dimensional points $A=\{(x_1,y_1),\dots,(x_l,y_l)\}\subset\mathbb{R}^2$ where $x_1<x_2<\dots<x_l$, we define the corresponding piecewise function  $h_A(\cdot)$ from $[x_1,\infty)$ to $\mathbb{R}$: $h_A(x)=y_i$ if $x_i\leq x< x_{i+1}$. Here we define $x_{l+1}=\infty$. See Figure~\ref{fig-piecewise} for an illustration.

\end{enumerate}
\begin{figure}[tbp]
\begin{center}
    \includegraphics[width=0.55\textwidth]{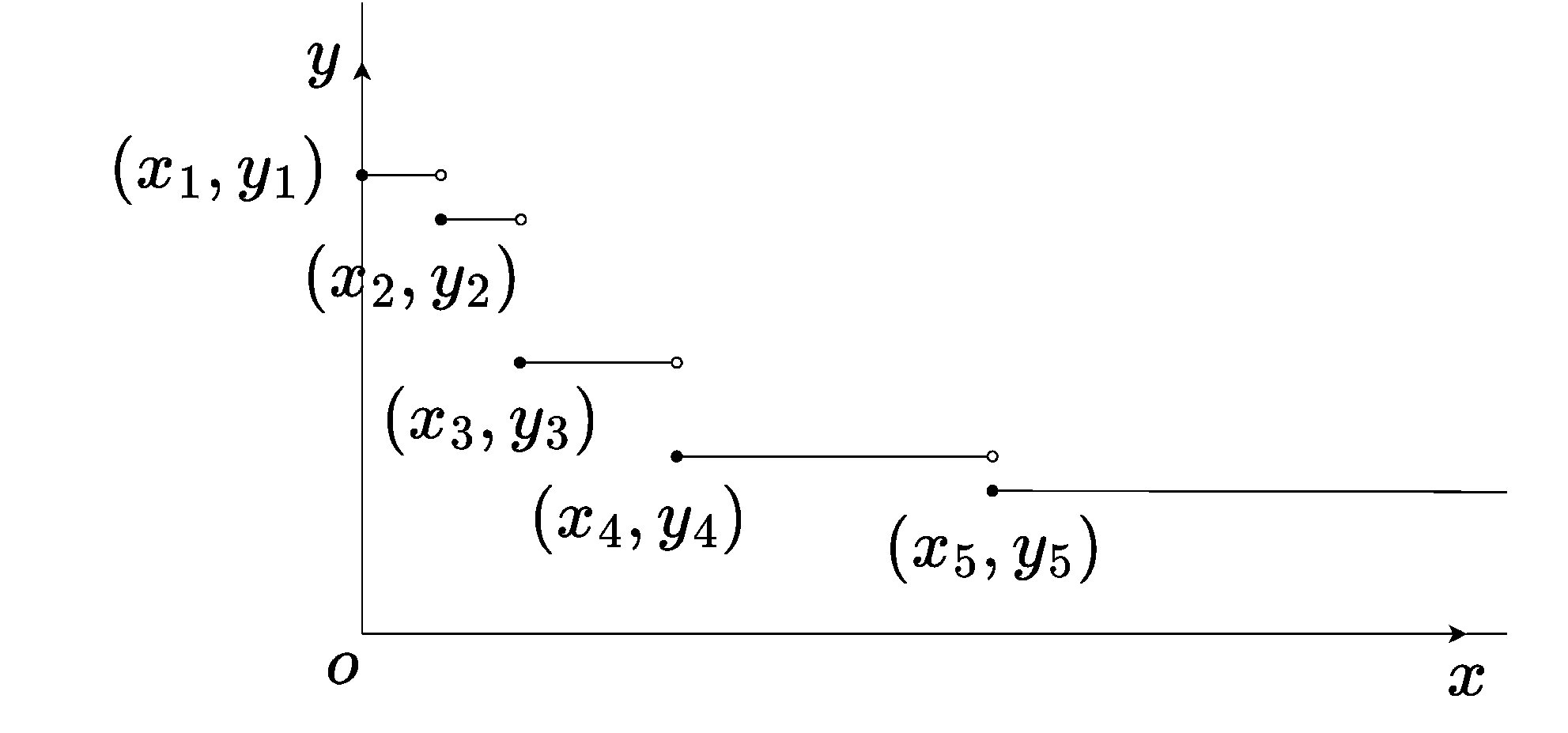}  
    \end{center}
\captionsetup{format=hang}
  \caption{An illustration for  a non-increasing  piecewise function $h_A(\cdot)$ with $A=\{(x_1,y_1),(x_2,y_2),(x_3,y_3),(x_4,y_4),(x_5,y_5)\}$.}   
   \label{fig-piecewise}
\end{figure}

For any pair of $(x_i, y_i)$ and $(x_j, y_j)$, we define their \textbf{lexicographical order:}
\begin{equation}
\left(x_i, y_i\right) \prec\left(x_j, y_j\right) \quad \text { if }\left\{\begin{array}{l}
x_i<x_j ; \text { or } \\\nonumber
x_i=x_j \text { and } y_i<y_j.
\end{array}\right.
\end{equation}

\begin{algorithm}[tb]
   \caption{Distributed Coreset Construction}
   \label{alg-distributed_coreset}
\begin{algorithmic}
  \STATE {\bfseries Input:} An instance $(X,d)$ of distributed metric $k$-center clustering with $z$ outliers, and $X=\sqcup_{i=1}^s X_i$; the  parameters  $\eta\in(0,1/2),\mu\in (0,1)$. 
   \STATE
\begin{enumerate}
\item  Let $[z]=\{0,1, 2, \cdots, z\}$ and  $\Gamma=\left\{2^{r}: 1 \leq r \leq\left\lfloor\log _{2} z\right\rfloor, r \in \mathbb{Z}\right\} \cup\{0, z\}$.  Run the following two-round communication  between the sites and the central server. 

\item \textbf{(1st round) In each site $i$:} run Algorithm \ref{alg-coreset} to obtain the radius $\tilde r_{i,q}$ and the coreset $E_{i,q}$ for each $q\in\Gamma$, where $\tilde r_{i,q}$ is the radius ``$\tilde r$'' in Algorithm \ref{alg-coreset} with setting the number of outliers to be $q$.
\begin{enumerate}
\item Define the set $A_i=\{(q,\tilde r_{i,q})\mid q\in\Gamma\}$, and construct the corresponding piecewise function $h_{A_i}$;
\item Send the function  $h_{A_i}$ to the central server. 
\end{enumerate}

\item \textbf{(1st round) In the central server:} sort the $s(z+1)$ pairs $\{( h_{A_i}(q),i)\mid i=1, 2, \cdots, s; q\in [z]\}$ with a lexicographical decreasing order; select the $(2z+1)$-th largest item, say ``$(h_{A_{i_0}}(q_0),i_0)$'', and broadcast it to all the sites.

\item \textbf{(2nd round) In each site $i$:} \begin{enumerate}

\item If $i\not=i_0$, let $z_i=\min\{q\in [z]\colon \left(h_{A_i}(q),i\right)\prec (h_{A_{i_0}}(q_0),i_0)\}$ (if the set is $\emptyset$, let  $z_i= z$);
\item Else, $i=i_0$, let $z_{i_0}= \min\{q\in\Gamma\colon h_{A_{i_0}}(q)= h_{A_{i_0}}(q_0)\}$;

\item Send $E_{i,z_i}$ to the central server.

\end{enumerate}  

\item \textbf{(2nd round) In the central server:} take the union $E=\cup^s_{i=1}E_{i,z_i}$ as the final coreset.

\end{enumerate}

\end{algorithmic}
\end{algorithm}

\begin{theorem}
\label{the-distributed}

	With probability at least  $1-2s(2+\log_2 z)\eta$, Algorithm \ref{alg-distributed_coreset} returns a $2\mu$-coreset $E$ of $k$-center clustering with $z$ outliers for the distributed input $X=\sqcup_{i=1}^sX_i$. The total communication complexity is  $\Big(4z+O((\frac{2}{\mu})^{\rho}s(k+\ln\frac{1}{2\eta})\ln\frac{1}{2\eta})\Big)B$  over $2$ rounds, where $B$ is the communication cost for sending one point. The running time in each site is  $O((\frac{2}{\mu})^{\rho}(k+\ln\frac{1}{2\eta})n_i\ln\frac{1}{2\eta}\log_{2}z)$.
\end{theorem}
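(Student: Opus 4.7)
The plan is to prove the theorem in four steps: bound the per-site radius threshold, bound $\sum_i z_i$, establish the $2\mu$-coreset property, and verify the resource counts. Starting with the threshold, let $z_i^*$ denote the number of optimal outliers in site $i$, so $\sum_i z_i^*=z$. Because $\Gamma$ contains every power of $2$ up to $\lfloor\log_2 z\rfloor$ together with $\{0,z\}$, for each $i$ there exists $q_i^{\dagger}\in\Gamma$ with $z_i^*\leq q_i^{\dagger}\leq\max(2z_i^*,1)$. The global optimum restricted to $X_i$ is a feasible local solution with $z_i^*\leq q_i^{\dagger}$ outliers, hence $r_{\mathtt{opt}}(X_i,k,q_i^{\dagger})\leq r_{\mathtt{opt}}$, and Theorem~\ref{the-coreset} applied locally gives $\tilde r_{i,q_i^{\dagger}}\leq\mu\,r_{\mathtt{opt}}$ with probability at least $1-2\eta$. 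Since $h_{A_i}$ is non-increasing in $q$, at most $q_i^{\dagger}\leq 2z_i^*$ integers $q\in[z]$ can satisfy $h_{A_i}(q)>\mu r_{\mathtt{opt}}$; summing over sites, at most $2z$ of the $s(z+1)$ lexicographic pairs exceed this value, so the $(2z+1)$-th largest pair $(h_{A_{i_0}}(q_0),i_0)$ satisfies $h_{A_{i_0}}(q_0)\leq\mu r_{\mathtt{opt}}$.

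Because the breakpoints of the step function $h_{A_i}$ lie in $\Gamma$, a direct case analysis on $i<i_0$, $i>i_0$, and $i=i_0$ shows that the number of pairs on each site weakly dominating the threshold matches the $z_i$ prescribed by the algorithm, so $\sum_i z_i\leq 2z$ follows from the definition of the $(2z+1)$-th largest. As a byproduct $z_i\in\Gamma$ for every $i$, hence $E_{i,z_i}$ was precomputed in round~1 and $\tilde r_{i,z_i}=h_{A_i}(z_i)\leq\mu r_{\mathtt{opt}}$. Exactly as in the proof of Theorem~\ref{the-coreset}, on each site there is a multiset bijection $f_i\colon X_i\to E_{i,z_i}$ with $\mathtt{d}(p,f_i(p))\leq\tilde r_{i,z_i}\leq\mu r_{\mathtt{opt}}$, and concatenation yields a global bijection $f\colon X\to E$ with the same per-point guarantee. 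The triangle-inequality argument of Claim~\ref{cla-core} then produces $|\phi_0(X,H)-\phi_0(E,H)|\leq 2\mu\,\phi_0(X,H)$ for every $k$-center set $H$, giving the $2\mu$-coreset property; a union bound of Theorem~\ref{the-coreset}'s $2\eta$ guarantee over the $s|\Gamma|=s(2+\lfloor\log_2 z\rfloor)$ local invocations yields the overall failure probability $2s(2+\log_2 z)\eta$.

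The resource bounds follow by bookkeeping. Round~1 ships each $h_{A_i}$ in $O(\log z)$ words together with the coordinator's single broadcast, a lower-order term; round~2 sends $E_{i,z_i}$ of size $2z_i+\tilde O((2/\mu)^\rho k)$ by Theorem~\ref{the-coreset}, which sums with $\sum z_i\leq 2z$ to $4z+\tilde O((2/\mu)^\rho sk)$ points at cost $B$ apiece. The per-site runtime is dominated by the $O(\log z)$ invocations of Algorithm~\ref{alg-coreset}, giving $\tilde O\bigl((2/\mu)^\rho(k+\ln\tfrac{1}{2\eta})n_i\ln\tfrac{1}{2\eta}\log_2 z\bigr)$. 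The principal obstacle is the threshold step, where one must couple the dyadic structure of $\Gamma$ with the single-site guarantee of Theorem~\ref{the-coreset} to certify that at most $2z$ of the lexicographic pairs can exceed $\mu r_{\mathtt{opt}}$; once this and the combinatorial bound $\sum z_i\leq 2z$ are in place, the remaining ingredients are either direct accounting or adaptations of the single-site proof.
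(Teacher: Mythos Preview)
Your overall plan is sound, and your direct counting argument for the threshold---at most $\sum_i q_i^{\dagger}\le 2z$ of the lexicographic pairs exceed the target, so the $(2z+1)$-th largest lies below it---is a legitimate alternative to the paper's route through the minimax characterization (Lemma~\ref{lem-guess}). The two approaches buy the same thing: the paper first proves $\{z_i\}$ is the minimax optimum and then compares it to the feasible point $\{\hat z_i\}$, whereas you bound the threshold directly and read off $\sum_i z_i\le 2z$ from the rank.

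There is, however, a genuine gap in your first step. You write ``the global optimum restricted to $X_i$ is a feasible local solution \ldots\ hence $r_{\mathtt{opt}}(X_i,k,q_i^{\dagger})\leq r_{\mathtt{opt}}$.'' In the metric setting of Definition~\ref{def-outlier}, the centers for the local problem must be chosen from $X_i$, and the global optimal centers need not lie in $X_i$; the restriction is therefore not feasible and the inequality is false in general. This is exactly the content of Lemma~\ref{cl-distributed}: one must replace each global center by a surrogate in $(X_i\setminus O_i)\cap C_j$, which costs a factor $2$ by the triangle inequality and yields only $r_{\mathtt{opt}}(X_i,k,z_i^{*})\leq 2r_{\mathtt{opt}}$. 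Propagating this through your chain gives $\tilde r_{i,q_i^{\dagger}}\leq \mu\,r_{\mathtt{opt}}(X_i,k,q_i^{\dagger})\leq 2\mu\,r_{\mathtt{opt}}$, hence $h_{A_{i_0}}(q_0)\leq 2\mu\,r_{\mathtt{opt}}$ and a per-point displacement $\mathtt{d}(p,f(p))\leq 2\mu\,r_{\mathtt{opt}}$. That is precisely what Claim~\ref{cla-core} needs to produce a $2\mu$-coreset. As written, your argument is internally inconsistent: a displacement of only $\mu\,r_{\mathtt{opt}}$ would, by the same Claim~\ref{cla-core}, give a $\mu$-coreset, not the $2\mu$-coreset you conclude. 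Once you insert the factor $2$ from Lemma~\ref{cl-distributed}, the remaining accounting in your outline goes through.
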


\begin{remark}

 We can replace $\eta$ by $\frac{\eta}{2s(2+\log_2 z)}$ in Algorithm \ref{alg-distributed_coreset} to achieve a success probability   $1-\eta$. The communication complexity will be $$\Big(4z+ O\big((\frac{2}{\mu})^{\rho}s(k+\ln\frac{s\log_2 z}{\eta})(\ln\frac{s \log_2 z}{\eta})\big)\Big) B.$$ 
\end{remark}

Before proving Theorem~\ref{the-distributed}, we introduce Lemma~\ref{cl-distributed} and Lemma~\ref{lem-guess} first.

\begin{lemma}
\label{cl-distributed}
	$2r_{\mathtt{opt}}\geq \max_{1\leq i\leq s}r_{\mathtt{opt}}(X_i,k,z_i^*)$.
\end{lemma}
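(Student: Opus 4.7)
The plan is to prove $r_{\mathtt{opt}}(X_i,k,z_i^*) \le 2 r_{\mathtt{opt}}$ for each site $i$ by a standard triangle-inequality/pigeonhole argument, constructing an explicit feasible solution for the local instance $(X_i, k, z_i^*)$ out of the global optimal clustering.

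First I would fix a site $i$ and recall the setup: the global optimum partitions $X \setminus O = \sqcup_{j=1}^k C_j$ into $k$ clusters, each contained in a ball of radius $r_{\mathtt{opt}}$ around some center $c_j$, and $|O_i| = z_i^*$ are the outliers lying in $X_i$. Restricting to site $i$, the set $X_i \setminus O_i$ is naturally partitioned by the global clusters into the pieces $\{C_j \cap X_i\}_{j=1}^k$. Let $\mathcal{J}_i = \{ j : C_j \cap X_i \ne \emptyset\}$, and for each $j \in \mathcal{J}_i$ pick an arbitrary representative $p_j^{(i)} \in C_j \cap X_i$.

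Next I would show that $\{p_j^{(i)} : j \in \mathcal{J}_i\}$ (padded with arbitrary extra points of $X_i$ if $|\mathcal{J}_i| < k$, to obtain exactly $k$ centers from $X_i$) together with the chosen outlier set $O_i$ constitutes a feasible solution for the local instance of $k$-center clustering with $z_i^*$ outliers on $X_i$. Indeed, for every $q \in C_j \cap X_i$ with $j \in \mathcal{J}_i$, both $q$ and $p_j^{(i)}$ lie within distance $r_{\mathtt{opt}}$ of the global center $c_j$, so by the triangle inequality $\mathtt{d}(q, p_j^{(i)}) \le 2 r_{\mathtt{opt}}$. Hence every point in $X_i \setminus O_i$ is covered by the ball of radius $2 r_{\mathtt{opt}}$ around its representative, and we discard exactly $z_i^*$ outliers from $X_i$. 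This feasible solution witnesses $r_{\mathtt{opt}}(X_i, k, z_i^*) \le 2 r_{\mathtt{opt}}$. Taking the maximum over $i$ yields the claim.

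There is essentially no serious obstacle; the only mild subtlety is that in the metric setting of Definition~\ref{def-outlier} the centers must be chosen from the point set of the local instance (i.e., from $X_i$), which is exactly why we take representatives from $C_j \cap X_i$ rather than reusing the global centers $c_j$ (which may fail to lie in $X_i$). This representative-plus-triangle-inequality trick accounts for the factor of $2$ and is the essence of the bound. If the instance were Euclidean with unrestricted centers, one could instead reuse the $c_j$ directly and improve the bound, but for the metric formulation used here the factor of $2$ is exactly what the argument gives.
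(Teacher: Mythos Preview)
Your proposal is correct and follows essentially the same argument as the paper: pick a representative from each nonempty $C_j\cap X_i$ as a surrogate center, discard $O_i$, and use the triangle inequality to bound each distance by $2r_{\mathtt{opt}}$. The paper likewise remarks that in the Euclidean case one could reuse the global centers directly and avoid the factor~$2$.
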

Note that though $X_i\subset X$, the optimal radius $r_{\mathtt{opt}}(X_i,k,z_i^*)$ of site $i$ is not necessary to be $\leq r_{\mathtt{opt}}$, since the cluster centers of $X$ may not belong to $X_i$ (but for the problem in Euclidean space, $r_{\mathtt{opt}}(X_i,k,z_i^*)$ is always no larger than $ r_{\mathtt{opt}}$ since the cluster centers can be any points in the space). 
$X_i\setminus O_i$ is the set of inliers of site $i$. For each optimal cluster $C_j$, $1\leq j\leq k$, we can arbitrarily take an inlier from 
$(X_i\setminus O_i)\cap C_j$ as the surrogate cluster center (if $(X_i\setminus O_i)\cap C_j=\emptyset$, we just ignore this cluster). From the triangle inequality, we know $r_{\mathtt{opt}}(X_i,k,z_i^*)\leq 2r_{\mathtt{opt}}$ and thus obtain  Lemma~\ref{cl-distributed}.

In the following analysis, we assume that the function $h_{A_i}$ in Step 2(a) is \textbf{non-increasing} for each $i=1, 2, \cdots, s$. Actually this assumption is easy to satisfy. If there exist a couple $q<q'$ such that  $\tilde r_{i,q}>\tilde r_{i,q'}$, we can simply replace the coreset $E_{i,q}$ by the coreset $E_{i,q'}$ and let $\tilde r_{i,q}=\tilde r_{i,q'}$ in Step 2 of Algorithm \ref{alg-distributed_coreset}.  
The following lemma illustrates the key properties of the obtained values $z_1, z_2, \cdots, z_s$ in Algorithm \ref{alg-distributed_coreset}.

\begin{lemma}
\label{lem-guess}
	The  set $\{z_1,\dots,z_s\}$ obtained in Step 4 of Algorithm~\ref{alg-distributed_coreset} is the optimal solution for  the following minimax problem:
	
\begin{equation}
\label{eq:minimax}
\begin{array}{cll}
&\min\limits_{q_1,\dots,q_s}\max\limits_{1\leq i\leq s} h_{A_i}(q_i) & \\
\text {s.t. } & \sum\limits_{i=1}^s q_i\leq 2 z, \\
& q_i\in [z],~~ i=1,\dots,s. & \\
\end{array}
\end{equation}
\end{lemma}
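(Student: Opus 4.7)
The plan is to verify that $(z_1,\dots,z_s)$ achieves the minimax value $v^\star := h_{A_{i_0}}(q_0)$, where $(v^\star,i_0)$ is the $(2z+1)$-th largest pair in the lex-decreasing sort of Step~3. Concretely I would show (a) $\sum_i z_i = 2z$ with $\max_i h_{A_i}(z_i)\le v^\star$ (feasibility attaining $v^\star$), and (b) no feasible vector beats $v^\star$ (optimality).

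For (a), I would use the monotonicity of each $h_{A_i}$ to interpret $z_i$ as a prefix length. Since $h_{A_i}$ is non-increasing on $[z]$, the set $\{q\in[z] : (h_{A_i}(q),i)\succ(v^\star,i_0)\}$ is an initial segment $\{0,\dots,N_i-1\}$, where $N_i=\beta_i := |\{q : h_{A_i}(q)>v^\star\}|$ when $i\le i_0$ and $N_i=\alpha_i := |\{q : h_{A_i}(q)\ge v^\star\}|$ when $i>i_0$ (the tie case $h_{A_i}(q)=v^\star$ contributes to $\succ(v^\star,i_0)$ only when $i>i_0$). Since $(v^\star,i_0)$ is the $(2z+1)$-th largest pair, the $2z$ strictly lex-greater pairs are exactly these segments, so $\sum_i N_i = 2z$. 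A case check shows Step~4 returns $z_i=N_i$ for every $i$: for $i\ne i_0$ this is immediate from Step~4(a), and for $i=i_0$ the choice $z_{i_0}=\min\{q\in\Gamma : h_{A_{i_0}}(q)=v^\star\}$ equals $\beta_{i_0}$ because $h_{A_{i_0}}$ is constant between consecutive breakpoints of $\Gamma$ and strictly exceeds $v^\star$ at every breakpoint below the first one where equality is attained. Therefore $\sum_i z_i = 2z$, and $h_{A_i}(z_i)\le v^\star$ holds by the definition of $z_i$ as the first index where the function ceases to exceed the threshold.

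For (b), suppose some feasible $(q_1,\dots,q_s)$ satisfies $\max_i h_{A_i}(q_i)<v^\star$. Monotonicity forces $q_i\ge\alpha_i$ for every $i$, so $\sum_i q_i\ge\sum_i\alpha_i$. Writing $\epsilon_i := |\{q\in[z] : h_{A_i}(q)=v^\star\}|$ so that $\alpha_i=\beta_i+\epsilon_i$, and using the identity $\sum_{i\le i_0}\beta_i+\sum_{i>i_0}\alpha_i=2z$ from part (a), one obtains $\sum_i\alpha_i = 2z+\sum_{i\le i_0}\epsilon_i$. Since $v^\star$ is attained by $h_{A_{i_0}}$ at $q_0$, $\epsilon_{i_0}\ge 1$, forcing $\sum_i q_i\ge 2z+1>2z$, a contradiction. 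The main obstacle I anticipate is the bookkeeping around the lexicographic tie-break combined with the asymmetric rule for the pivot site $i_0$ (whose $z_{i_0}$ is drawn from $\Gamma$ rather than $[z]$); once the bijection between the $2z$ lex-greater pairs and the prefixes $\{0,\dots,z_i-1\}$ is made precise, both feasibility and the contradiction in (b) reduce to short counting arguments.
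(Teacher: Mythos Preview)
Your argument has a genuine gap: it overlooks the clamping rule in Step~4(a), namely ``if the set is $\emptyset$, let $z_i=z$''. This rule fires precisely when some site $i_1\neq i_0$ has $(h_{A_{i_1}}(q),i_1)\succ(v^\star,i_0)$ for \emph{every} $q\in[z]$, i.e., all $z{+}1$ values of $h_{A_{i_1}}$ strictly exceed $v^\star$. In that situation your identification $z_i=N_i$ breaks down, since $N_{i_1}=z{+}1$ while $z_{i_1}=z$; more importantly, $h_{A_{i_1}}(z_{i_1})>v^\star$, so your claim in part~(a) that $\max_i h_{A_i}(z_i)\le v^\star$ is simply false. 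Your part~(b) then only yields the (now slack) lower bound $v^\star$ on the optimal value and says nothing about optimality of $(z_1,\dots,z_s)$.

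This is exactly the paper's Case~(ii), and its resolution requires a different (and easier) optimality argument: once some coordinate is clamped at $z_{i_1}=z$, every feasible $q_{i_1}\le z$ satisfies $h_{A_{i_1}}(q_{i_1})\ge h_{A_{i_1}}(z)$ by monotonicity, so the minimax value is at least $h_{A_{i_1}}(z)=\max_i h_{A_i}(z_i)$, and feasibility of $(z_1,\dots,z_s)$ follows from the same prefix count (now giving $\sum_i z_i\le 2z-1$, not $=2z$). Your counting framework is essentially the same as the paper's Case~(i), but you need to add this second case to close the gap.
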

\begin{proof}
	 We consider the following two cases. Recall that ``$i_0$'' is the index obtained in Step~3. 
	
	\textbf{Case $(\romannumeral1)$: } $h_{A_{i_0}}(z_{i_0})=\max_i h_{A_i}(z_i)$.  In this case, by the definition of $z_i$ and the fact that $h_{A_i}(\cdot)$ is non-increasing, we have $(h_{A_{i_0}}(q_0),i_0)\prec (h_{A_i}(q),i)$ for each $q=0,\dots,z_i-1$ and each  $i=1,\dots, s$. Hence there are $\sum_{i=1}^s z_i$ pairs of $(h_{A_i}(q),i)$s that are larger than $(h_{A_{i_0}}(q_0),i_0)$ in the lexicographical order . Therefore by the definition of $(h_{A_{i_0}}(q_0),i_0)$,  we know that $z_{i_0}\leq q_0$, which implies 
 \begin{eqnarray*}
 \sum_{i=1}^s z_i\leq q_0+\sum_{i\neq i_0}z_i. 
 \end{eqnarray*}
 The right-hand side ``$q_0+\sum_{i\neq i_0}z_i$'' is exactly equal to $2z$ since it is the number of items ranked ahead of $(h_{A_{i_0}}\left(q_0\right), i_0)$ in the sorted sequence in Step 3 of Algorithm \ref{alg-distributed_coreset}. Suppose Lemma~\ref{lem-guess} is not true, then there exists another solution $\{z'_1, \cdots, z'_s\}$ of the problem  (\ref{eq:minimax}) such that 
	\begin{eqnarray}
	\label{eq:contradiction}
	\max_i h_{A_i}(z'_i)< h_{A_{i_0}}(z_{i_0}).
	\end{eqnarray}
	The right-hand side  ``$h_{A_{i_0}}(z_{i_0})$'' is equal to $h_{A_{i_0}}(q_0)$ by the definition of $z_{i_0}$. So it implies $h_{A_{i_0}}(z'_{i_0})<h_{A_{i_0}}(q_0)$; since $h_{A_{i_0}}(\cdot)$ is non-increasing, we know $z'_{i_0}>q_0$. Without loss of generality, we assume $\sum_iz'_i=2z$ (again, because $h_{A_i}(\cdot)$ is non-increasing, we can always enlarge the $z'_i$s until $\sum_iz'_i=2z$). Note that $q_0+\sum_{i\neq i_0}z_i= 2 z$.  So there should exist an index $ j\neq i_0$, such that $z'_j<z_j$. By the definition of $z_j$, we have $(h_{A_{i_0}}(q_0),i_0)\prec(h_{A_j}(z'_j), j)$. Therefore $h_{A_j}(z'_j)\geq h_{A_{i_0}}(q_0) = h_{A_{i_0}}(z_{i_0})$.  Thus $\max_i h_{A_i}(z'_i)\geq h_{A_j}(z_j')\geq h_{A_{i_0}}(z_{i_0})$, which is contradictory to (\ref{eq:contradiction}).
	
	\textbf{Case $(\romannumeral2)$:} suppose $h_{A_{i_1}}(z_{i_1})=\max_i h_{A_i}(z_i)$  and $h_{A_{i_1}}(z_{i_1})> h_{A_{i_0}}(z_{i_0})$. In this case we have $z_{i_1}=z$ in Step 4(a).
	Similar to the analysis for case $(\romannumeral1)$, we have $(h_{A_{i_0}}(q_0),i_0)\prec (h_{A_i}(q),i)$ for $q=0,\dots,z_i-1$, $i\neq i_0$, and meanwhile  $(h_{A_{i_0}}(q_0),i_0)\prec (h_{A_{i_1}}(q),i_1)$ for $q=0,\dots,z_{i_1}$. Hence similarly we have $1+\sum_{i=1}^sz_i\leq q_0+1+\sum_{i\neq i_0} z_i=2z$. For any feasible solution $\{z'_1,\dots,z'_s\}$, since $z'_{i_1}\leq z=z_{i_1}$ and $h_{A_{i_1}}(\cdot)$ is non-increasing, we have 
 \begin{eqnarray*}
 \max_i h_{A_i}(z'_i)\geq h_{A_{i_1}}(z'_{i_1})\geq h_{A_{i_1}}(z_{i_1})=\max_i h_{A_i}(z_i), 
 \end{eqnarray*}
 which implies $\{z_1,\dots,z_s\}$ is better than the solution $\{z'_1,\dots,z'_s\}$. So $\{z_1,\dots,z_s\}$ should be the optimal solution of the problem (\ref{eq:minimax}).
\end{proof}

\begin{proof}\textbf{(of Theorem~\ref{the-distributed})}
	We define $\hat z_i=\min\{q\in \Gamma\colon q\geq z_i^*\}$ for $i=1,2, \cdots, s$. 
	 Then we directly have $\max_i ~r_{\mathtt{opt}}(X_i,k,z_i^*)\geq \max_i ~r_{\mathtt{opt}}(X_i,k,\hat z_i)$ since $\hat z_i\geq z_i^*$. 
  Because ``$r_{i,\hat z_i}$'' is the radius of the coreset in Step 2, we have 
	\begin{eqnarray*}
	\tilde r_{i,\hat z_i}\leq\mu r_{\mathtt{opt}}(X_i,k,\hat z_i).
	\end{eqnarray*}
	Also we   know $2r_{\mathtt{opt}} \geq \max_i ~r_{\mathtt{opt}}(X_i,k,z_i^*)$ from  Lemma \ref{cl-distributed}.	So we have  $2r_{\mathtt{opt}} \geq \frac{1}{\mu}\max_i \tilde r_{i,\hat z_i}$. Note that $h_{A_i}(\hat z_i)=\tilde r_{i,\hat z_i}$ as $\hat z_i\in\Gamma$. Hence we have 
	\begin{eqnarray}
	\label{eq:dis-1}
	2r_{\mathtt{opt}} \geq \frac{1}{\mu} \max_i h_{A_i}(\hat z_i).
	\end{eqnarray}
	
	The definitions of $\hat z_i$ and $\Gamma$ together imply that $\sum_i\hat z_i\leq \sum_i 2z^*_i=2z$. Therefore the set $\{\hat z_1,\dots,\hat z_s\}$ is a feasible solution for the problem (\ref{eq:minimax}). By Lemma \ref{lem-guess}, we have
	
	\begin{eqnarray}
	\label{eq:dis-2}
	\max_i h_{A_i}(\hat z_i)\geq \max_i h_{A_i}(z_i).
	\end{eqnarray}
	
	For each $i\not=i_0$, we know $z_i\in\Gamma$ by the definition of the piecewise function $h_{A_i}(\cdot)$. By Step 4(b) of Algorithm~\ref{alg-distributed_coreset}, we know $z_{i_0}\in\Gamma$. Thus we have 
	\begin{eqnarray}
	\label{eq:dis-3}
	h_{A_i}(z_i)=\tilde r_{i,z_i}=\phi_1(X_i,E_{i,z_i}), ~i=1,\dots,s.
	\end{eqnarray}
	
	Combining the inequalities  (\ref{eq:dis-1}), (\ref{eq:dis-2}),  and  (\ref{eq:dis-3}), we have 
	\begin{eqnarray*}
	\max_i \phi_1(X_i,E_{i,z_i})\leq 2\mu \cdot r_{\mathtt{opt}}.
	\end{eqnarray*}

Similarly to the inequality (\ref{for-map}) in the proof of Theorem \ref{the-coreset}, we can define the  bijective mapping ``$f$''  from $X$ to $E$ (recall that $E=\cup_i^s E_{i,z_i}$), such that
\begin{align}
	\mathtt{d}(p,f(p))\leq 2\mu \cdot r_{\mathtt{opt}}, ~\forall p\in X.\label{ineq-distributed-coreset}
\end{align}
The above inequality (\ref{ineq-distributed-coreset}) implies that the set $E$ is a qualified $2\mu$-coreset.

The success probability is at least $1-s(2+\log_2z)2\eta$ since each site runs  Algorithm~\ref{alg-coreset} no more than $(2+\log_2 z)$ times.  Since $\sum_i^sz_i\leq2z$, the total number of points sent from the sites to the central server is no larger than 
  $4z+ O((\frac{2}{\mu})^{\rho}s(k+\ln\frac{1}{2\eta})\ln\frac{1}{2\eta})$. So we obtain the communication complexity $\Big(4z+O((\frac{2}{\mu})^{\rho}s(k+\ln\frac{1}{2\eta})\ln\frac{1}{2\eta})\Big)B$. The running time in each site $i$ is $ O(|\Gamma|(\frac{2}{\mu})^{\rho}(k+\ln\frac{1}{2\eta})n_i\ln\frac{1}{2\eta})=O((\frac{2}{\mu})^{\rho}(k+\ln\frac{1}{2\eta})n_i\ln\frac{1}{2\eta}\log_{2}z)$. 
\end{proof}

\section{Experiments}
\label{sec-exp}
All the experiments were conducted on an Ubuntu workstation with 2.40GHz Intel(R) Xeon(R) CPU E5-2680 and 256GB main memory. The algorithms were implemented in MATLAB R2019b. Our code is available at \url{https://github.com/OpsTreadstone/randomized-k-center}.

\textbf{Baselines.} We compare our algorithms with two well known baselines, ``\textsc{CKM+}''~\citep{charikar2001algorithms} and ``\textsc{MK}''~\citep{mccutchen2008streaming}, as well as the recently proposed algorithm  ``\textsc{BVX}''~\citep{NEURIPS2019_73983c01}. 
For the coreset construction problem, we compare our algorithm with ``\textsc{CPP}''~\citep{DBLP:journals/corr/abs-1802-09205} and the uniform sampling method  ``\textsc{Uniform}''.

For the  distributed setting, we take  ``\textsc{CPP}'', ``\textsc{MKC+}''~\citep{malkomes2015fast}, ``\textsc{GLZ}''~\citep{guha2017distributed}, and ``\textsc{LG}''~\citep{li2018distributed} as the baselines. 

All the experiments were repeated 10 times and we report the average results with the standard deviations.

\textbf{Data sets.} We evaluate our algorithms on four real-world classification data sets from the UCI KDD archive~\citep{Dua:2019}: Shuttle, Covertype, KDD Cup 1999 and Poker Hand.
The Shuttle data set~\citep{king1995statlog} contains $43,500$ instances of 7 classes with 9 numerical attributes. 
The Covertype data set contains $581,012$ instances of 7 classes. It has 54 attributes of continuous and categorical types.
The KDD Cup 1999 data set contains $4,898,431$ instances of 23 classes with 41 attributes.
The Poker Hand data set contains $1,025,010$ instances of 10 classes with 10 attributes.
For each of the latter three data sets Covertype, KDD Cup 1999, and Poker Hand, we randomly select $100,000$ instances  and run the algorithms  on the selected instances.

To generate the outliers, for each data set  we compute the minimum enclosing ball of the whole data set by using the algorithm of \citet{badoiu2003smaller}; let $r_{\mathtt{meb}}$ and $c_{\mathtt{meb}}$ be the radius  and the center, respectively. Then  we randomly add $1\%$ points as the outliers inside the ball of radius $1.1 \times r_{\mathtt{meb}}$ centered at $c_{\mathtt{meb}}$.

\begin{figure}[tbp]
	\centering
	\includegraphics[width=\textwidth]{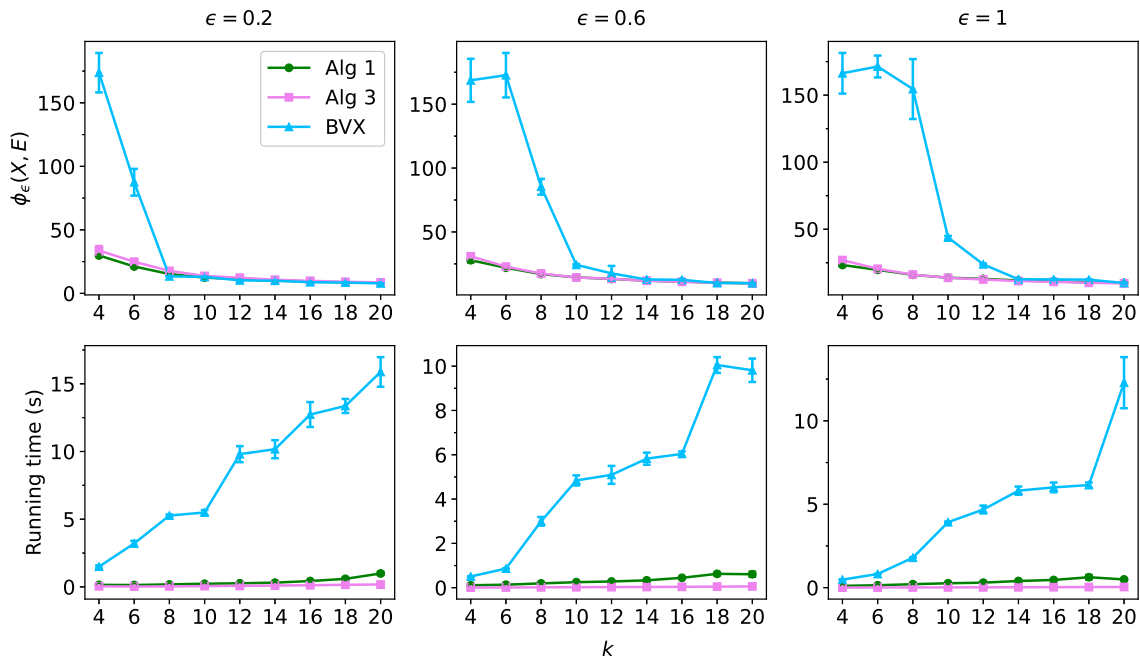}
	\caption{The performance of Algorithm~\ref{alg-bi} and Algorithm~\ref{alg-sub-bi} on Shuttle. 
 }
	\label{fig-alg1_alg3-shuttle}
\end{figure}

\begin{figure}[tbp]
	\centering
	\includegraphics[width=\textwidth]{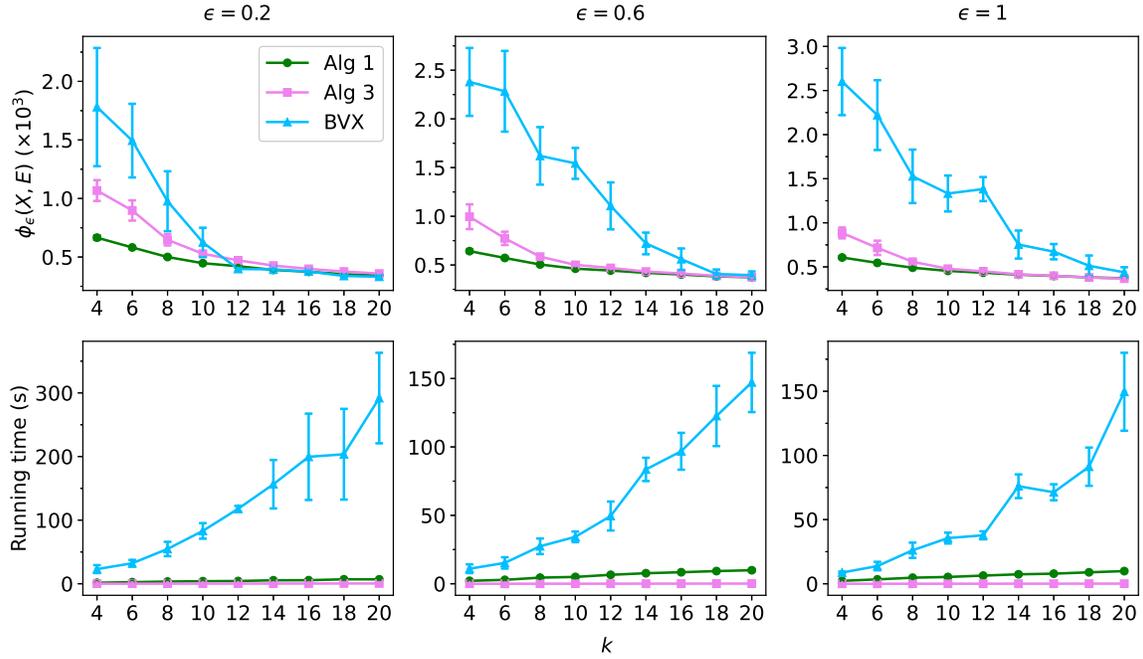}
	\caption{The performance of Algorithm~\ref{alg-bi} and Algorithm~\ref{alg-sub-bi} on Covertype.
 }
	\label{fig-alg1_alg3-tiny_covertype}
\end{figure}

\begin{figure}[tbp]
	\centering
	\includegraphics[width=\textwidth]{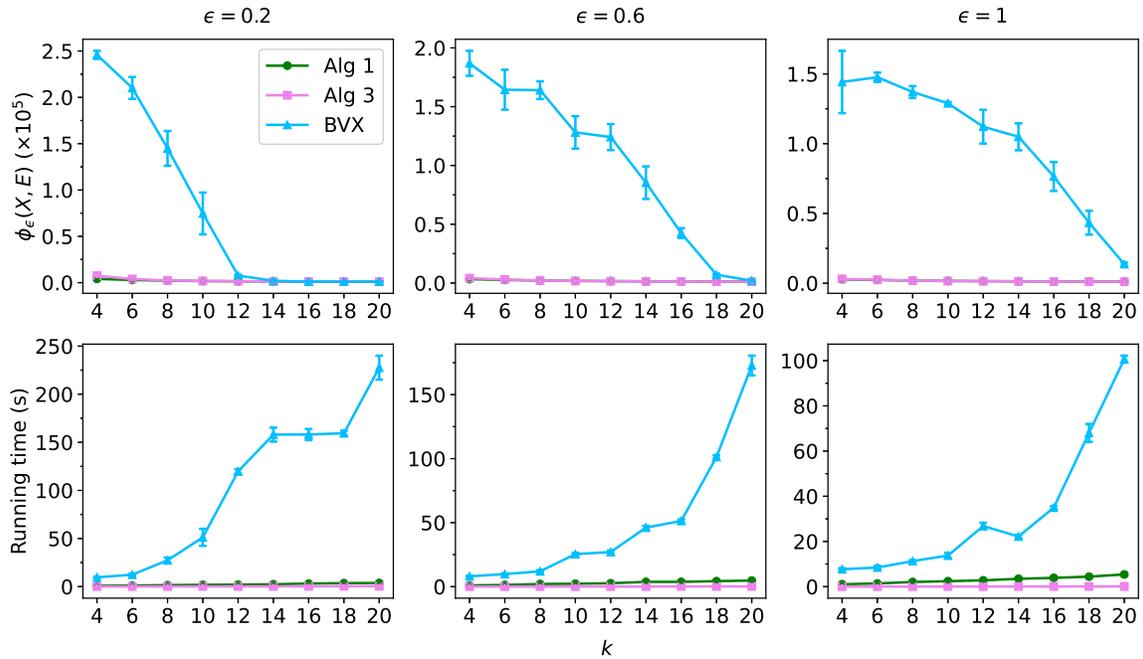}
	\caption{The performance of Algorithm~\ref{alg-bi} and Algorithm~\ref{alg-sub-bi} on KDD Cup 1999.
 }
	\label{fig-alg1_alg3-tiny_kddcup99}
\end{figure}

\begin{figure}[tbp]
	\centering
	\includegraphics[width=\textwidth]{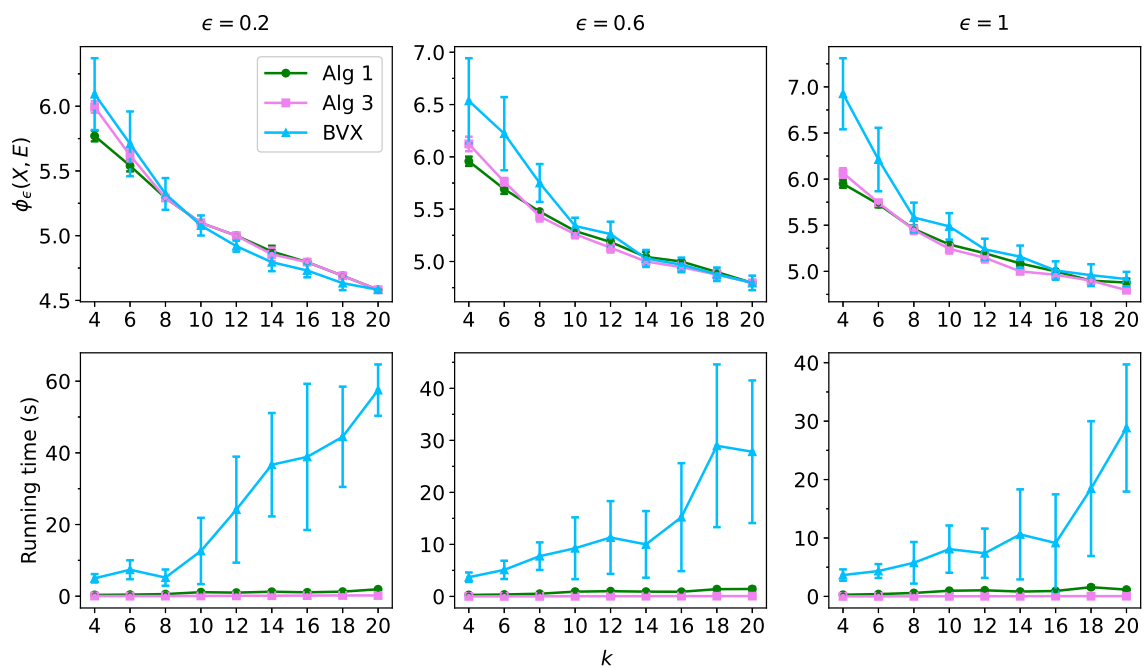}
	\caption{The performance of Algorithm~\ref{alg-bi} and Algorithm~\ref{alg-sub-bi} on Poker Hand. 
 }
	\label{fig-alg1_alg3-tiny_pokerhand}
\end{figure}

\begin{figure}[tbp]
	\centering
	\includegraphics[width=\textwidth]{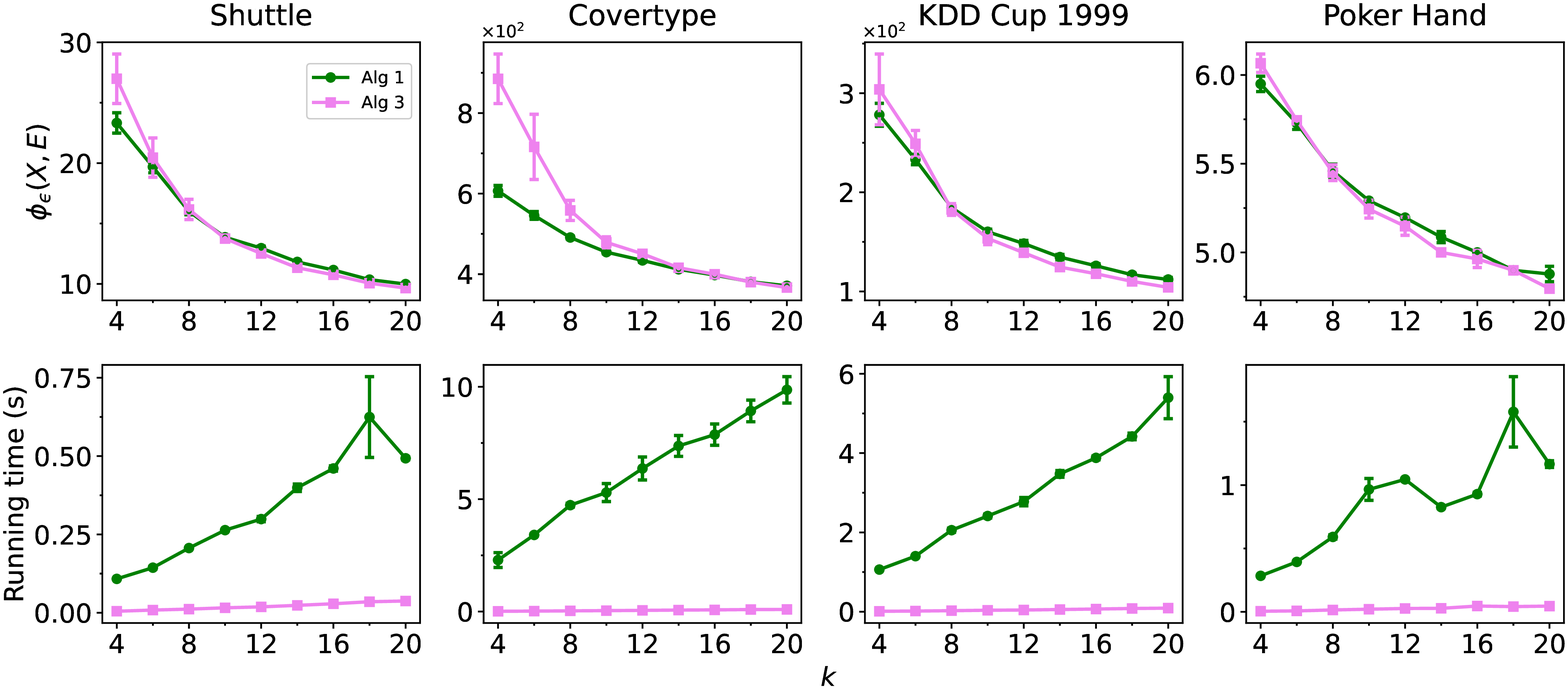}
	\caption{The comparison between Algorithm~\ref{alg-bi} and Algorithm~\ref{alg-sub-bi}.
 }
	\label{fig-alg1_alg3-only}
\end{figure}

\subsection{The Bi-criteria Algorithms}
\label{sec-expbi}
We compare Algorithm~\ref{alg-bi} and its sublinear version Algorithm~\ref{alg-sub-bi} with \textsc{BVX}. For Algorithm~\ref{alg-bi}, we set $\epsilon=0.2,0.6,1$, and modify the parameters of Algorithm~\ref{alg-sub-bi} and \textsc{BVX} accordingly so that they can output the same number of centers. We vary $k$ from $4$ to $20$. 
The experimental results are shown in Figure~\ref{fig-alg1_alg3-shuttle}, Figure~\ref{fig-alg1_alg3-tiny_covertype}, Figure~\ref{fig-alg1_alg3-tiny_kddcup99}, and Figure~\ref{fig-alg1_alg3-tiny_pokerhand}.
Comparing with \textsc{BVX}, Algorithm~\ref{alg-bi} and Algorithm~\ref{alg-sub-bi} take significantly lower running time, and meanwhile achieve similar or lower clustering cost $\phi_\epsilon (X,E)$. 

To have a more clear  comparison between Algorithm~\ref{alg-bi} and Algorithm~\ref{alg-sub-bi}, we zoom in on the experimental results of 
$\epsilon=1$ without \textsc{BVX} (see  Figure~\ref{fig-alg1_alg3-only}).
We can see that the running time of Algorithm~\ref{alg-sub-bi} grows much slower than Algorithm~\ref{alg-bi} as $k$ increases. This result also agrees with our theoretical analysis  since Algorithm~\ref{alg-sub-bi} has only sublinear time complexity.

We also compare Algorithm~\ref{alg-single} with \textsc{CKM+}, \textsc{MK}, and \textsc{BVX} for small $k$. We let $k=2$, $3$, $4$, $5$. 
For Algorithm~\ref{alg-single}, we set $\epsilon=1$ and run it $\frac{\ln 10}{1-\gamma}(\frac{1+\epsilon}{\epsilon})^{k-1}$ times as Corollary~\ref{the-kcenter2} suggests.
The experimental results  are shown in Figure~\ref{fig-alg2}. 
In general, Algorithm~\ref{alg-single} achieves comparable clustering cost with \textsc{CKM+} and \textsc{MK}, but runs faster than these two baselines. 
\textsc{BVX} is faster but has worse clustering cost than 
Algorithm~\ref{alg-single}.

\begin{figure}[!ht]
	\centering
	\includegraphics[width=\textwidth]{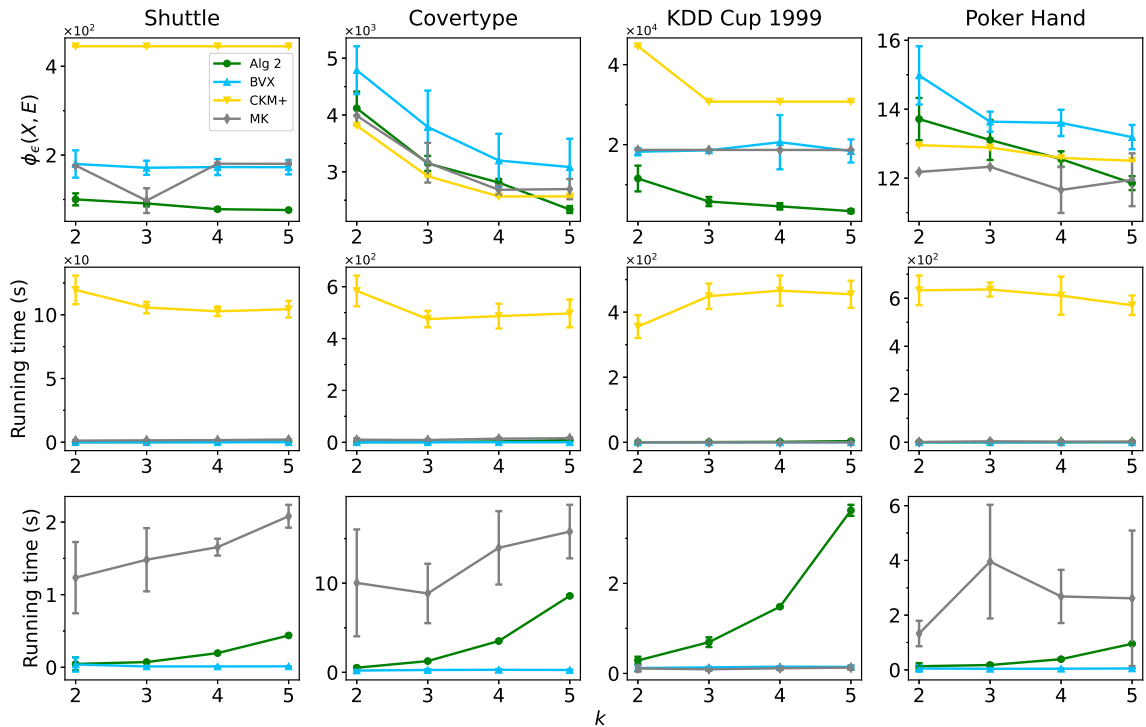}
    \captionsetup{format=hang}
	\caption{The performance of Algorithm~\ref{alg-single}. 
 The third row removes \textsc{CKM+} to have a more clear illustration on the running times of the other three algorithms.}
	\label{fig-alg2}
\end{figure}

\subsection{The Coreset Algorithms }
\label{sec-expcoreset}

We compare Algorithm~\ref{alg-coreset2} with the coreset methods \textsc{CPP}  and \textsc{Uniform}. We set the sizes of coreset to be $\{4\%n,8\%n,12\%n,16\%n,20\%n\}$ for these three methods, where $n$ is  the number of points (including the outliers). 
We run the algorithm \textsc{Cluster} proposed by \citet{malkomes2015fast}, which is a modification of \textsc{CKM+}, as the ``host'' algorithm on the obtained coresets constructed by Algorithm~\ref{alg-coreset2} and \textsc{Uniform}.  
 We let $\text{RT}_{\mathtt{coreset}}$ denote the   coreset construction time,  and let $\text{RT}_{\mathtt{total}}$ denote the total running time (including the coreset construction time and the time for running the $k$-center with outliers algorithm on the coreset). 
 To study the advantage of coreset, we also  
  compare with \textsc{CKM+} and \textsc{MK};  we directly run these two algorithms on the whole data sets (without coreset) to  compute the clustering results.  

 The experimental results are shown in Figure~\ref{fig-alg5}. Note that we illustrate the clustering cost $\phi_0 (X, E)$ (not $\phi_\epsilon (X, E)$) in the first row of Figure~\ref{fig-alg5} (and also Figure~\ref{fig-alg6} in Section~\ref{sec-expdistributed}), that is, we discard exactly $z$ outliers rather than $(1+\epsilon)z$. 
\textsc{Uniform} is always the fastest coreset method since it is only simple uniform sampling and does not need any construction procedure; but its clustering cost is worse than Algorithm~\ref{alg-coreset2} and \textsc{CPP} for most cases. Both of Algorithm~\ref{alg-coreset2} and \textsc{CPP} achieve lower clustering cost than \textsc{CKM+} and \textsc{MK}. 
Comparing with \textsc{CPP}, Algorithm~\ref{alg-coreset2} has lower clustering cost on Covertype and Poker Hand; Algorithm~\ref{alg-coreset2} also   has lower $\text{RT}_{\mathtt{coreset}}$ and $\text{RT}_{\mathtt{total}}$.
The experimental results suggest that Algorithm~\ref{alg-coreset2} can yield significant reduction on the running time (if setting the coreset size $\leq 12\%$) and achieve good clustering quality as well.

\begin{figure}[tbp]
	\centering
	\includegraphics[width=\textwidth]{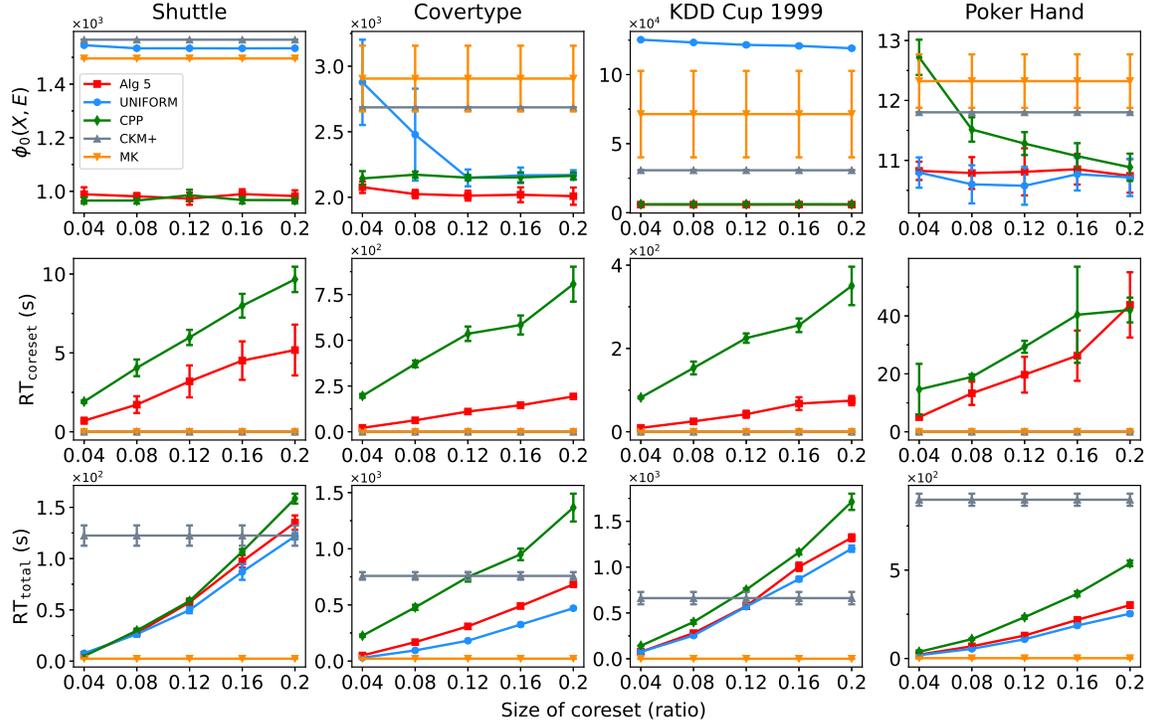}
	\caption{The performance of the coreset method Algorithm~\ref{alg-coreset2}.}
	\label{fig-alg5}
\end{figure}

\begin{figure}[tbp]
	\centering
	\includegraphics[width=\textwidth]{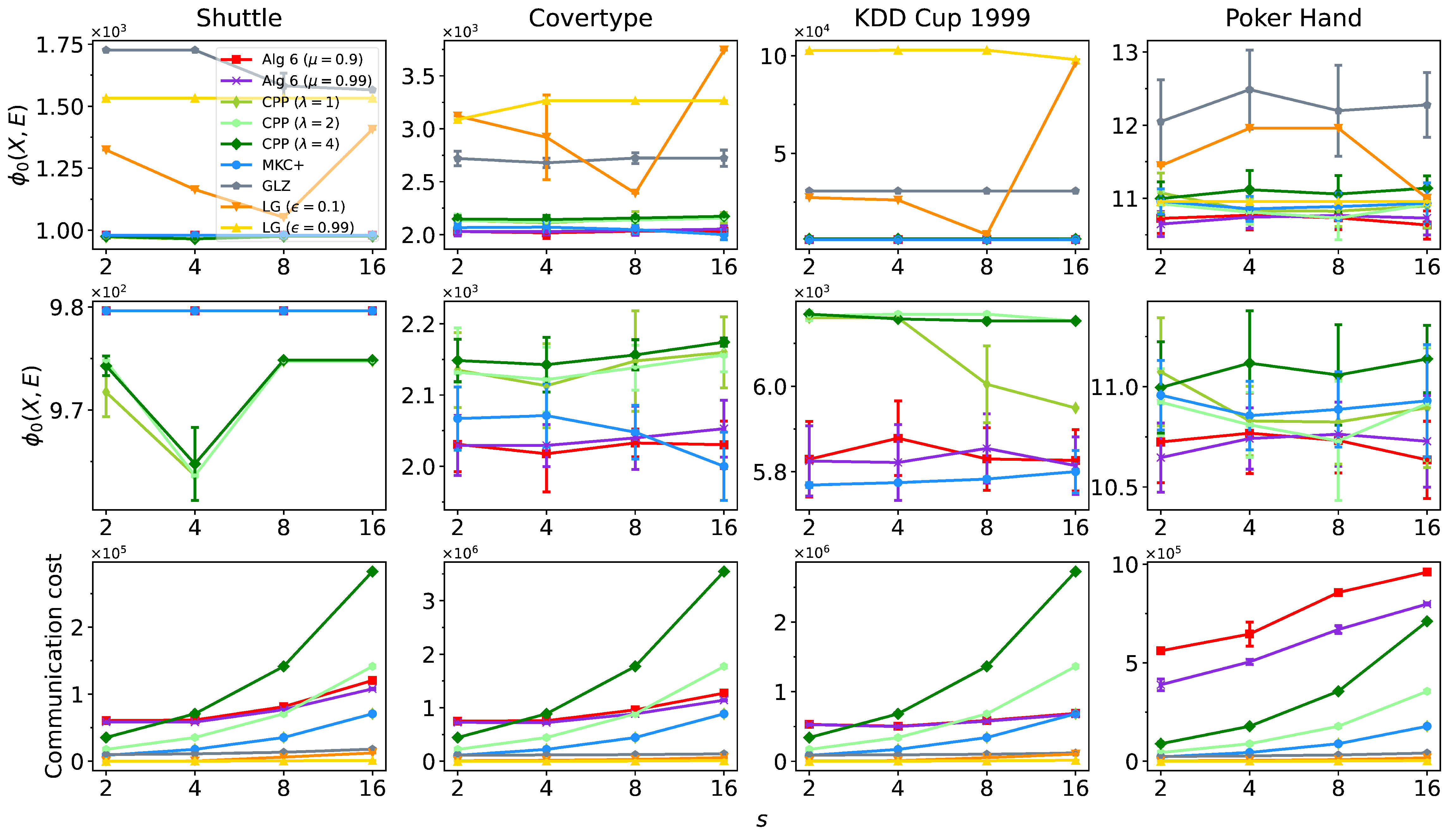}
    \captionsetup{format=hang}
	\caption{The performance of Algorithm~\ref{alg-distributed_coreset}. In the second row, we remove \textsc{GLZ} and \textsc{LG} (since they have much higher clustering costs than the others) and   zoom in on the comparison of other algorithms. 
 }
	\label{fig-alg6}
\end{figure}

\subsection{The Distributed Algorithm}
\label{sec-expdistributed}

We compare Algorithm~\ref{alg-distributed_coreset} with \textsc{CPP}, \textsc{MKC+}, \textsc{GLZ} and \textsc{LG} with varying the number of sites $s$. 
For Algorithm~\ref{alg-distributed_coreset}, in Step 2 we run Algorithm~\ref{alg-coreset2} instead of Algorithm~\ref{alg-coreset} since the doubling dimensions of the four data sets are unknown.  
Similar with Section~\ref{sec-expcoreset}, we also run the \textsc{Cluster} algorithm  on the coresets constructed by Algorithm~\ref{alg-distributed_coreset}. 
For \textsc{CPP}, following the setting of \citet{DBLP:journals/corr/abs-1802-09205}, each site sends a coreset of size $\lambda(k+z)$ to the central server with  $\lambda=1,2,4$. 
\textsc{LG} returns a $(k,z)_\epsilon$-center solution  and we set $\epsilon=0.1,0.99$ in the algorithm as suggested in their paper \citep{li2018distributed}.

The experimental results of clustering cost and communication cost on the four data sets are shown in Figure~\ref{fig-alg6}. 
The communication cost is measured by the total number of floating numbers sent between the sites and the central server. \textsc{GLZ} and \textsc{LG} have  lower communication costs, but yield much higher clustering costs. 
Algorithm~\ref{alg-distributed_coreset} can achieve quite low clustering cost, but takes higher communication cost comparing with \textsc{GLZ} and \textsc{LG}.

\section{Future Work}
 Following our work, several interesting problems deserve to be studied in future. For example, can the coreset construction time of Algorithm~\ref{alg-coreset} be improved, like the fast net construction method proposed by~\cite{har2006fast} in doubling metrics? 
 In theory, it is interesting to study other optimization problems involving outliers by using greedy strategy. 
Also, if we replace $k$-center clustering by $k$-center clustering with outliers, it may be possible to improve the robustness for the applications in  deep learning~\citep{DBLP:conf/iclr/ColemanYMMBLLZ20}, active learning~\citep{DBLP:conf/iclr/SenerS18}, and fairness~\citep{DBLP:conf/icml/KleindessnerAM19}.

\appendix
\section{Proof of Claim \ref{pro-core}}
\label{sec-proof-c1}

Suppose $H$ is an $\alpha$-approximation of the instance (coreset) $S$. Let $H_{\mathtt{opt}}$ be the set of $k$ cluster centers yielding the optimal solution of $X$. Then we have 
\begin{eqnarray}
\phi_{0}(S, H)&\leq&\alpha \phi_{0}(S, H_{\mathtt{opt}});  \label{for-pro13}\\
\phi_0 (S,H)&\in& (1\pm\mu)\phi_{0}(X, H); \label{for-pro11}\\
\phi_0 (S,H_{\mathtt{opt}})&\in& (1\pm\mu)\phi_{0}(X, H_{\mathtt{opt}}). \label{for-pro12}
\end{eqnarray}
Combining the above inequalities, we directly have
\begin{eqnarray}
\phi_{0}(X, H)\leq \frac{1}{1-\mu}\phi_0 (S,H)\leq \frac{\alpha}{1-\mu}\phi_0 (S,H_{\mathtt{opt}})\leq \frac{\alpha (1+\mu)}{1-\mu}\phi_0 (X,H_{\mathtt{opt}}).
\end{eqnarray}
So $H$ is an $\frac{\alpha (1+\mu)}{1-\mu}$-approximation of $X$.

\section{Proof of Claim~\ref{cla-core}}
\label{sec-proof-cla-core}
We just need to prove the first inequality since the other one can be obtained by the same manner. 
Because each $B_j\subseteq \mathtt{Ball}(c_j, r_X)$ and each vertex $p$ is moved by a distance at most $\mu r_{\mathtt{opt}}$ based on (\ref{for-map}), we know that $f(B_j)\subseteq \mathtt{Ball}(c_j, r_X+\mu r_{\mathtt{opt}})$, i.e., $r'_E\leq r_X+\mu r_{\mathtt{opt}}$. 

Let $p_0$ be the vertex realizing $r_X=\phi_0(X, H)$, that is, there exists some $1\leq j_0\leq k$ such that $\mathtt{d}(c_{j_0},p_0)=r_X$.  The triangle inequality and (\ref{for-map}) together imply $\mathtt{d}(c_{j_0},f(p_0))\geq r_X-\mu r_{\mathtt{opt}}$. Hence $r'_E\geq r_X-\mu r_{\mathtt{opt}}$.

Overall, we have $|r'_E-r_X|\leq \mu r_{\mathtt{opt}}$.

\vskip 0.2in
\bibliography{randomized_gonzalez_jmlr}

\end{document}